\documentclass[10pt]{article}

\usepackage[preprint]{tmlr}

\usepackage{float}
\usepackage{tikz}
\usetikzlibrary{arrows.meta,positioning,calc,fit,shapes.geometric}

\usepackage{amsmath,amssymb,amsfonts,amsthm,mathtools,bm}
\usepackage{booktabs}
\usepackage{enumitem}
\usepackage{graphicx}
\usepackage{microtype}
\usetikzlibrary{arrows.meta,positioning,calc,fit,shapes.geometric}
\usepackage{hyperref}
\usepackage{url}
\usepackage[nameinlink,capitalise]{cleveref}
\hypersetup{
    colorlinks=true,
    linkcolor=blue,
    citecolor=blue,
    urlcolor=blue
}

\setlist{nosep,leftmargin=1.5em}

\providecommand{\E}{\mathbb{E}}
\providecommand{\KL}{\mathrm{KL}}
\providecommand{\MI}{\mathrm{I}}
\providecommand{\N}{\mathcal{N}}
\newcommand{\sg}{\operatorname{sg}}
\newcommand{\softplus}{\operatorname{softplus}}
\newcommand{\Cat}{\operatorname{Cat}}

\newcommand{\cL}{\mathcal{L}}
\newcommand{\cR}{\mathcal{R}}

\newtheorem{proposition}{Proposition}

\theoremstyle{definition}

\theoremstyle{remark}

\title{
MDN-VJEPA: Multimodal Probabilistic Joint-Embedding World Models
}

\author{
\name Yongchao Huang
\email{yongchao.huang@abdn.ac.uk} \\
\addr Department of Computing Science \\
University of Aberdeen
}

\def\month{08}
\def\year{2026}
\def\openreview{}

\begin{document}

\maketitle

\begin{abstract}
Joint-Embedding Predictive Architectures (JEPAs) learn world models by predicting future representations rather than reconstructing observations. Variational JEPA (VJEPA) gives this prediction probabilistic semantics and permits flexible conditional predictive families. We propose MDN-VJEPA, a multimodal probabilistic JEPA that instantiates the latent predictor as a mixture density network (MDN), representing future states with a Gaussian mixture whose components encode alternative futures, their probabilities, and within-mode uncertainty. For sequential prediction, we address two additional structural requirements: multi-modal futures should remain temporally coherent across a trajectory, and predictive beliefs should remain consistent when propagated to the same horizon through different temporal path compositions, as motivated by the Chapman--Kolmogorov equation. Together, these considerations define a three-level hierarchy of multimodal prediction, coherent multimodal trajectories, and compositionally consistent multimodal dynamics. Controlled experiments show that MDN-VJEPA closely recovers known branching predictive laws, that retaining the complete multimodal belief rather than its conditional mean can lead to materially different and substantially better planning decisions under branching futures, and that trajectory-level coupling and distributional consistency improve temporal coherence and multi-horizon agreement. These results support a simple principle: a probabilistic JEPA should preserve not only how uncertain the future is, but which distinct futures constitute that uncertainty, how they form coherent trajectories, and how those beliefs compose through time.
\end{abstract}

\tableofcontents
\newpage

\section{Introduction}

Joint-Embedding Predictive Architectures (JEPAs) learn representations by predicting the embedding of missing or future information from available context, avoiding the need to reconstruct all observation-level details \citep{lecun2022path,assran2023ijepa,bardes2024vjepa}. This makes JEPA attractive as a world-modeling principle: the model can concentrate capacity on predictable structure useful for understanding, prediction, and planning rather than on every pixel-level nuisance. Recent video and action-conditioned systems have strengthened this connection to physical planning \citep{assran2025vjepa2,terver2026jepawm}.\footnote{We use \emph{V-JEPA} for Video JEPA \citep{bardes2024vjepa} and \emph{VJEPA} for Variational JEPA \citep{huang2026vjepa}.}

Standard JEPA predictors are typically point-valued. When the same observed history can lead to multiple valid futures, squared-error prediction recovers their conditional mean \citep{huang2026vjepa}, which need not itself correspond to a realizable future. Variational JEPA (VJEPA) addresses this limitation by replacing point prediction with an explicit conditional distribution over future latent states \citep{huang2026vjepa}, while PIB-VJEPA retains this general probabilistic formulation and further incorporates a full predictive information bottleneck \citep{huang2026ibvjepa}. Neither VJEPA nor PIB-VJEPA is intrinsically restricted to a particular predictive distribution family.\footnote{VJEPA is not restricted to a Gaussian predictive family and explicitly identifies Gaussian-mixture predictive heads as a natural extension for multimodal futures \citep{huang2026vjepa}.} A particularly simple and direct realization is \emph{Gaussian VJEPA}, in which the predictor parameterizes a Gaussian distribution over future latent states. This formulation represents uncertainty around a single predictive center and therefore remains unimodal. When the future bifurcates into several well-separated alternatives, increasing the Gaussian covariance can express greater overall dispersion, but cannot represent that predictive probability mass is concentrated around multiple distinct futures. This motivates MDN-VJEPA, which instantiates the general probabilistic JEPA predictor as a Gaussian mixture whose parameters are inferred by a neural network.

\paragraph{A motivating robot-navigation problem.}
Consider a robot approaching an obstacle from the same observed context, as shown in \cref{fig:robot-motivation}. Because of hidden intent, interaction noise, or an unobserved variable, the future may branch: the robot may pass on the left or on the right. A deterministic predictor can average these valid futures into a single latent prediction; in observation space, the analogous mean route may pass directly through the obstacle and therefore correspond to no realizable future. A single Gaussian predictor can broaden the predictive belief around its conditional mean, but the resulting density remains unimodal and may place substantial probability mass between the valid branches. By contrast, MDN-VJEPA can represent several explicit alternatives, each equipped with both a probability and a local uncertainty region.

\begin{figure}[t]
\centering
\begin{tikzpicture}[
    x=0.92cm,y=0.92cm,
    arr/.style={-{Latex[length=1.8mm]},thick},
    darr/.style={-{Latex[length=1.8mm]},thick,dashed},
    prob/.style={draw,rounded corners=2pt,minimum width=1.45cm,minimum height=0.52cm,inner sep=1.5pt,font=\small},
    txt/.style={font=\small,align=center},
    title/.style={font=\small\bfseries,align=center}
]

\node[draw,rounded corners,minimum width=1.9cm,minimum height=0.95cm,font=\small] (obs) at (0,0.45) {obstacle};
\node[circle,draw,minimum size=0.48cm,font=\small] (robot) at (0,-1.85) {Robot};
\node[txt] at (0,-2.65) {same observed context};

\draw[arr] (robot.north west) .. controls (-0.85,-1.1) and (-1.45,-0.15) .. (-1.75,1.45);
\draw[arr] (robot.north east) .. controls (0.85,-1.1) and (1.45,-0.15) .. (1.75,1.45);
\draw[darr] (robot.north) -- (0,-0.08);

\node[prob] (leftbox) at (-1.95,2.15) {$\pi_1=0.45$};
\node[prob] (rightbox) at (1.95,2.15) {$\pi_2=0.55$};

\node[txt] at (-1.95,2.88) {$\mu_1,\Sigma_1$\\left-route belief};
\node[txt] at (1.95,2.88) {$\mu_2,\Sigma_2$\\right-route belief};

\node[txt] at (0,-0.55) {invalid mean\\prediction};

\node[title] at (0,3.58) {MDN-VJEPA: distinct prediction modes remain distinct};

\end{tikzpicture}
\caption{An motivating branching prediction problem. A point predictor may average two valid futures into an invalid route, while a single Gaussian can represent them only through a broad unimodal belief that places probability mass between the branches. A mixture predictor can instead preserve separate alternatives and quantify both their probabilities and within-mode uncertainty. The same principle applies in latent space rather than only in observation space.}
\label{fig:robot-motivation}
\end{figure}

This motivates a stronger principle than uncertainty estimation alone: \textit{a probabilistic world model should preserve distinct plausible futures rather than average them together}. It should represent not only \emph{how uncertain} the future is, but also \emph{which alternatives} constitute that uncertainty. We therefore propose MDN-VJEPA, which instantiates the VJEPA predictor as a neural Gaussian-mixture transition. For prediction horizon $h$,
\begin{equation}
p_\phi
\left(
Z_{t+h}\mid Z_t,\xi
\right)
=
\sum_{k=1}^{K}
\pi_{\phi,k}(Z_t,\xi)\,
\N\left(
Z_{t+h};
\mu_{\phi,k,h}(Z_t,\xi),
\Sigma_{\phi,k,h}(Z_t,\xi)
\right),
\label{eq:intro-mdn} \tag{\ref{eq:level1-marginal}}
\end{equation}
where $\xi$ may encode elapsed time, action sequences, target positions, goals, or other side information such as physical constraints. The mixture exposes three complementary quantities: the component mean $\mu_{k,h}$ specifies where future hypothesis $k$ lies, the covariance $\Sigma_{k,h}$ quantifies uncertainty within that hypothesis, and the weight $\pi_k$ specifies the predictive probability assigned to it.

\paragraph{Why multimodality matters.}
Retaining the complete predictive mixture provides capabilities unavailable to a point or single-Gaussian prediction. It preserves separated alternatives rather than collapsing them into a conditional average, distinguishes uncertainty within a future from uncertainty over which future occurs, and provides explicitly weighted hypotheses for sampling and decision making. This distinction can be consequential in control: evaluating a cost at the expected future is generally not equivalent to averaging that cost over the possible futures. A rare but important failure branch, for example, may disappear in a mean prediction while remaining explicit in a multimodal predictive belief. This downstream distinction is a central reason to model the structure of predictive uncertainty rather than only a point forecast or scalar measure of confidence.

For video and multi-step prediction, however, representing multiple plausible futures is not enough: each predicted future must also remain \textit{temporally coherent}. If a separate mixture is modeled at each future horizon, sampling component identities independently can splice together incompatible alternatives---for example, a ``left'' future at one frame, ``right'' at the next, and ``left'' again thereafter. Accurate multimodal marginals therefore need not define an accurate joint trajectory distribution. We address this by introducing a \emph{trajectory-level mode}: a categorical variable $M$ is selected once for the prediction horizon and governs the corresponding future trajectory. Concretely,
\begin{equation}
M \sim \Cat(\pi_1,\ldots,\pi_K),
\qquad
M=k
 \Longrightarrow 
(\mu_{k,1},\mu_{k,2},\ldots,\mu_{k,H}),
\label{eq:shared-mode-intro} \tag{Eq.\ref{eq:traj-mode} \& Eq.\ref{eq:same-k-trajectory}}
\end{equation}
so that the same mixture component $k$ parameterizes the entire future block rather than selecting unrelated component indices at individual frames or tokens. Each component can therefore parameterize a complete candidate future trajectory, subject to the usual label-switching ambiguity of mixture models.\footnote{A learned GMM component may specialize to a semantically recognizable trajectory mode, but this is an empirical property rather than an identifiability guarantee. Component labels may permute, split, or change meaning across predictive contexts.}

\textit{Temporal coherence} alone is still not sufficient for a sequential world model: predictions should also remain \textit{consistent across prediction horizons} \citep{huang2026HMMVJEPA}. A future distribution predicted directly from $Z_t$ should agree, approximately, with the distribution obtained by recursively composing the learned one-step transition to the same horizon.\footnote{The comparison between direct multi-step prediction and recursively composed transitions is termed \emph{path composition} in \citet{huang2026HMMVJEPA}.} We therefore introduce a \emph{distributional path-consistency} principle, motivated by the Chapman--Kolmogorov identity, that compares complete predictive distributions rather than only their means\footnote{In our later Level-III implementation evaluated in Experiment~4, the separately supervised direct prediction acts as a stop-gradient reference belief within the CK term, while the recursively composed transition is regularized toward it using a sample-based distributional discrepancy.}.

These ideas give a natural three-stage progression of the predictive model: first preserve multiple plausible futures, then organize them into temporally coherent trajectories, and finally encourage those trajectory beliefs to compose consistently through time. Orthogonal to these three predictive levels, we retain PIB-VJEPA as the common probabilistic representation-learning foundation. The resulting model therefore combines three complementary representation and prediction principles:
{\small
\begin{equation}
\underbrace{\textit{PIB: compress non-predictive history}}_{\textit{compact predictive state}}
 + 
\underbrace{\textit{JEPA: retain predictive information}}_{\textit{representation learning}}
 + 
\underbrace{\textit{MDN: preserve alternative futures}}_{\textit{multimodal prediction}}.
\label{eq:conceptual-synthesis}
\end{equation}
}
PIB encourages a compact predictive state, JEPA preserves information needed to predict the future, and the MDN provides the predictive family needed to retain distinct plausible futures. Temporal coherence and distributional path consistency then impose additional structure on how those multimodal beliefs evolve across a prediction horizon.

\paragraph{Model hierarchy.}
The relationship among the predictive families is shown in \cref{fig:model-hierarchy}. Gaussian VJEPA is recovered from MDN-VJEPA by setting $K=1$, while MDN-VJEPA is itself one predictive-family choice within the more general VJEPA framework. Deterministic squared-error JEPA is related to Gaussian VJEPA at the objective level: with a deterministic target representation and fixed isotropic predictive covariance, Gaussian negative log-likelihood is equivalent, up to constants and positive scaling, to squared prediction error \citep{huang2026vjepa}.

\begin{figure}[H]
\centering
\begin{tikzpicture}[
    box/.style={
        draw,
        rounded corners,
        minimum width=2.55cm,
        minimum height=0.78cm,
        align=center,
        font=\footnotesize,
        inner sep=2pt
    },
    arr/.style={-{Latex[length=1.7mm]},thick},
    lab/.style={
        font=\tiny,
        align=center,
        fill=white,
        inner sep=1pt
    }
]

\node[box] (d) at (0,0) {Deterministic\\JEPA};
\node[box] (g) at (3.85,0) {Gaussian\\VJEPA};
\node[box] (m) at (7.70,0) {MDN-VJEPA\\($K$-GMM)};
\node[box] (v) at (11.55,0) {general\\VJEPA};

\draw[arr] (d.east) -- (g.west);
\draw[arr] (g.east) -- (m.west);
\draw[arr] (m.east) -- (v.west);

\node[lab] at (1.925,0.58) {Gaussian\\predictive law};
\node[lab] at (5.775,0.58) {allow\\$K>1$};
\node[lab] at (9.625,0.58) {broader predictive\\family};

\node[font=\scriptsize,align=center] at (5.775,-0.92)
{$\text{Deterministic JEPA}
\subset_{\mathrm{obj}}
\text{ Gaussian VJEPA}
\subset
\text{MDN-VJEPA}
\subset
\text{general VJEPA}$};

\end{tikzpicture}
\caption{Nested predictive hierarchy. Gaussian VJEPA is recovered from MDN-VJEPA by setting $K=1$, and general VJEPA permits predictive families beyond Gaussian mixtures. The notation $\subset_{\mathrm{obj}}$ denotes the objective-level relation between deterministic squared-error JEPA and Gaussian VJEPA: with a deterministic target representation and fixed isotropic covariance, Gaussian negative log-likelihood reduces to squared prediction error up to an additive constant and positive scale factor.}
\label{fig:model-hierarchy}
\end{figure}

\paragraph{Novelty and contributions.}
The novelty does not lie in the Gaussian-mixture head itself: MDNs are classical \citep{bishop1994mdn}, MDN latent dynamics were already used in \emph{World Models} \citep{ha2018worldmodels}, VJEPA identifies GMM predictive heads as a natural extension for multimodal futures \citep{huang2026vjepa}, and recent Branch-JEPA work studies finite-support multimodal JEPA prediction and includes an MDN baseline \citep{song2026branchjepa}. Our contribution is instead to develop mixture prediction into \textit{a coherent, uncertainty-aware, and compositionally consistent probabilistic JEPA dynamics model}. Specifically:
\begin{enumerate}[label=(\roman*)]
    \item \textbf{Multimodal probabilistic JEPA.} We instantiate the VJEPA predictive distribution as a \textit{neural GMM} and formalize the resulting continuous-state Markov kernel, its uncertainty decomposition, and its nested relation to deterministic and Gaussian JEPA.
    \item \textbf{Coherent trajectory-level modes.} A shared latent mode controls an entire future block, so each mixture component parameterizes a complete candidate trajectory with its own probability and uncertainty ``tube'', rather than a collection of independently selected horizon-wise components.
    \item \textbf{Distributional multi-horizon consistency.} We regularize direct and recursively composed predictive distributions toward agreement using a permutation-invariant sample discrepancy, encouraging an approximate Chapman--Kolmogorov consistency condition without aligning mixture labels by hand.
    \item \textbf{Predictive-information and decision interpretation.} We combine the transition with PIB compression, separate within-mode uncertainty from between-mode uncertainty, identify distinct mixture/variance collapse failure modes, and show why retaining multimodal predictive beliefs matters for risk-sensitive planning under branching futures.
\end{enumerate}

\section{Related Work}

\subsection{JEPA, VJEPA, and predictive information}

JEPA was proposed as a representation-space alternative to observation reconstruction \citep{lecun2022path}; I-JEPA instantiated masked prediction for images \citep{assran2023ijepa}, and V-JEPA extended representation prediction to video \citep{bardes2024vjepa}. V-JEPA 2-AC and recent JEPA world-model studies further use action-conditioned latent prediction for physical planning \citep{assran2025vjepa2,terver2026jepawm}. VJEPA makes target prediction probabilistic and connects JEPA to predictive states, uncertainty-aware planning, and variational information objectives \citep{huang2026vjepa}. PIB-VJEPA further emphasizes a compressed predictive state that retains information about the future while removing unnecessary information from the observed history \citep{huang2026ibvjepa}.

Our starting point is the conditional predictive family itself, but the present work additionally asks how multimodal predictive beliefs should be coupled across time and composed across prediction horizons. A Gaussian VJEPA predictor can quantify dispersion around a single predictive center, but remains unimodal; VJEPA explicitly identifies more expressive distributions such as GMMs as a natural extension for bifurcating futures \citep{huang2026vjepa}. MDN-VJEPA develops this direction by combining a continuous multimodal predictive law with trajectory-level mode coherence and multi-horizon distributional consistency.

A related line considers probabilistic joint embeddings rather than temporal transition kernels. Gaussian Joint Embeddings (GJE) and Gaussian Mixture Joint Embeddings (GMJE) model joint context--target densities and derive conditional predictive distributions, including a conditional MDN construction \citep{huang2026gmje}. MDN-VJEPA instead treats the mixture as a sequential transition law over future JEPA states, couples its components across a future trajectory, and regularizes direct and recursively composed transition distributions.

Closely related to our multi-horizon construction, Markov chain JEPA (MCVJEPA) interprets probabilistic JEPA dynamics through a Markov-model lens and motivates consistency between direct and composed transition paths \citep{huang2026HMMVJEPA}. MDN-VJEPA develops this principle for continuous multimodal transition laws. Because nonlinear MDN transitions are generally not closed under recursive composition, the resulting direct and composed predictive laws need not remain finite GMMs of the same form; we therefore compare them distributionally through samples rather than requiring component-wise correspondence.

\subsection{Mixture density networks and multimodal latent world models}

Mixture Density Networks (MDNs) combine a neural network with an explicit mixture model to estimate a conditional density. Bishop's original motivation was precisely that conditional averages can be inappropriate for multi-valued regression problems, where several distinct outputs may be plausible for the same input \citep{bishop1994mdn}. 
The closest historical world-model precedent is the MDN-RNN of Ha and Schmidhuber \citep{ha2018worldmodels}, which predicts a Gaussian mixture over future VAE latents. MDN-VJEPA differs in its representation-learning framework and dynamics constraints: it operates on JEPA/PIB predictive representations without requiring an observation decoder or pixel likelihood, interprets the mixture as a probabilistic JEPA transition, organizes multimodal predictions into coherent and consistent future trajectories, and explicitly regularizes direct and recursively composed predictive distributions across horizons.

Recent Branch-JEPA provides an important contemporary comparison \citep{song2026branchjepa}. It represents the conditional future by a context-weighted finite atomic measure
$
\sum_{k}\pi_k\delta_{u_k},
$
and reports strong preservation of separated predictive support, including comparisons with an MDN baseline. MDN-VJEPA instead uses continuous mixture components
$
\sum_k
\pi_k
\N(\mu_k,\Sigma_k).
$
The distinction\footnote{Later in Experiment~2, we include a controlled Branch-JEPA-style finite-support predictor as an atomic multimodal reference; this controlled baseline is not intended as a reproduction of the complete Branch-JEPA training system.} is complementary rather than uniformly favorable to either representation: finite atomic branches can preserve sharp separated support particularly well, whereas Gaussian components additionally parameterize uncertainty \emph{within} each branch. 

Multimodal prediction over complete trajectories also has substantial precedent outside JEPA. MultiPath represents future behavior using probabilistically weighted trajectory hypotheses with uncertainty around the corresponding trajectories \citep{chai2019multipath}, while Trajectron and Trajectron++ model multimodal distributions over complete future trajectories in multi-agent forecasting \citep{ivanovic2019trajectron,salzmann2021trajectronpp}. These works establish the importance of maintaining multiple coherent trajectory hypotheses rather than independent multimodal predictions at individual time steps. Our contribution is different in scope: we introduce trajectory-level mixture coherence into a probabilistic JEPA world model and connect it to recursive latent transition composition through a distributional Chapman--Kolmogorov consistency objective.

Concurrent work also explores alternative belief-carrying JEPA state representations. UWM-JEPA, for example, represents uncertainty through a density-matrix latent state and propagates it using a learned unitary predictor \citep{radha2026uwmjepa}. MDN-VJEPA instead retains a conventional continuous JEPA state and places the multimodal predictive belief in its conditional transition distribution. These approaches use different state geometries and transition mechanisms, but share the broader motivation that uncertainty should remain represented during latent prediction and rollout.

\subsection{Latent dynamics and planning}

Probabilistic latent dynamics are central to model-based reinforcement learning. PlaNet combines deterministic and stochastic latent transitions and introduces latent overshooting as a multi-step variational training objective \citep{hafner2019planet}; Dreamer learns behaviors through trajectories imagined in a learned latent world model \citep{hafner2020dreamer}. These approaches demonstrate the importance of stochastic latent dynamics and long-horizon latent imagination for decision making, but typically learn their world models together with observation reconstruction or task-specific control objectives.

JEPA world models instead emphasize prediction directly in representation space without requiring pixel-level likelihoods \citep{lecun2022path,huang2026vjepa}. MDN-VJEPA retains this reconstruction-free predictive interface while making the transition explicitly multimodal. Its multi-horizon objective is also distinct from simply supervising longer latent rollouts: we compare the complete probability distribution predicted directly at a future horizon with the distribution obtained by recursively composing the learned one-step transition. The resulting constraint is therefore a distributional direct-versus-composed consistency condition rather than a conventional multi-step prediction loss.

This distinction becomes important for planning under branching futures. A stochastic latent world model is useful not only because it assigns uncertainty to a prediction, but because \textit{downstream decisions may depend on the structure of that uncertainty}. MDN-VJEPA therefore retains the complete weighted set of predictive alternatives, including their within-mode dispersion, so that expected-cost, chance-constrained, or tail-sensitive planners can reason over the predictive law rather than over its conditional mean alone.

\section{From Probabilistic JEPA to MDN-VJEPA}
\label{sec:method-base}
 
\subsection{Time-indexed probabilistic JEPA}

We develop MDN-VJEPA in a time-indexed sequential setting, including control, planning, video, and time-series prediction, where multimodal future prediction, trajectory-level coherence, and multi-horizon consistency arise naturally. This temporal formulation is used throughout the paper for concreteness. The underlying MDN construction, however, can be applied more generally: in a standard context--target JEPA, the same mixture predictor can model a multimodal conditional distribution over arbitrary target representations rather than future states.

Let $X_{\leq t}$ denote the observed history and $X_{t+1:t+H}$ a future target block. In the time-indexed VJEPA formulation considered here, the model uses a stochastic context encoder \citep{huang2026vjepa}
\begin{equation}
Z_t
\sim
q_\theta(Z_t\mid X_{\leq t}),
\label{eq:context-encoder}
\end{equation}
an EMA target encoder
\begin{equation}
Z_{t+h}^{\mathrm T}
\sim
q_{\bar\theta}(Z_{t+h}\mid X_{t+h}),
\qquad
h=1,\ldots,H,
\label{eq:target-encoder}
\end{equation}
and a conditional predictive latent model
\begin{equation}
p_\phi
\left(
Z_{t+h}\mid Z_t,\xi_{t:t+h-1}
\right),
\label{eq:general-predictor}
\end{equation}
where $\xi_{t:t+h-1}$ denotes the side information relevant to propagation from time $t$ to $t+h$, such as actions, elapsed times, goals, or physical constraints. The target-encoder parameters $\bar\theta$ are updated by exponential moving average (EMA), and target representations are treated with stop-gradient during predictive training. The online representation and predictor are optimized through the representation-space prediction objective \citep{huang2026vjepa}. Full PIB-VJEPA additionally regularizes the stochastic context representation so that it retains information predictive of the future while compressing unnecessary information from the observed history \citep{huang2026ibvjepa}.

\cref{fig:mdn-vjepa-architecture} illustrates the MDN-VJEPA predictive pipeline. A stochastic context encoder maps the observed history to the predictive state $Z_t$, which is combined with side information $\xi$ and passed to an MDN predictor. Rather than producing a single unimodal Gaussian prediction, the predictor outputs mixture weights, component means, and component covariances that together define a Gaussian-mixture distribution over the future representation. An EMA target encoder supplies the stop-gradient target representation $Z_{t+h}^{\mathrm T}$, and the predictive objective compares this target with the complete multimodal predictive law. In this way, MDN-VJEPA retains several alternative latent futures together with their relative probabilities and within-component uncertainties.

\begin{figure}[t]
\centering
\resizebox{0.98\linewidth}{!}{%
\begin{tikzpicture}[
    x=1cm,y=1cm,
    >=Latex,
    box/.style={
        draw,
        rounded corners,
        minimum width=2.35cm,
        minimum height=0.88cm,
        align=center,
        font=\scriptsize
    },
    bigbox/.style={
        draw,
        rounded corners,
        minimum width=2.95cm,
        minimum height=2.30cm,
        align=center,
        font=\scriptsize
    },
    smallbox/.style={
        draw,
        rounded corners,
        minimum width=2.65cm,
        minimum height=0.86cm,
        align=center,
        font=\scriptsize
    },
    distbox/.style={
        draw,
        rounded corners,
        minimum width=5.35cm,
        minimum height=1.12cm,
        align=center,
        font=\scriptsize
    },
    latent/.style={
        draw,
        circle,
        minimum size=0.62cm,
        inner sep=0pt,
        font=\scriptsize
    },
    arr/.style={->,thick},
    sgarrow/.style={->,thick,dashed},
    note/.style={font=\scriptsize,align=center}
]

\node[latent] (xc) at (0.0,1.75) {$X_{\le t}$};
\node[box]    (encC) at (2.0,1.75) {context\\encoder $q_\theta$};
\node[latent] (zt) at (4.0,1.75) {$Z_t$};

\draw[arr] (xc) -- (encC);
\draw[arr] (encC) -- (zt);

\node[latent] (xt) at (0.0,-1.55) {$X_{t+h}$};
\node[box]    (encT) at (2.0,-1.55) {EMA target\\encoder $q_{\bar\theta}$};
\node[latent] (yt) at (4.0,-1.55) {$Z_{t+h}^{\mathrm T}$};

\draw[arr] (xt) -- (encT);
\draw[arr] (encT) -- (yt);

\node[latent] (xi) at (5.25,3.20) {$\xi$};
\node[note, anchor=east] at (4.75,3.20) {conditioning context};

\node[bigbox] (mdn) at (7.65,1.75) {};
\node[note] at (7.65,3.15) {\textbf{MDN predictor}};

\begin{scope}[shift={(7.65,1.75)}]
    \foreach \i/\y in {1/0.62,2/0.20,3/-0.20,4/-0.62} {
        \node[draw,circle,inner sep=1.35pt] (l1\i) at (-0.82,\y) {};
    }

    \foreach \i/\y in {1/0.76,2/0.26,3/-0.26,4/-0.76} {
        \node[draw,circle,inner sep=1.35pt] (l2\i) at (0.0,\y) {};
    }

    \foreach \i/\y in {1/0.48,2/0.0,3/-0.48} {
        \node[draw,circle,inner sep=1.35pt] (l3\i) at (0.82,\y) {};
    }

    \foreach \i in {1,2,3,4} {
        \foreach \j in {1,2,3,4} {
            \draw[thin,gray] (l1\i) -- (l2\j);
        }
    }

    \foreach \i in {1,2,3,4} {
        \foreach \j in {1,2,3} {
            \draw[thin,gray] (l2\i) -- (l3\j);
        }
    }
\end{scope}

\node[note] at (7.65,0.10)
{neural network outputs\\mixture parameters};

\draw[arr] (zt) -- (mdn.west);
\draw[arr] (xi) -- ([xshift=-0.15cm,yshift=0.20cm]mdn.west);

\coordinate (fan1) at (9.70,1.75);
\draw[thick] (mdn.east) -- (fan1);

\node[smallbox] (pi) at (11.70,2.95)
{mixture weights\\
$\boldsymbol{\pi}_\phi(Z_t,\xi)$};

\node[smallbox] (mu) at (11.70,1.75)
{component means\\
$\{\mu_{\phi,k}(Z_t,\xi)\}_{k=1}^{K}$};

\node[smallbox] (sig) at (11.70,0.55)
{component covariances\\
$\{\Sigma_{\phi,k}(Z_t,\xi)\}_{k=1}^{K}$};

\draw[arr] (fan1) |- (pi.west);
\draw[arr] (fan1) -- (mu.west);
\draw[arr] (fan1) |- (sig.west);

\coordinate (collectTop) at (13.80,2.95);
\coordinate (collectMid) at (13.80,1.75);
\coordinate (collectBot) at (13.80,0.55);

\draw[thick] (pi.east) -- (collectTop);
\draw[thick] (mu.east) -- (collectMid);
\draw[thick] (sig.east) -- (collectBot);
\draw[thick] (collectTop) -- (collectBot);

\node[distbox] (mix) at (10.35,-1.55)
{$\displaystyle
p_\phi(Z_{t+h}\mid Z_t,\xi)
=
\sum_{k=1}^{K}
\pi_{\phi,k}\,
\N\!\left(
Z_{t+h};\,\mu_{\phi,k},\,\Sigma_{\phi,k}
\right)$};

\node[note] at (10.35,-2.45)
{multi-modal predictive law};

\coordinate (collectorOut) at (14.30,1.75);

\draw[thick] (13.80,1.75) -- (collectorOut);
\draw[arr] (collectorOut) |- (mix.east);

\draw[thick,dashed,|-|]
    (yt.east)
    --
    node[midway,above=2pt,font=\scriptsize]
    {$\cL_{\mathrm{dir}}$}
    (mix.west);

\node[note] at (6.00,-0.55)
{target--predictive distribution discrepancy};

\node[note] at (2.55,-2.45)
{EMA target / stop-gradient};

\end{tikzpicture}%
}
\caption{MDN-VJEPA pipeline. The trainable context encoder $q_\theta$ maps the observed history to the predictive state $Z_t$, while the EMA target encoder $q_{\bar\theta}$ produces the stop-gradient target representation $Z_{t+h}^{\mathrm T}$. Conditioned on $Z_t$ and side information $\xi$, the MDN predictor outputs mixture weights, component means, and component covariances, which together define a Gaussian-mixture predictive law over the future representation. The direct predictive loss compares this multimodal predictive distribution with the target representation.}
\label{fig:mdn-vjepa-architecture}
\end{figure}
 
\subsection{The mixture-density transition}

MDN-VJEPA specializes the probabilistic JEPA predictor to a Gaussian mixture. For the recursive dynamics, we define the \textit{one-step} conditional transition
\begin{equation}
p_\phi
\left(
Z_{t+1}\mid Z_t,\xi_t
\right)
=
\sum_{k=1}^{K}
\pi_{\phi,k}(Z_t,\xi_t)\,
\N\left(
Z_{t+1};
\mu_{\phi,k}(Z_t,\xi_t),
\Sigma_{\phi,k}(Z_t,\xi_t)
\right).
\label{eq:mdn-transition}
\end{equation}
Equivalently, writing the current state as $z$, the next state as $z'$, and suppressing parameter subscripts when unambiguous,
\begin{equation*}
p_\phi(z'\mid z,\xi)
=
\sum_{k=1}^{K}
\pi_k(z,\xi)\,
\N\left(
z';
\mu_k(z,\xi),
\Sigma_k(z,\xi)
\right).
\end{equation*}
In a non-temporal JEPA, the same construction applies by interpreting $z$ and $z'$ as context and target representations, respectively.

Each component contributes three complementary quantities: $\mu_k$ specifies the location of candidate future hypothesis $k$, $\Sigma_k$ captures uncertainty \emph{within} that hypothesis, and $\pi_k$ specifies the predictive probability assigned to it. Uncertainty arising from competing futures is therefore represented jointly through the mixture weights and separation among the component means, while each covariance describes local variability around its corresponding candidate future.

A neural predictor first computes an intermediate representation
\begin{equation*}
r_\phi
=
g_\phi(z,\xi)
\end{equation*}
and maps it to mixture logits $a_k$, means $\mu_k$, and covariance parameters. For the fixed-$K$ core model, the predictive weights are
\begin{equation}
\pi_k
=
\frac{\exp(a_k)}
{\sum_{j=1}^{K}\exp(a_j)},
\qquad
\pi_k\geq0,
\qquad
\sum_{k=1}^{K}\pi_k=1.
\label{eq:mixture-weights}
\end{equation}

The same Gaussian-mixture parameterization can also be used for a direct horizon-$h$ predictor,
\begin{equation*}
p_\phi^{\mathrm{dir},h}
\left(
Z_{t+h}\mid Z_t,\xi_{t:t+h-1}
\right),
\end{equation*}
with horizon-dependent component parameters. This direct multi-step distribution is conceptually distinct from the distribution obtained by recursively composing the one-step transition in \cref{eq:mdn-transition}; their relationship is developed later through the distributional path-consistency objective.

At inference, we retain the complete predictive mixture
\begin{equation}
\left\{
(\pi_k,\mu_k,\Sigma_k)
\right\}_{k=1}^{K},
\label{eq:inference-interface}
\end{equation}
rather than collapsing it to the mixture mean
$
\sum_k\pi_k\mu_k
$.
The component means specify candidate future locations, the covariances quantify local uncertainty around them, and the weights quantify their relative predictive support. When components specialize consistently, their means may additionally be interpreted as branch ``skeletons'', although such semantic specialization is an empirical property rather than an identifiability guarantee.

\begin{proposition}[Valid conditional transition kernel]
\label{prop:valid-kernel}
Let $\mathcal Z=\mathbb R^d$ be equipped with its Borel $\sigma$-algebra. Assume that $\pi_k(z,\xi)$, $\mu_k(z,\xi)$, and $\Sigma_k(z,\xi)$ are measurable in $(z,\xi)$, with
\begin{equation*}
\pi_k(z,\xi)\geq0,
\qquad
\sum_{k=1}^{K}\pi_k(z,\xi)=1,
\qquad
\Sigma_k(z,\xi)\succ0.
\end{equation*}
Then the MDN predictor in \cref{eq:mdn-transition} defines a valid conditional probability density over the next state and hence a conditional Markov transition kernel on $\mathcal Z$ given the side information $\xi$.
\end{proposition}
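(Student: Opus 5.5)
The plan is to verify the two defining properties of a Markov kernel $P((z,\xi),A)=\int_A p_\phi(z'\mid z,\xi)\,dz'$ for Borel sets $A\subseteq\mathcal Z$:
\begin{itemize}
\item for each fixed $(z,\xi)$, the map $A\mapsto P((z,\xi),A)$ is a probability measure;
\item for each fixed $A$, the map $(z,\xi)\mapsto P((z,\xi),A)$ is measurable.
\end{itemize}
Both follow once we show that $p_\phi(z'\mid z,\xi)$ is a nonnegative, jointly measurable function of $(z',z,\xi)$ that integrates to one in $z'$.

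\textbf{Step 1 (fixed conditioning).} Fix $(z,\xi)$. Because $\Sigma_k(z,\xi)\succ0$, each Gaussian density $\N(z';\mu_k,\Sigma_k)$ is well defined, strictly positive, continuous in $z'$, and integrates to one over $\mathbb R^d$. The weights satisfy $\pi_k\ge0$, so the finite sum is nonnegative. Since the sum is finite, linearity of the integral gives $\int p_\phi\,dz'=\sum_k\pi_k\cdot1=1$. Hence $p_\phi(\cdot\mid z,\xi)$ is a probability density, and $P((z,\xi),\cdot)$ is a probability measure because the integral of a nonnegative function is countably additive over disjoint sets.

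\textbf{Step 2 (joint measurability).} I expect this to be the only step needing care. Write
\[
\N(z';\mu,\Sigma)=(2\pi)^{-d/2}\det(\Sigma)^{-1/2}\exp\!\bigl(-\tfrac12(z'-\mu)^\top\Sigma^{-1}(z'-\mu)\bigr).
\]
This map is continuous in $(z',\mu,\Sigma)$ on $\mathbb R^d\times\mathbb R^d\times\mathbb S^d_{++}$, since the determinant and matrix inversion are continuous on the open cone of positive definite matrices. Composing it with the measurable map $(z',z,\xi)\mapsto(z',\mu_k(z,\xi),\Sigma_k(z,\xi))$ gives a jointly Borel function. Multiplying by the measurable weight $\pi_k$ and summing finitely many such terms preserves measurability. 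Tonelli's theorem applied to the nonnegative function $\mathbf 1_A(z')\,p_\phi(z'\mid z,\xi)$ then shows that $(z,\xi)\mapsto\int_A p_\phi\,dz'$ is measurable. This is the kernel-measurability condition.

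\textbf{Step 3 (conclusion).} Steps 1 and 2 together give a conditional density and hence a Markov transition kernel on $\mathcal Z$, indexed by the side information $\xi$. The argument only requires $\xi$ to live in some measurable space, with the product $\sigma$-algebra on $\mathcal Z\times\Xi$.
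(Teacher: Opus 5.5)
Your proof is correct and follows the same route as the paper: for fixed $(z,\xi)$ the mixture is a convex combination of normalized Gaussian densities, hence a normalized density, and integrating it over Borel sets $A$ gives the kernel. Your Step~2 goes further than the paper. It explicitly checks that $(z,\xi)\mapsto\int_A p_\phi(z'\mid z,\xi)\,dz'$ is measurable, using continuity of the Gaussian density on the positive-definite cone and the measurability of section integrals from Tonelli/Fubini; the paper's proof does not argue this kernel requirement and only alludes to it through the word ``measurable.''
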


\begin{proof}
For fixed $(z,\xi)$, every Gaussian component is a non-negative normalized density. A convex combination with non-negative measurable weights summing to one is therefore also a normalized density. Consequently,
\begin{equation*}
K_\phi(z,\xi,A)
=
\int_A
p_\phi(z'\mid z,\xi)\,dz',
\qquad
A\in\mathcal B(\mathcal Z),
\end{equation*}
defines a probability kernel on $\mathcal Z$.
\end{proof}

Thus, the MDN head is not merely a collection of candidate regressors: it defines a normalized continuous transition law that can be sampled, integrated, recursively composed through time, and evaluated using probabilistic scoring and calibration criteria.
 
\subsection{Covariance parameterization}
\label{sec:covariance}

The covariance structure determines how uncertainty is represented within each predictive component. A full $d\times d$ covariance is expressive but becomes expensive as the latent dimension grows \citep{huang2025gma} and can be difficult to estimate reliably in high-dimensional JEPA spaces. We therefore use diagonal covariance as the default scalable realization and consider structured covariance as an optional extension when stronger within-component dependencies are required.

\paragraph{Diagonal covariance.}
Our core model assumes conditionally independent latent dimensions within each mixture component,
\begin{equation}
\Sigma_k
=
\operatorname{diag}
\left(
\sigma_{k,1}^2,\ldots,\sigma_{k,d}^2
\right),
\label{eq:diag-cov}
\end{equation}
with
\begin{equation}
\sigma_k
=
\sigma_{\min}
+
\softplus(s_k).
\label{eq:sigma-floor}
\end{equation}
This gives each component dimension-specific uncertainty while retaining $O(d)$ covariance complexity. The positive floor $\sigma_{\min}>0$ also prevents the likelihood from exploiting near-singular variances.

\paragraph{Low-rank plus diagonal extension.}
When within-component uncertainty is strongly correlated across latent dimensions, the covariance can instead be parameterized as
\begin{equation}
\Sigma_k
=
D_k
+
U_kU_k^\top,
\label{eq:lowrank-cov}
\end{equation}
where $D_k$ is positive diagonal and
$
U_k\in\mathbb R^{d\times r}
$
with
$
r\ll d
$.
The diagonal term preserves dimension-wise uncertainty while the low-rank factor captures dominant cross-dimensional correlations. This increases covariance parameter and storage complexity from $O(d)$ to $O(dr)$ rather than the $O(d^2)$ cost of a dense covariance matrix.

Thus, diagonal covariance provides our default scalable realization; low-rank-plus-diagonal structure offers an optional extension when correlated within-mode uncertainty matters; and fully dense covariance can be considered in sufficiently low-dimensional settings.

\subsection{Effective mixture complexity}
\label{sec:effective-k}

The core MDN-VJEPA model uses a fixed maximum number $K$ of mixture components, but this does not imply that all components must receive substantial predictive probability in every context. The softmax weights in \cref{eq:mixture-weights} already permit the effective complexity of the predictive mixture to vary with $(Z_t,\xi)$. For example, one context may produce
\begin{equation*}
\boldsymbol{\pi}(Z_t,\xi)
\approx
(0.98,0.01,0.01),
\end{equation*}
and therefore behave approximately unimodally, whereas another may produce
\begin{equation*}
\boldsymbol{\pi}(Z_{t'},\xi')
\approx
(0.48,0.46,0.06),
\end{equation*}
with predictive mass distributed over multiple components.

We quantify this context-dependent complexity using the \textit{entropy effective component count}
\begin{equation}
K_{\mathrm{eff}}(Z_t,\xi)
=
\exp
\left[
H\!\left(
\boldsymbol{\pi}(Z_t,\xi)
\right)
\right],
\label{eq:effective-k}
\end{equation}
where
\begin{equation*}
H(\boldsymbol{\pi})
=
-
\sum_{k=1}^{K}
\pi_k\log\pi_k.
\end{equation*}
Hence,
\begin{equation*}
1
\leq
K_{\mathrm{eff}}
\leq
K,
\end{equation*}
with $K_{\mathrm{eff}}\approx1$ when predictive mass is concentrated on one component and $K_{\mathrm{eff}}=K$ when the weights are uniform.

Importantly, $K_{\mathrm{eff}}$ measures the effective complexity or utilization of the predictive mixture rather than the number of \textit{semantically} distinct futures. Several Gaussian components may cooperate to approximate one complicated future branch, while a sufficiently broad component may cover several weakly separated outcomes. We therefore use $K_{\mathrm{eff}}$ primarily as a mixture-utilization diagnostic; semantic interpretation of individual components requires separate empirical evidence.

\subsection{Nested model hierarchy}

The proposed predictive family forms a natural hierarchy with Gaussian and deterministic JEPA. The Gaussian model is obtained directly by restricting the mixture family, while the relation to deterministic JEPA is an objective-level equivalence under a fixed-covariance probabilistic realization.

\begin{proposition}[Nested predictive-family hierarchy]
\label{prop:nesting}
Consider the MDN-VJEPA predictive distribution in \cref{eq:mdn-transition}. Then:
\begin{enumerate}[label=(\roman*)]
    \item setting $K=1$ recovers a Gaussian VJEPA predictor;
    \item for $K=1$, a deterministic target representation, and fixed isotropic predictive covariance $\Sigma_\phi=\sigma^2I$, Gaussian negative log-likelihood is equivalent, up to an additive constant and positive multiplicative factor, to the squared-error predictive objective used by deterministic JEPA; and
    \item MDN-VJEPA is itself a special case of the general VJEPA framework obtained by choosing a finite Gaussian-mixture predictive family.
\end{enumerate}
\end{proposition}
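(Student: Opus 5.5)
The plan is to verify the three items separately, since each reduces to unpacking a definition.

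For (i), set $K=1$ in \cref{eq:mdn-transition}. The softmax in \cref{eq:mixture-weights} then forces $\pi_1(z,\xi)=1$ identically, so the predictive law collapses to the single conditional Gaussian $\N(Z_{t+1};\mu_{\phi,1}(Z_t,\xi),\Sigma_{\phi,1}(Z_t,\xi))$. This is exactly the Gaussian VJEPA predictor. By \cref{prop:valid-kernel} it is still a valid transition kernel. The same argument covers the direct horizon-$h$ head.

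For (ii), take $K=1$, a deterministic target $z^{\mathrm T}=Z_{t+h}^{\mathrm T}$, and $\Sigma_\phi=\sigma^2 I$ with $\sigma$ fixed. I will write out the negative log-density:
\begin{equation*}
-\log \N\left(z^{\mathrm T};\mu_\phi(z,\xi),\sigma^2 I\right)
=\frac{1}{2\sigma^2}\left\|z^{\mathrm T}-\mu_\phi(z,\xi)\right\|_2^2+\frac d2\log(2\pi\sigma^2).
\end{equation*}
The second term does not depend on $\phi$ or $\theta$, and $1/(2\sigma^2)>0$. Taking expectations over the data, the Gaussian NLL is therefore a positive affine transformation of the squared-error JEPA loss with point predictor $\mu_\phi$. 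The two objectives have the same minimizers and gradients that are proportional.

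I expect the only real subtlety to be scoping the claim correctly. The equivalence holds only at the objective level and needs $\sigma$ fixed rather than learned. If $\sigma$ were learned, the $\log\sigma$ term would couple to the fit, and the equivalence would hold only at each fixed $\sigma$. If the target were stochastic, one would take the expectation over the target encoder first, and this leaves the squared error in expectation, still up to the same constant. I will state these caveats explicitly so that the result reads as $\subset_{\mathrm{obj}}$ rather than as a strict model inclusion. Any additional PIB regularizer is shared by both models and does not affect the comparison.

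For (iii), I will recall that general VJEPA only requires a parametric conditional predictive family $p_\phi(Z_{t+h}\mid Z_t,\xi)$ trained through a representation-space predictive objective, with no fixed distributional form \citep{huang2026vjepa}. \cref{prop:valid-kernel} shows that the finite Gaussian-mixture family with measurable parameters is such a valid conditional family. Choosing it therefore yields MDN-VJEPA as an instance of general VJEPA. Combining this with (i) gives the chain of inclusions shown in \cref{fig:model-hierarchy}.
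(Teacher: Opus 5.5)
Your proposal is correct and follows the paper's proof essentially step for step: $\pi_1=1$ reduces the mixture to a single Gaussian for (i), the explicit negative log-density $\frac{1}{2\sigma^2}\|\bar z-\mu_\phi\|_2^2+\frac{d}{2}\log(2\pi\sigma^2)$ gives the positive affine equivalence for (ii), and choosing a finite Gaussian-mixture family inside general VJEPA gives (iii). Your extra caveats on fixed $\sigma$ and stochastic targets are sound, but the side remark about PIB needs a correction: with a PIB regularizer added, the total objectives are equivalent only after rescaling its coefficient by $2\sigma^2$; this lies outside what the proposition claims.
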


\begin{proof}
For $K=1$, the mixture weight satisfies $\pi_1=1$, and \cref{eq:mdn-transition} reduces to
\begin{equation*}
p_\phi(z'\mid z,\xi)
=
\N\left(
z';
\mu_\phi(z,\xi),
\Sigma_\phi(z,\xi)
\right),
\end{equation*}
which is a Gaussian VJEPA predictor.

Now let the target representation be deterministic,
$
Z^{\mathrm T}=\bar z
$,
and fix
$
\Sigma_\phi=\sigma^2I
$.
The Gaussian negative log-likelihood is \citep{Carl2005GP}
\begin{equation*}
-\log p_\phi(\bar z\mid z,\xi)
=
\frac{1}{2\sigma^2}
\left\|
\bar z-\mu_\phi(z,\xi)
\right\|_2^2
+
\frac{d}{2}\log(2\pi\sigma^2).
\end{equation*}
The second term is constant with respect to the predictor parameters and
$
1/(2\sigma^2)>0
$.
Minimizing this negative log-likelihood is therefore equivalent to minimizing squared prediction error, up to an additive constant and positive scale factor.

Finally, VJEPA permits a general conditional predictive distribution
$
p_\phi(Z_T\mid Z_C,\xi)
$.
Choosing that distribution to be a finite Gaussian mixture yields MDN-VJEPA.
\end{proof}

The resulting hierarchy is summarized as
\begin{equation}
\text{Deterministic JEPA}
\subset_{\mathrm{obj}}
\text{Gaussian VJEPA}
\subset
\text{MDN-VJEPA}
\subset
\text{general VJEPA},
\label{eq:nested-hierarchy}
\end{equation}
where $\subset_{\mathrm{obj}}$ denotes an objective-level relation rather than literal architectural inclusion: with a deterministic target representation and fixed isotropic covariance, Gaussian negative log-likelihood reduces to squared prediction error up to an additive constant and positive scale factor. The remaining inclusions correspond to increasingly expressive choices of predictive family. This hierarchy also motivates matched ablations: the effect of multimodal predictive capacity can be isolated by comparing $K=1$ with $K>1$ under the same surrounding probabilistic JEPA framework and, where relevant, matched predictor capacity.

\section{Coherent Multimodal Latent Dynamics}
\label{sec:coherent-dynamics}

The MDN construction above gives VJEPA a more expressive predictive family: instead of collapsing multiple plausible futures into a single point or unimodal belief, it can represent several alternatives together with their probabilities and within-mode uncertainties. However, \emph{multimodal expressiveness alone does not guarantee temporal coherence}. In sequential settings such as video, control, or trajectory forecasting, accurate multimodal predictions at individual horizons can still combine into an implausible joint future. We therefore distinguish two constructions: horizon-wise multimodal prediction and trajectory-level multimodal prediction.

\subsection{Level I: Why horizon-wise mixtures are insufficient}

Suppose a direct predictor models each future horizon with its own Gaussian mixture,
\begin{equation}
p_\phi
\left(
Z_{t+h}\mid Z_t,\xi_{t:t+H-1}
\right)
=
\sum_{k=1}^{K}
\pi_{\phi,k,h}
\N\left(
Z_{t+h};
\mu_{\phi,k,h},
\Sigma_{\phi,k,h}
\right),
\qquad
h=1,\ldots,H,
\label{eq:level1-marginal} \tag{Eq.\ref{eq:mdn-transition}b}
\end{equation}
where the dependence of
$\pi_{\phi,k,h}$,
$\mu_{\phi,k,h}$, and
$\Sigma_{\phi,k,h}$
on $(Z_t,\xi_{t:t+H-1})$ is suppressed for readability. If the horizons are coupled only through the common conditioning context, the resulting trajectory law factorizes as follows\footnote{We use the superscripts $\mathrm I$ and $\mathrm{II}$ to denote Levels~I and II of our predictive hierarchy, respectively: $p_\phi^{\mathrm I}$ denotes the horizon-wise construction introduced here, while $p_\phi^{\mathrm{II}}$ denotes the trajectory-level construction introduced in the next subsection.}:
\begin{equation}
\begin{aligned}
p_\phi^{\mathrm I}
\left(
Z_{t+1:t+H}\mid Z_t,\xi_{t:t+H-1}
\right)
&=
\prod_{h=1}^{H}
p_\phi
\left(
Z_{t+h}\mid Z_t,\xi_{t:t+H-1}
\right)
\\
&=
\prod_{h=1}^{H}
\left[
\sum_{k=1}^{K}
\pi_{\phi,k,h}
\N\left(
Z_{t+h};
\mu_{\phi,k,h},
\Sigma_{\phi,k,h}
\right)
\right].
\end{aligned}
\label{eq:level1-trajectory}
\end{equation}

Each horizon could therefore have a well-calibrated multimodal marginal while the sampled trajectory remains incoherent. Independently selecting a component at every horizon may, for example, produce ``left'' at one frame, ``right'' at the next, and ``left'' again thereafter. The same issue arises for groups of target tokens intended to describe one future scene: independent component choices can splice together mutually incompatible hypotheses.

The problem is structural. Accurate multimodal marginals do not uniquely determine an accurate multimodal trajectory law; what is missing is a coupling of the alternatives across horizons. In particular, the \textit{product-of-mixtures} form
$
\prod_{h=1}^{H}
\left[
\sum_{k=1}^{K}(\cdot)
\right]
$
allows the mixture identity to vary independently with $h$.

\subsection{Level II: Trajectory-level modes}

To preserve \textit{temporal coherence}, we instead introduce a single categorical variable for the directly predicted future block,
\begin{equation}
M
\sim
\Cat
\left(
\pi_{\phi,1},\ldots,\pi_{\phi,K}
\right),
\label{eq:traj-mode}
\end{equation}
where the trajectory-level weights depend on the current predictive context,
\begin{equation*}
\pi_{\phi,k}
=
\pi_{\phi,k}
\left(
Z_t,\xi_{t:t+H-1}
\right).
\end{equation*}
Conditioning the complete target block on the \textit{same} $M$ gives
\begin{equation}
\begin{aligned}
p_\phi^{\mathrm{II}}
\left(
Z_{t+1:t+H}\mid
Z_t,\xi_{t:t+H-1}
\right)
&=
\sum_{k=1}^{K}
\pi_{\phi,k}
\prod_{h=1}^{H}
\N\left(
Z_{t+h};
\mu_{\phi,k,h},
\Sigma_{\phi,k,h}
\right).
\end{aligned}
\label{eq:trajectory-mixture}
\end{equation}
Thus, Level II replaces the Level-I product of mixtures by a \textit{mixture of trajectory products}:
\begin{equation}
\underbrace{
\prod_{h=1}^{H}
\left[
\sum_{k=1}^{K}(\cdot)
\right]
}_{\text{Level I: horizon-wise mixture identities}}
\qquad\longrightarrow\qquad
\underbrace{
\sum_{k=1}^{K}
\left[
\prod_{h=1}^{H}(\cdot)
\right]
}_{\text{Level II: shared trajectory-level identity}}.
\label{eq:level1-level2-contrast}
\end{equation}

The conditional independence across horizons within a mode in \cref{eq:trajectory-mixture} is a deliberately simple realization. When stronger within-trajectory dependencies are required, the distribution conditioned on $M$ can instead use a joint covariance over the concatenated trajectory representation, for example through a low-rank-plus-diagonal parameterization analogous to \cref{eq:lowrank-cov}.

The central architectural constraint is
\begin{equation}
M=k
\Longrightarrow
\left(
\mu_{k,1},
\mu_{k,2},
\ldots,
\mu_{k,H}
\right),
\label{eq:same-k-trajectory}
\end{equation}
so that \textit{one mixture component parameterizes one complete candidate trajectory over the prediction horizon} rather than selecting unrelated GMM component identities at individual future frames or tokens. This does not require component $k$ to possess a universal semantic interpretation across all contexts.

Importantly, \textit{trajectory-level coupling need not sacrifice horizon-wise multimodality}. Marginalizing \cref{eq:trajectory-mixture} at any horizon $h$ gives\footnote{A detailed derivation, both by direct marginalization and through the latent trajectory mode $M$, is given in \cref{app:level2-marginalization}.}
\begin{equation}
p_\phi^{\mathrm{II}}
\left(
Z_{t+h}\mid
Z_t,\xi_{t:t+H-1}
\right)
=
\sum_{k=1}^{K}
\pi_{\phi,k}
\N\left(
Z_{t+h};
\mu_{\phi,k,h},
\Sigma_{\phi,k,h}
\right),
\label{eq:level2-marginal}
\end{equation}
which remains a Gaussian mixture. The distinction between Levels~I and II therefore lies primarily in the \emph{joint coupling across horizons}, not in whether each individual horizon can represent multiple futures.

\begin{figure}[t]
\centering
\begin{tikzpicture}[
    x=1cm,y=1cm,
    >=Latex,
    box/.style={
        draw,
        rounded corners,
        align=center,
        minimum height=0.82cm,
        inner sep=4pt,
        font=\small
    },
    state/.style={
        draw,
        rounded corners,
        align=center,
        minimum height=0.82cm,
        inner sep=4pt,
        font=\small
    },
    arr/.style={->,thick},
    note/.style={font=\scriptsize,align=center}
]

\node[box, minimum width=2.5cm] (ctx) at (0,0)
{$Z_t, \xi_{t:t+H-1}$};

\node[box, minimum width=2.5cm] (mode) at (3.8,1.2)
{$M\sim\Cat(\boldsymbol{\pi}_\phi)$};

\node[box, minimum width=3.0cm] (head) at (3.8,-1.0)
{trajectory-level\\MDN predictor};

\node[state, minimum width=3.1cm] (z1) at (8.4,2.0)
{$Z_{t+1}\sim \N(\mu_{M,1},\Sigma_{M,1})$};

\node[state, minimum width=3.1cm] (z2) at (8.4,0.0)
{$Z_{t+2}\sim \N(\mu_{M,2},\Sigma_{M,2})$};

\node[note] (vdots) at (8.4,-1.05) {$\vdots$};

\node[state, minimum width=3.1cm] (zH) at (8.4,-2.1)
{$Z_{t+H}\sim \N(\mu_{M,H},\Sigma_{M,H})$};

\coordinate (fan) at (6.2,-1.0);

\draw[arr] (ctx) -- (mode);
\draw[arr] (ctx) -- (head);
\draw[arr] (mode) -- (head);

\draw[arr] (head.east) -- (fan);

\draw[arr] (fan) |- (z1.west);
\draw[arr] (fan) |- (z2.west);
\draw[arr] (fan) |- (zH.west);

\node[note] at (5.2,3.0)
{one shared mode for the complete future block/window};

\node[note] at (8.3,-3.15)
{all horizons are predicted jointly under the same $M$;\\
future states are emitted in parallel, not recursively};

\end{tikzpicture}
\caption{Trajectory-level mixture semantics. The current predictive context defines a distribution over a shared trajectory mode $M$. The context together with that same sampled or marginalized mode parameterizes all horizon-specific Gaussian predictions in the directly predicted future block. The future states are emitted in parallel rather than recursively, distinguishing Level-II trajectory coherence from the recursive transition composition introduced at Level III.}
\label{fig:trajectory-modes}
\end{figure}

For video representations, all target tokens assigned to the same predicted future block by the JEPA target-construction scheme can likewise share \footnote{The block assignment is determined externally by the target construction, for example through a common future temporal window or spatiotemporal target region. The latent mode $M$ does not determine token membership; conditioned on a defined target block, one trajectory-level mode is shared across its tokens and may be sampled from, or marginalized under, $\Cat(\boldsymbol{\pi}_\phi)$.} $M$. The requirement is therefore \emph{local trajectory coherence}: a component may specialize to a recognizable future regime within a predictive context without being required to retain the same semantic meaning globally. As with finite mixture models generally, component labels remain exchangeable and may permute, split, or share semantic regimes.

\subsection{Level III: Distributional consistency across prediction horizons}
\label{sec:level3-consistency}

\textit{Temporal coherence} within a directly predicted trajectory does not by itself guarantee that the learned dynamics are \emph{consistent} across prediction horizons. A model may produce a plausible coherent future when predicting $h$ steps ahead directly, yet produce a different predictive belief when the learned one-step transition is recursively propagated to the same horizon. Level III therefore introduces a second structural requirement: \textit{the direct and recursively composed predictive distributions should approximately describe the same future law}.

Let
\begin{equation}
p_\phi^{\mathrm{dir},h}
\left(
Z_{t+h}
\mid
Z_t,\xi_{t:t+h-1}
\right)
\label{eq:direct-horizon-law}
\end{equation}
denote the horizon-$h$ marginal obtained directly from the trajectory-level predictor. In contrast, recursively composing the one-step MDN transition in \cref{eq:mdn-transition} gives
\begin{equation}
p_\phi^{\mathrm{comp},h}
\left(
z_{t+h}\mid z_t,\xi_{t:t+h-1}
\right)
=
\int
\prod_{j=0}^{h-1}
p_\phi^{(1)}
\left(
z_{t+j+1}
\mid
z_{t+j},\xi_{t+j}
\right)
\,dz_{t+1:t+h-1}.
\label{eq:composed-density}
\end{equation}
For a correctly specified Markov transition model, these two computational paths target the same $h$-step conditional distribution. We therefore seek the approximate distributional relation
\begin{equation}
p_\phi^{\mathrm{dir},h}
\approx
p_\phi^{\mathrm{comp},h},
\qquad
h>1,
\label{eq:ck-approx}
\end{equation}
which is the learned analogue of the Chapman--Kolmogorov consistency condition \citep{huang2026HMMVJEPA}.

\begin{figure}[t]
\centering
\begin{tikzpicture}[
    x=1cm,y=1cm,
    >=Latex,
    box/.style={
        draw,
        rounded corners,
        align=center,
        minimum height=0.82cm,
        inner sep=4pt,
        font=\small
    },
    dist/.style={
        draw,
        rounded corners,
        align=center,
        minimum height=0.82cm,
        inner sep=4pt,
        font=\small
    },
    arr/.style={->,thick},
    darr/.style={->,thick,dashed},
    note/.style={font=\scriptsize,align=center}
]

\node[box, minimum width=1.5cm] (zt) at (0,0) {$Z_t$};

\node[box, minimum width=3.0cm] (direct) at (3.4,1.6)
{direct horizon-$h$\\trajectory predictor};

\node[dist, minimum width=3.2cm] (pdir) at (8.0,1.6)
{$p_\phi^{\mathrm{dir},h}(Z_{t+h}\mid Z_t,\xi)$};

\node[box, minimum width=2.2cm] (step) at (3.0,-1.6)
{one-step\\MDN};

\node[box, minimum width=1.8cm] (mid) at (5.4,-1.6)
{$Z_{t+1},\ldots$};

\node[box, minimum width=2.4cm] (comp) at (7.9,-1.6)
{recursive\\composition};

\node[dist, minimum width=3.4cm] (pcomp) at (12.2,-1.6)
{$p_\phi^{\mathrm{comp},h}(Z_{t+h}\mid Z_t,\xi)$};

\draw[arr] (zt) -- (direct);
\draw[arr] (direct) -- (pdir);

\draw[arr] (zt) -- (step);
\draw[arr] (step) -- (mid);
\draw[arr] (mid) -- (comp);
\draw[arr] (comp) -- (pcomp);

\draw[darr] (pdir.south east) to[bend left=12] node[right,note]
{distributional\\agreement} (pcomp.north east);
\draw[darr] (pcomp.north west) to[bend left=12] (pdir.south west);

\node[note] at (5.6,2.45)
{one direct prediction to horizon $h$};

\node[note] at (7.6,-2.65)
{$h$-step law obtained by recursively applying the learned one-step transition};

\end{tikzpicture}
\caption{Level-III distributional path consistency. The same horizon-$h$ predictive law can be reached by two computational paths: a direct prediction from $Z_t$ (upper) or recursive composition of the learned one-step MDN transition (lower). Level III regularizes the resulting distributions towards agreement. The comparison is distributional and therefore does not require component labels to align between the two paths.}
\label{fig:level3-path-consistency}
\end{figure}

The distinction from Level II is important. Level II concerns \emph{coherence within one directly predicted future block}: the same trajectory-level mode couples the horizons of that prediction. Level III instead concerns \emph{agreement between different prediction paths}: a directly predicted future belief should be compatible with the belief obtained by recursively propagating the learned dynamics.

For nonlinear MDN transitions, recursive composition is generally not closed within a finite $K$-component GMM family. The mixture weights, means, and covariances at later steps depend on random intermediate states, so the composed law cannot in general be obtained simply by manipulating the parameters of the original $K$ components. We therefore represent $p_\phi^{\mathrm{comp},h}$ by ancestral samples and compare it with samples from $p_\phi^{\mathrm{dir},h}$ using a permutation-invariant distributional discrepancy.

Let $\mathcal E(P,Q)$ denote the \textit{energy distance} between distributions $P$ and $Q$. We define the multi-horizon consistency objective
\begin{equation}
\cL_{\mathrm{CK}}
=
\sum_{\substack{h\in\mathcal H\\h>1}}
w_h\,
\mathcal E
\left(
p_\phi^{\mathrm{dir},h},
p_\phi^{\mathrm{comp},h}
\right),
\label{eq:ck-loss}
\end{equation}
where $\mathcal H$ is the set of horizons at which consistency is evaluated and $w_h\geq0$ controls their relative importance. Because energy distance acts on complete predictive distributions rather than labeled mixture components, it is invariant to permutations of the component indices and does not require component-wise alignment. Other sample-based discrepancies, such as maximum mean discrepancy (MMD) \citep{Gretton2012MMD}, can be used analogously.

\begin{proposition}[Chapman--Kolmogorov target]
\label{prop:ck}
Suppose that the one-step MDN transition in \cref{eq:mdn-transition} equals the true time-indexed Markov transition kernel and that the direct horizon predictor represents the corresponding true $h$-step transition law. Then
\begin{equation*}
p_\phi^{\mathrm{dir},h}
=
p_\phi^{\mathrm{comp},h}
\end{equation*}
for every $h$, and therefore any metric discrepancy between the two predictive distributions is zero.
\end{proposition}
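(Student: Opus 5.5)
The plan is to show that both sides equal the true $h$-step conditional law, using the Chapman--Kolmogorov identity for time-indexed Markov kernels. Write the true one-step kernels as $P_{t+j}(z,\xi_{t+j},dz')$ and the true $h$-step law as $P^{(h)}_{t}(z_t,\xi_{t:t+h-1},\cdot)$. The latter is defined as the conditional law of $Z_{t+h}$ given $Z_t=z_t$ and the side information. By \cref{prop:valid-kernel}, each MDN one-step density defines a valid probability kernel, so the integral in \cref{eq:composed-density} is well defined.

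The key step is an induction on $h$. For $h=1$, \cref{eq:composed-density} reduces to the one-step MDN density, which by hypothesis equals the true one-step kernel. For the inductive step, the integrand in \cref{eq:composed-density} is nonnegative, so Tonelli's theorem lets us integrate out $z_{t+1},\dots,z_{t+h-2}$ first. This gives
\[
p_\phi^{\mathrm{comp},h}(z_{t+h}\mid z_t,\cdot)=\int p_\phi^{\mathrm{comp},h-1}(z_{t+h-1}\mid z_t,\cdot)\,p_\phi^{(1)}(z_{t+h}\mid z_{t+h-1},\xi_{t+h-1})\,dz_{t+h-1}.
\]
Next, apply the Markov property of the true process: conditioned on $Z_{t+h-1}$ and the side information, $Z_{t+h}$ is independent of $Z_t$. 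The tower property then gives the Chapman--Kolmogorov recursion $P^{(h)}_t=P^{(h-1)}_t P_{t+h-1}$. Substituting the inductive hypothesis and the one-step equality shows that $p_\phi^{\mathrm{comp},h}$ is the density of $P^{(h)}_t$.

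Finally, by hypothesis the direct predictor $p_\phi^{\mathrm{dir},h}$ also represents $P^{(h)}_t$. The two densities therefore define the same probability measure, agreeing Lebesgue-almost everywhere, which is all that matters for distributions. Any metric $D$ on distributions satisfies $D(P,P)=0$, so the discrepancy vanishes. In particular, the energy distance in \cref{eq:ck-loss} depends only on the laws, so $\cL_{\mathrm{CK}}=0$.

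The main obstacle is the inductive step, specifically justifying the Markov reduction. We must make precise that "true time-indexed Markov transition kernel" includes conditional independence of $Z_{t+h}$ from the earlier states given $Z_{t+h-1}$ and $\xi_{t+h-1}$. We must also make precise that the side information $\xi_{t:t+h-1}$ is treated as fixed, or as exogenous, conditioning. I would state this interpretation explicitly and then handle measurability via the measurable parameter maps from \cref{prop:valid-kernel}.
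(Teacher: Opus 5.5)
Your proposal is correct and follows the same route as the paper's proof. You identify $p_\phi^{\mathrm{comp},h}$ with the true $h$-step law by composing the true one-step kernel over the intermediate states, identify $p_\phi^{\mathrm{dir},h}$ with that same law by hypothesis, and conclude that any metric discrepancy vanishes. The paper asserts the composition identity, whereas you prove it by induction via Tonelli and the Markov property, and you also make explicit that equality holds almost everywhere and that the side information acts as exogenous conditioning.
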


\begin{proof}
For a Markov process, the true $h$-step transition law is obtained by recursively composing the true one-step kernel and integrating over the intermediate states. Hence, if the learned one-step transition equals the true kernel,
\[
p_\phi^{\mathrm{comp},h}
=
p^\star_h.
\]
If the separately parameterized direct predictor also represents the same true horizon-$h$ law, then
\[
p_\phi^{\mathrm{dir},h}
=
p_\phi^{\mathrm{comp},h}
=
p^\star_h.
\]
Any metric discrepancy between the two distributions is therefore zero.
\end{proof}

The proposition describes the population target of $\cL_{\mathrm{CK}}$: the purpose of the consistency term is not to force two arbitrary predictions to agree, but to encourage two computational paths to \textit{represent the same underlying multi-step dynamics}. Agreement alone is not sufficient for predictive correctness, so both the direct and composed distributions must also remain faithful to the observed target law.

\subsection{Latent transition-mode interpretation}
\label{sec:hmm-simple}

The trajectory-level mode above describes the structure of a \emph{direct multi-horizon prediction}. The one-step MDN kernel from \cref{eq:mdn-transition} has a related but distinct latent-variable interpretation. For a single transition, introduce a local mixture indicator
\begin{equation}
M_{t+1}
\sim
\Cat\left(
\boldsymbol{\pi}_\phi(Z_t,\xi_t)
\right),
\qquad
Z_{t+1}
\mid
M_{t+1}=k,Z_t,\xi_t
\sim
\N\left(
\mu_{\phi,k}(Z_t,\xi_t),
\Sigma_{\phi,k}(Z_t,\xi_t)
\right).
\label{eq:hmm-mode}
\end{equation}
Here, $Z_t$ is the current continuous predictive JEPA state, $M_{t+1}$ is a discrete latent \emph{transition mode}, and $Z_{t+1}$ is the resulting next predictive state.

The corresponding conditional structure is more accurately written as
\begin{equation}
(Z_t,\xi_t)
\longrightarrow
M_{t+1},
\qquad
(Z_t,\xi_t,M_{t+1})
\longrightarrow
Z_{t+1},
\label{eq:transition-mode-graph}
\end{equation}
because the mode-specific mean and covariance of $Z_{t+1}$ depend directly on $(Z_t,\xi_t)$ as well as on the selected mixture component.

The trajectory-level variable $M$ in \cref{eq:traj-mode} and the local one-step indicator $M_{t+1}$ in \cref{eq:hmm-mode} therefore play related but different roles. The former indexes one complete \emph{directly predicted} future trajectory, whereas the latter indexes one stochastic transition of the recursive Markov kernel. In particular, recursive rollout of the one-step kernel does not require the same discrete label to persist across time. MDN-VJEPA does not introduce a separate dynamics model
\begin{equation*}
p(M_{t+1}\mid M_t),
\end{equation*}
nor does it assume that the component labels form a persistent discrete Markov chain.

This gives a limited connection to hidden-state models: the stochastic transition contains an unobserved categorical variable that selects among mode-specific continuous transition laws. The analogy should not be taken to imply a conventional HMM with persistent discrete states; the core dynamical state remains the continuous JEPA representation.

\paragraph{Semantic interpretation.}
The probabilities $\pi_k(Z_t,\xi)$ quantify predictive support for alternative latent transition hypotheses. In favorable problems such as the robot navigation setting, components may specialize into recognizable regimes such as ``turn left'', ``turn right'', ``stop'', or ``continue''. Such specialization is an empirical property rather than an identifiability guarantee: components may permute, split, merge, or share a semantic regime. We therefore assess semantic specialization through mode-conditioned probes, assignment stability, or known event labels rather than assigning meaning to a numerical component index a priori.
 
\subsection{Decomposing predictive uncertainty}

The latent-mode representation separates two qualitatively different contributions to conditional predictive uncertainty. After conditioning on a trajectory mode, each covariance $\Sigma_{k,h}$ describes local variability around that candidate future. Before the mode is known, additional uncertainty arises because the candidate future may themselves disagree.
For a fixed horizon $h$, define the Level-II conditional predictive mean
\begin{equation}
\bar\mu_h
=
\sum_{k=1}^{K}
\pi_k\mu_{k,h},
\label{eq:mixture-mean-h}
\end{equation}
where the dependence of the mixture parameters on the predictive context is again suppressed for readability. Applying the conditional law of total covariance with respect to $M$ yields the following decomposition.

\begin{proposition}[Within-mode and between-mode uncertainty]
\label{prop:uncertainty-decomp}
For the horizon-$h$ marginal of the trajectory mixture in \cref{eq:trajectory-mixture},
\begin{equation}
\operatorname{Cov}
\left(
Z_{t+h}
\mid
Z_t,\xi_{t:t+H-1}
\right)
=
\underbrace{
\sum_{k=1}^{K}
\pi_k\Sigma_{k,h}
}_{\text{within-mode uncertainty}}
+
\underbrace{
\sum_{k=1}^{K}
\pi_k
\left(
\mu_{k,h}-\bar\mu_h
\right)
\left(
\mu_{k,h}-\bar\mu_h
\right)^\top
}_{\text{between-mode / branching uncertainty}}.
\label{eq:uncertainty-decomp}
\end{equation}
\end{proposition}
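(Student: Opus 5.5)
The plan is to apply the conditional law of total covariance with respect to the trajectory-level mode $M$ from \cref{eq:traj-mode}, with everything implicitly conditioned on the predictive context $(Z_t,\xi_{t:t+H-1})$. By the marginalization in \cref{eq:level2-marginal}, equivalently by the latent-variable construction of \cref{eq:trajectory-mixture}, the horizon-$h$ marginal is obtained by drawing $M\sim\Cat(\boldsymbol{\pi})$ and then, given $M=k$, drawing $Z_{t+h}\sim\N(\mu_{k,h},\Sigma_{k,h})$. The other horizons in the within-mode product integrate to one and play no role. So the joint law of $(M,Z_{t+h})$ is fully explicit.

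Next I will compute the two conditional moments given $M=k$: $\E[Z_{t+h}\mid M=k]=\mu_{k,h}$ and $\operatorname{Cov}(Z_{t+h}\mid M=k)=\Sigma_{k,h}$, both finite since the components are Gaussian. The law of total covariance,
\[
\operatorname{Cov}(Z_{t+h})=\E\bigl[\operatorname{Cov}(Z_{t+h}\mid M)\bigr]+\operatorname{Cov}\bigl(\E[Z_{t+h}\mid M]\bigr),
\]
then gives $\sum_k\pi_k\Sigma_{k,h}$ for the first term. The second term is the covariance of the discrete random vector taking value $\mu_{k,h}$ with probability $\pi_k$. Its mean is $\bar\mu_h$ as in \cref{eq:mixture-mean-h}, so it equals $\sum_k\pi_k(\mu_{k,h}-\bar\mu_h)(\mu_{k,h}-\bar\mu_h)^\top$. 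Adding the two terms yields \cref{eq:uncertainty-decomp}.

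As a self-contained cross-check that avoids citing the total-covariance identity, I would also compute directly. From the mixture density, $\E[Z_{t+h}Z_{t+h}^\top]=\sum_k\pi_k(\Sigma_{k,h}+\mu_{k,h}\mu_{k,h}^\top)$. Subtracting $\bar\mu_h\bar\mu_h^\top$ and using $\sum_k\pi_k=1$ and $\sum_k\pi_k\mu_{k,h}=\bar\mu_h$, one rewrites $\sum_k\pi_k\mu_{k,h}\mu_{k,h}^\top-\bar\mu_h\bar\mu_h^\top$ as the centered between-mode sum.

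There is no serious obstacle here. The only point needing care is justifying that the marginal at horizon $h$ really is the stated $K$-component mixture with the shared weights $\pi_k$. I will take this from \cref{eq:level2-marginal}, i.e.\ from integrating out the other horizons in \cref{eq:trajectory-mixture}, together with the existence of second moments.
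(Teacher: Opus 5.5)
Your proposal is correct and matches the paper's proof: condition on the trajectory mode $M$, take $\mu_{k,h}$ and $\Sigma_{k,h}$ as the conditional moments, and apply the conditional law of total covariance to get the within-mode and between-mode terms. Your direct second-moment cross-check via $\E[Z_{t+h}Z_{t+h}^\top]=\sum_k\pi_k(\Sigma_{k,h}+\mu_{k,h}\mu_{k,h}^\top)$ is also valid; the paper omits it, and the argument does not need it.
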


\begin{proof}
Conditioned on $M=k$, the horizon-$h$ predictive mean and covariance are $\mu_{k,h}$ and $\Sigma_{k,h}$. The conditional law of total covariance gives
\begin{align*}
\operatorname{Cov}(Z_{t+h}\mid Z_t,\xi)
&=
\E\!\left[
\operatorname{Cov}(Z_{t+h}\mid M,Z_t,\xi)
\mid Z_t,\xi
\right]
\\
&\quad+
\operatorname{Cov}\!\left(
\E[Z_{t+h}\mid M,Z_t,\xi]
\mid Z_t,\xi
\right),
\end{align*}
which evaluates respectively to
$
\sum_k\pi_k\Sigma_{k,h}
$
and
$
\sum_k
\pi_k
(\mu_{k,h}-\bar\mu_h)
(\mu_{k,h}-\bar\mu_h)^\top
$.
\end{proof}

The first term in Eq.\ref{eq:uncertainty-decomp} is the expected local uncertainty remaining after conditioning on a future mode; the second measures uncertainty induced by separation among the competing future hypotheses. A single Gaussian may match the aggregate first two moments of such a mixture, but it does not expose whether its dispersion arises from local variability around one future or from several individually concentrated but separated alternatives. MDN-VJEPA represents these contributions explicitly.

This decomposition concerns \emph{conditional predictive uncertainty} under the learned transition model. It should not by itself be interpreted as a decomposition into aleatoric and epistemic uncertainty: uncertainty in the model parameters or representation function would require an additional Bayesian, ensemble, or related treatment.

\section{A Compositional Training Objective}
\label{sec:objective}

The preceding section specifies \emph{what} the three levels of MDN-VJEPA should represent: Level~I preserves multimodal horizon-wise predictions, Level~II couples those predictions into \emph{coherent} trajectory-level alternatives, and Level~III further seeks \emph{consistent} agreement between direct and recursively composed predictive laws. We now describe \emph{how} these models are trained. The level-wise predictive likelihood determines the structure of the learned future distribution; predictive-information regularization and mixture safeguards act across levels; and Level~III additionally introduces a recursively trained one-step transition together with distributional path consistency.

\subsection{Level-wise predictive likelihoods}

Let
\[
Y_{t,H}^{\mathrm T}
=
\left(
Z_{t+1}^{\mathrm T},
\ldots,
Z_{t+H}^{\mathrm T}
\right)
\]
denote a stochastic target trajectory obtained from the EMA target encoder, with the target representations treated as stop-gradient for predictive supervision.

\paragraph{Level I: horizon-wise multimodal likelihood.}
Level~I uses the \emph{product-of-mixtures} construction in \cref{eq:level1-trajectory}. Because the future horizons are conditionally factorized given the current predictive context, its direct predictive \emph{negative log-likelihood} (NLL) is
\begin{equation}
\cL_{\mathrm{dir}}^{\mathrm I}
=
\E
\left[
-
\sum_{h=1}^{H}
\log
p_\phi
\left(
Z_{t+h}^{\mathrm T}
\mid
Z_t,\xi_{t:t+H-1}
\right)
\right],
\label{eq:level1-nll}
\end{equation}
where each horizon-wise predictive marginal $p_\phi$ is the Gaussian mixture in \ref{eq:level1-marginal}. Thus, each horizon can learn a multimodal predictive distribution, but the likelihood does not require the same mixture identity to explain the complete future trajectory.

\paragraph{Level II: coherent trajectory likelihood.}
Level~II instead trains the shared-mode trajectory distribution in \cref{eq:trajectory-mixture}. We define
\begin{equation}
\cL_{\mathrm{dir}}^{\mathrm{II}}
\equiv
\cL_{\mathrm{MDN}}
=
\E_{Z_t\sim q_\theta,\,
Y_{t,H}^{\mathrm T}\sim q_{\bar\theta}}
\left[
-\log
p_\phi^{\mathrm{II}}
\left(
Y_{t,H}^{\mathrm T}
\mid
Z_t,\xi_{t:t+H-1}
\right)
\right].
\label{eq:mdn-nll}
\end{equation}
For a target trajectory $Y_{t,H}^{\mathrm T}$, the corresponding log likelihood can be written as\footnote{Substituting the Level-II trajectory mixture from \cref{eq:trajectory-mixture} gives
\[
p_\phi^{\mathrm{II}}(Y_{t,H}^{\mathrm T}\mid Z_t,\xi)
=
\sum_{k=1}^{K}
\pi_{\phi,k}
\prod_{h=1}^{H}
\N\!\left(
Z_{t+h}^{\mathrm T};
\mu_{\phi,k,h},
\Sigma_{\phi,k,h}
\right).
\]
Taking the logarithm yields
\[
\log p_\phi^{\mathrm{II}}(Y_{t,H}^{\mathrm T}\mid Z_t,\xi)
=
\log
\sum_{k=1}^{K}
\pi_{\phi,k}
\prod_{h=1}^{H}
\N\!\left(
Z_{t+h}^{\mathrm T};
\mu_{\phi,k,h},
\Sigma_{\phi,k,h}
\right).
\]
The logarithm cannot be distributed over the outer sum. Instead, for each component define
\[
a_k
=
\log\pi_{\phi,k}
+
\sum_{h=1}^{H}
\log
\N\!\left(
Z_{t+h}^{\mathrm T};
\mu_{\phi,k,h},
\Sigma_{\phi,k,h}
\right),
\]
where the product over horizons becomes a sum because
$
\log\prod_h p_h=\sum_h\log p_h
$.
Then
\[
\log p_\phi^{\mathrm{II}}(Y_{t,H}^{\mathrm T}\mid Z_t,\xi)
=
\log\sum_{k=1}^{K}\exp(a_k)
=
\operatorname{logsumexp}_{k}(a_k).
\]
Thus, the sum over horizons occurs \emph{within} each trajectory component, whereas the outer mixture over components is retained through the log-sum-exp operation.}
\begin{equation}
\begin{aligned}
\log
p_\phi^{\mathrm{II}}
\left(
Y_{t,H}^{\mathrm T}
\mid
Z_t,\xi_{t:t+H-1}
\right)
=
\operatorname{logsumexp}_{k}
\Bigg[
\log \pi_{\phi,k}
+
\sum_{h=1}^{H}
\log
\N\left(
Z_{t+h}^{\mathrm T};
\mu_{\phi,k,h},
\Sigma_{\phi,k,h}
\right)
\Bigg].
\end{aligned}
\label{eq:level_two_logsumexp}
\end{equation}
The same component index $k$ appears throughout the sum over horizons. The likelihood therefore asks \emph{one mixture component to explain the complete target trajectory} rather than allowing an independently selected component at every future horizon.

For numerical stability, when evaluating \cref{eq:level_two_logsumexp}, we use the standard shifted log-sum-exp evaluation\footnote{For scalars $x_1,\ldots,x_K$,
\[
\operatorname{logsumexp}(x_1,\ldots,x_K)
=
m+
\log
\sum_{k=1}^{K}
\exp(x_k-m),
\qquad
m=\max_k x_k.
\]
The largest exponent is therefore zero and all remaining exponentials are at most one, reducing numerical underflow or overflow when component likelihoods differ substantially in scale \citep{huang2025gma}.}.

\paragraph{One-step transition likelihood.}
Level~III requires a recursive predictive path in addition to the direct trajectory predictor. The one-step MDN kernel in \cref{eq:mdn-transition} must therefore itself be fitted to observed latent transitions rather than being trained only indirectly through the consistency objective. We use
\begin{equation}
\cL_{\mathrm{step}}
=
\E
\left[
-\log
p_\phi^{(1)}
\left(
Z_{t+1}^{\mathrm T}
\mid
Z_t,\xi_t
\right)
\right],
\label{eq:step-nll}
\end{equation}
where $p_\phi^{(1)}$ denotes the learned \emph{one-step} transition kernel.\footnote{The superscript $(1)$ denotes a one-step transition and should not be confused with the Roman superscript $\mathrm I$ used for Level~I of the predictive hierarchy. Thus, $p_\phi^{(1)}$ is the transition kernel recursively composed in Level~III, whereas $p_\phi^{\mathrm I}$ denotes the horizon-wise product-of-mixtures construction.}
When the available training trajectory contains several teacher-forced one-step transitions, \cref{eq:step-nll} is averaged over the corresponding transition pairs.

The direct Level-III predictor retains the Level-II trajectory semantics,
\[
\cL_{\mathrm{dir}}^{\mathrm{III}}
\equiv
\cL_{\mathrm{dir}}^{\mathrm{II}},
\]
while the separately supervised one-step kernel supplies the recursive path required by \cref{eq:composed-density}. A one-step transition may also be trained for a Level-II model when recursive rollout is required for evaluation; when no recursive transition is instantiated, its loss coefficient is set to zero. Here and below, $\phi$ collectively denotes the predictive parameters, including the direct and one-step heads when both are present.

\subsection{Predictive-information regularization}

Predictive likelihood specifies how future latent states should be modeled, but does not by itself determine what information the learned representation should retain from the observations. We therefore retain the probabilistic representation regularization of VJEPA \citep{huang2026vjepa} together with the explicit context compression of PIB-VJEPA \citep{huang2026ibvjepa}. Because the predictive and representation objectives use the encoder in different roles, we first make the corresponding gradient paths explicit.

\paragraph{Three encoder roles.}
We distinguish three encoder evaluations:
\[
\underbrace{
q_\theta
\left(
Z_t\mid X_{\leq t}
\right)
}_{\text{online context path}},
\qquad
\underbrace{
q_\theta^{\mathrm T}
\left(
Z_{t+h}\mid X_{t+h}
\right)
}_{\text{online target-view path}},
\qquad
\underbrace{
q_{\bar\theta}
\left(
Z_{t+h}\mid X_{t+h}
\right)
}_{\text{EMA target path}}.
\]
These do not correspond to three independently parameterized encoders. The first two use the same trainable parameters $\theta$ but evaluate the encoder on different inputs and for different purposes. The superscript $\mathrm T$ in $q_\theta^{\mathrm T}$ denotes the \emph{target-view role}, not a separate parameter set. The third distribution uses the EMA parameters $\bar\theta$.

The online context path
\[
q_\theta
\left(
Z_t\mid X_{\leq t}
\right)
\]
maps the observed history to the current predictive state $Z_t$ supplied to the MDN predictor. It is trainable and therefore receives gradients from predictive learning and context-side representation regularization.

The EMA target path
\[
q_{\bar\theta}
\left(
Z_{t+h}\mid X_{t+h}
\right)
\]
maps the future target observation to the stochastic target representation
$
Z_{t+h}^{\mathrm T}
$
used for predictive supervision. These target representations are treated with stop-gradient, and $\bar\theta$ is not directly optimized by gradient descent. Instead, it tracks the online parameters through an \emph{exponential-moving-average} (EMA) update such as
\[
\bar\theta
\leftarrow
\tau\bar\theta
+
(1-\tau)\theta,
\qquad
0<\tau<1.
\]

Finally, the online target-view path
\[
q_\theta^{\mathrm T}
\left(
Z_{t+h}\mid X_{t+h}
\right)
\]
evaluates the trainable encoder on the same target view. Its purpose is to provide a gradient path for target-side representation regularization. Thus, $q_\theta^{\mathrm T}$ receives the corresponding regularization gradient, whereas its EMA counterpart $q_{\bar\theta}$ supplies the stop-gradient target used by the predictive likelihood and inherits changes in $\theta$ only through the EMA update.

Schematically,
\[
X_{\leq t}
\longrightarrow
q_\theta
\longrightarrow
Z_t
\longrightarrow
\text{prediction},
\]
while the target observation follows two parallel evaluations,
\[
X_{t+h}
\longrightarrow
\begin{cases}
q_\theta^{\mathrm T}
\longrightarrow
\text{target-side representation regularization},
\\[1mm]
q_{\bar\theta}
\longrightarrow
Z_{t+h}^{\mathrm T}
\longrightarrow
\text{stop-gradient predictive target}.
\end{cases}
\]

\paragraph{Target-side representation regularization.}
Let $r_T$ denote a reference distribution for the stochastic target representation. We regularize the trainable target-view path through
\begin{equation}
\cL_T
=
\E
\left[
\sum_{h=1}^{H}
\KL
\left(
q_\theta^{\mathrm T}
\left(
Z_{t+h}\mid X_{t+h}
\right)
\,\middle\|\,
r_T(Z)
\right)
\right].
\label{eq:target-kl}
\end{equation}
The role of $\cL_T$ is to \emph{regularize the distributional geometry of the future representation while preserving a valid gradient path through the online encoder}. The EMA target encoder is not optimized by this KL term directly; instead, the resulting updates to $\theta$ are transferred gradually to $\bar\theta$ through EMA.

\paragraph{Context-side information compression.}
Let $r_C$ denote a reference distribution for the current predictive state. We define
\begin{equation}
\cL_C
=
\E
\left[
\KL
\left(
q_\theta
\left(
Z_t\mid X_{\leq t}
\right)
\,\middle\|\,
r_C(Z)
\right)
\right].
\label{eq:context-kl}
\end{equation}
Unlike $\cL_T$, which regularizes the target-side stochastic representation, $\cL_C$ acts directly on the state from which future prediction is performed. Its purpose is to discourage $Z_t$ from retaining unnecessary information about the observed history while the predictive likelihood simultaneously requires $Z_t$ to preserve information useful for predicting the future.

Under the standard variational information-bottleneck decomposition,\footnote{This is the temporal state KL-to-prior decomposition derived for full PIB-VJEPA in \citet[Eq.~(42)]{huang2026ibvjepa}; the corresponding mutual-information upper bound is stated explicitly in Eq.~(29) therein. Their state prior $p_0^S(Z_t)$ corresponds to the reference distribution $r_C(Z)$ used here. Defining the aggregated state posterior
$
q_\theta(Z_t)
=
\int
p_{\mathrm{data}}(X_{\leq t})
q_\theta(Z_t\mid X_{\leq t})
\,dX_{\leq t},
$
the expected KL decomposes into the state information $\MI(X_{\leq t};Z_t)$ plus the nonnegative aggregated-posterior mismatch $\KL(q_\theta(Z_t)\|r_C(Z))$.}
\begin{align*}
\E_{X_{\leq t}}
\KL
\left(
q_\theta(Z_t\mid X_{\leq t})
\,\middle\|\,
r_C(Z)
\right)
&=
\MI(X_{\leq t};Z_t)
+
\KL
\left(
q_\theta(Z_t)
\,\middle\|\,
r_C(Z)
\right)
\\
&\geq
\MI(X_{\leq t};Z_t),
\end{align*}
and therefore
\[
\MI(X_{\leq t};Z_t)
\leq
\E_{X_{\leq t}}
\KL
\left(
q_\theta(Z_t\mid X_{\leq t})
\,\middle\|\,
r_C(Z)
\right).
\]
Minimizing $\cL_C$ therefore provides a \emph{variational upper bound} on the information carried from the observed history into the predictive state \citep{alemi2017vib,huang2026ibvjepa}. Combined with predictive supervision, this creates the desired information-bottleneck tradeoff: compress non-predictive history while retaining information required to predict the future.

The two representation losses therefore have complementary roles:
\[
\begin{aligned}
\cL_T
&:
&&
\text{regularize the stochastic target-side representation},
\\
\cL_C
&:
&&
\text{compress observed-history information in the predictive state}.
\end{aligned}
\]
They are orthogonal to the Level-I--III predictive hierarchy: $\cL_T$ and $\cL_C$ determine how the latent representation is regularized, whereas the level-specific predictive likelihood determines the structure of the learned future distribution. Setting $\beta_C=0$ removes the explicit context-information bottleneck, while $\beta_C>0$ realizes the full PIB--JEPA--MDN synthesis introduced in \cref{eq:conceptual-synthesis}.

\subsection{Preventing degenerate solutions}

Even with predictive and representation supervision, a multimodal probabilistic JEPA can converge to qualitatively different degenerate solutions. These failures occur at different levels of the model and therefore require distinct safeguards.

\paragraph{Representation collapse.}
At the representation level, the encoder may map many distinct inputs to nearly the same latent state. The EMA/stop-gradient target path provides the \emph{asymmetric} predictive-learning structure used by JEPA, but does not by itself constitute a guarantee against collapse. Representation stability is therefore supported jointly by predictive supervision, the VJEPA/PIB representation regularization, and, where necessary, standard variance or covariance constraints on the learned representations.

\paragraph{Component collapse.}
Even with an informative representation, the mixture predictor can suffer from optimization collapse if only one or a few components receive meaningful posterior responsibility despite the data containing separated predictive branches.

For Levels~II--III, define the \emph{component log score} for target trajectory $Y_i$ as
\[
\eta_{ik}
=
\log \pi_{ik}
+
\sum_{h=1}^{H}
\log
\N\left(
Z_{i,t+h}^{\mathrm T};
\mu_{ik,h},
\Sigma_{ik,h}
\right).
\]
The \emph{posterior responsibility} of component $k$ is then
\[
r_{ik}
=
\Pr(M_i=k\mid Y_i,Z_{i,t},\xi_i)
=
\frac{
\exp(\eta_{ik})
}{
\sum_{j=1}^{K}
\exp(\eta_{ij})
}.
\]
Thus, $\pi_{ik}$ is the context-dependent \emph{predictive mixture weight} before observing the target trajectory, whereas $r_{ik}$ is the \emph{posterior responsibility} after the target has been observed. For Level~I, the analogous responsibility can be computed separately at each horizon; when $H=1$, the two constructions coincide.

For a minibatch of size $B$, let
\begin{equation}
\bar r_k
=
\frac{1}{B}
\sum_{i=1}^{B}
r_{ik}
\label{eq:average_component_responsibility}
\end{equation}
denote the average responsibility of component $k$. For the fixed-$K$ core model, we use the weak dead-component penalty
\begin{equation}
\cR_{\mathrm{use}}
=
\sum_{k=1}^{K}
\left[
\rho_{\min}-\bar r_k
\right]_+^2,
\label{eq:usage-reg}
\end{equation}
where $[x]_+ \coloneqq \max(x,0)$ denotes the positive-part operator. The penalty is therefore inactive for components whose average responsibility exceeds the small threshold $\rho_{\min}$ and increases only when a component receives persistently negligible responsibility. It discourages accidental dead components without forcing uniform mixture probabilities. A genuinely unambiguous predictive context may still place almost all of its probability on a single component.

The core method treats $K$ as a fixed maximum mixture size and uses $K_{\mathrm{eff}}$ from \cref{eq:effective-k} as a utilization diagnostic. When intentional component sparsity or pruning is desired, the usage safeguard should be disabled or reformulated rather than forcing every available component to remain active.

\paragraph{Variance collapse.}
A separate degeneracy can occur within an individual Gaussian component: maximum-likelihood training may drive its covariance towards zero around isolated targets, producing pathological likelihood spikes. The explicit variance floor
$
\sigma_{\min}
$
in \cref{eq:sigma-floor} provides the default protection against this failure. Additional covariance penalties can be introduced if required, but are not necessary components of the core objective.

Together, these safeguards address three complementary failure modes: preserving an informative latent representation, maintaining useful mixture-component participation, and preventing pathological collapse within individual Gaussian components.

\subsection{Level-III consistency during optimization}

Level~III adds the distributional consistency objective $\cL_{\mathrm{CK}}$ from \cref{eq:ck-loss}. At the population level, the requirement is symmetric: \emph{the direct and recursively composed distributions should represent the same underlying $h$-step predictive law}. The training implementation additionally determines how gradients from this discrepancy are routed through the two predictive paths.

In the Level-III realization evaluated in Experiment~4, samples from the separately supervised direct predictor are treated as a stop-gradient reference \emph{within the CK term only}. Schematically,
\[
\widehat{\cL}_{\mathrm{CK}}
=
\sum_{\substack{h\in\mathcal H\\h>1}}
w_h\,
\frac{
\widehat{\mathcal E}_U
\left(
\sg\!\left[
S_\phi^{\mathrm{dir},h}
\right],
S_\phi^{\mathrm{comp},h}
\right)
}{
s_h+\epsilon
},
\]
where
$
S_\phi^{\mathrm{dir},h}
$
and
$
S_\phi^{\mathrm{comp},h}
$
denote samples from the direct and recursively composed predictive laws, respectively;
$\sg[\cdot]$ denotes stop-gradient;
$\widehat{\mathcal E}_U$ is an unbiased U-statistic estimate of energy distance;
$s_h>0$ is a fixed horizon-specific normalization scale accounting for variation in the natural dispersion of the predictive law across horizons; and
$\epsilon>0$ is a small numerical constant preventing division by zero.
As in \cref{eq:ck-loss}, $\mathcal H$ denotes the selected set of consistency horizons and $w_h\geq0$ their relative weights.
Consequently, CK gradients regularize the recursive one-step dynamics towards the separately supervised direct belief, while the direct predictor continues to receive its ordinary gradients from the trajectory likelihood in \cref{eq:mdn-nll}. This stop-gradient choice affects optimization only; the forward consistency criterion remains the distributional comparison defined in \cref{eq:ck-loss}.

Agreement alone is not sufficient for predictive correctness. Two predictive paths could agree with each other while both disagreeing with the observed target law. The CK term must therefore be used together with direct trajectory supervision and the one-step likelihood in \cref{eq:step-nll}. This motivates evaluating Level~III on both direct--composed discrepancy and the predictive fidelity of the two paths.

\subsection{Overall level-wise objective}

Having defined the predictive, representation, utilization, and consistency terms separately, we now combine them into the level-aware MDN-VJEPA objective:
\begin{equation}
\begin{aligned}
\cL_{\mathrm{MDN\text{-}VJEPA}}^{(\ell)}
={}&
\underbrace{
\cL_{\mathrm{dir}}^{(\ell)}
}_{\text{direct prediction}}
+
\underbrace{
\lambda_{\mathrm{step}}
\cL_{\mathrm{step}}
}_{\text{one-step dynamics}}
\\
&+
\underbrace{
\beta_T\cL_T
+
\beta_C\cL_C
}_{\text{representation regularization}}
+
\underbrace{
\lambda_{\mathrm{use}}\cR_{\mathrm{use}}
}_{\text{component utilization}}
\\
&+
\underbrace{
\mathbf{1}\{\ell=\mathrm{III}\}
\lambda_{\mathrm{CK}}
\cL_{\mathrm{CK}}
}_{\text{distributional consistency}},
\qquad
\ell\in
\{\mathrm I,\mathrm{II},\mathrm{III}\}.
\end{aligned}
\label{eq:full-objective}
\end{equation}
Here $\mathbf{1}\{\ell=\mathrm{III}\}$ is one for Level~III and zero otherwise, so the CK term is activated only at the consistency level.

The terms in \cref{eq:full-objective} have distinct roles:
\[
\begin{aligned}
\cL_{\mathrm{dir}}^{(\ell)}
&:
&&
\text{fit the level-specific direct predictive distribution},
\\
\cL_{\mathrm{step}}
&:
&&
\text{fit the one-step transition used for recursive rollout},
\\
\cL_T
&:
&&
\text{regularize the stochastic target-side representation},
\\
\cL_C
&:
&&
\text{compress observed-history information in the predictive state},
\\
\cR_{\mathrm{use}}
&:
&&
\text{discourage accidental dead mixture components},
\\
\cL_{\mathrm{CK}}
&:
&&
\text{encourage agreement between direct and composed predictive laws}.
\end{aligned}
\]
Their definitions are given in \cref{eq:level1-nll,eq:mdn-nll,eq:step-nll,eq:target-kl,eq:context-kl,eq:usage-reg,eq:ck-loss}.

The level index $\ell$ selects the direct predictive law. Level~I uses the horizon-wise likelihood $\cL_{\mathrm{dir}}^{(\mathrm I)}$ in \cref{eq:level1-nll}, while Levels~II and III use the shared-mode trajectory likelihood $\cL_{\mathrm{dir}}^{(\mathrm{II})}$ in \cref{eq:mdn-nll}, with $\cL_{\mathrm{dir}}^{(\mathrm{III})}\equiv\cL_{\mathrm{dir}}^{(\mathrm{II})}$. Level~III additionally requires the recursively composable one-step transition and activates the CK consistency term. A Level-II model may also train the one-step transition when recursive rollout is required, but no CK constraint is imposed.

The coefficients $\beta_T$ and $\beta_C$ control target-side and context-side representation regularization, respectively, while $\lambda_{\mathrm{use}}$ controls the weak fixed-$K$ utilization safeguard. The coefficient $\lambda_{\mathrm{step}}$ controls supervised learning of the recursive one-step dynamics: it is positive when that transition is instantiated and is required for the Level-III recursive path, but may be zero for Levels~I--II when recursive rollout is not used. Finally, $\lambda_{\mathrm{CK}}$ controls the strength of the additional Level-III consistency constraint.

The representation terms $\cL_T$ and $\cL_C$ and the component-utilization safeguard $\cR_{\mathrm{use}}$ are orthogonal to the predictive hierarchy. The covariance floor $\sigma_{\min}$ in \cref{eq:sigma-floor} is likewise an architectural safeguard rather than an additional loss term, and the EMA target encoder is updated outside gradient descent.

The resulting training organization mirrors the model construction in \cref{sec:coherent-dynamics}: Level~I learns multimodal horizon-wise predictions, Level~II couples them into coherent trajectory-level alternatives, and Level~III additionally encourages consistency between the directly predicted and recursively composed predictive laws.

\section{Decision-Theoretic Consequences of Multimodal Prediction}
\label{sec:advantages}

The preceding sections establish two properties of MDN-VJEPA that become especially important when a learned world model is used for decision making. \textit{First, distinct plausible futures remain individually represented rather than being collapsed into their conditional average}. For example, if $Z_{\mathrm L}$ and $Z_{\mathrm R}$ denote two separated and equally likely valid futures in the motivating robot-navigation problem, the mean prediction
\[
\frac{1}{2}Z_{\mathrm L}
+
\frac{1}{2}Z_{\mathrm R}
\]
may lie in a region corresponding to neither realizable outcome. \textit{Second, uncertainty decomposition.} The decomposition in \cref{eq:uncertainty-decomp} separates uncertainty over \emph{which future} occurs, represented through the mixture weights and separated component means, from variability \emph{within} a particular future, represented through the component covariances. These distinctions matter beyond predictive accuracy: \textit{once a predictive belief is used for planning, replacing the complete distribution by its mean can change the resulting decision}.

\subsection{From mean-only to distributional planning}

Consider an action sequence
$
u\equiv u_{t:t+H-1}
$
over a planning horizon $H$, treating the actions as the relevant part of the side information $\xi$ and suppressing any remaining conditioning variables for readability. For the coherent Level-II/III predictor, the future latent trajectory has the conditional distribution
\begin{equation}
p_\phi
\left(
Z_{t+1:t+H}
\mid
Z_t,u
\right)
=
\sum_{k=1}^{K}
\pi_k(Z_t,u)\,
p_{\phi,k}
\left(
Z_{t+1:t+H}
\mid
Z_t,u
\right),
\label{eq:planning-mixture}
\end{equation}
where $p_{\phi,k}$ denotes the trajectory distribution conditional on mode $M=k$. The mixture weights $\pi_k(Z_t,u)$ therefore quantify the predictive support assigned to the alternative trajectory-level branches under the candidate action sequence.

\paragraph{Mean-only planning.}
Even when the complete predictive distribution in \cref{eq:planning-mixture} is available, a planner may reduce it to the conditional mean trajectory
\begin{equation}
\bar Z_{t+1:t+H}(u)
=
\E
\left[
Z_{t+1:t+H}
\mid
Z_t,u
\right].
\label{eq:planning-mean}
\end{equation}
Given a trajectory-level cost function
$
C(Z_{t+1:t+H},u)
$,
such a mean-only planner evaluates
\begin{equation}
J_{\mathrm{mean}}(u)
=
C
\left(
\bar Z_{t+1:t+H}(u),
u
\right).
\label{eq:mean-planning}
\end{equation}
A horizon-specific cost $C_h(Z_{t+h},u)$ is recovered as a special case. When the predictive distribution is multimodal, however, the mean trajectory may pass between valid branches and need not itself correspond to a likely future. In the navigation example, averaging left- and right-going trajectories can place the predicted mean near the obstacle even though neither branch follows that path. Conversely, an apparently benign mean can conceal a low-probability but high-consequence failure branch.

\paragraph{Distribution-aware planning.}
A distribution-aware planner instead evaluates the cost under the complete predictive belief,
\begin{equation}
J_{\mathrm{dist}}(u)
=
\E
\left[
C
\left(
Z_{t+1:t+H},u
\right)
\mid
Z_t,u
\right].
\label{eq:distributional-planning}
\end{equation}
Introducing the trajectory mode $M$, the \textit{law of total expectation} gives
\begin{equation}
J_{\mathrm{dist}}(u)
=
\sum_{k=1}^{K}
\pi_k(Z_t,u)\,
\E
\left[
C
\left(
Z_{t+1:t+H},u
\right)
\mid
Z_t,u,M=k
\right].
\label{eq:branch-cost-decomp}
\end{equation}
The learned mixture therefore exposes both the probability of each branch and its mode-conditioned consequence to the planner.

The same decomposition applies to safety events. Let $\mathcal F$ denote a failure event defined over the predicted trajectory, such as collision at any point during the rollout. Then
\begin{equation}
\Pr
\left(
\mathcal F
\mid
Z_t,u
\right)
=
\sum_{k=1}^{K}
\pi_k(Z_t,u)\,
\Pr
\left(
\mathcal F
\mid
Z_t,u,M=k
\right).
\label{eq:failure-probability}
\end{equation}
Thus, a low-probability branch can contribute explicitly to the predicted failure probability even when it has little influence on the conditional mean.

The distinction between \cref{eq:mean-planning,eq:distributional-planning} is fundamental. For a nonlinear cost or constraint,
\begin{equation}
C
\left(
\E[
Z_{t+1:t+H}
\mid
Z_t,u
],
u
\right)
\neq
\E
\left[
C
\left(
Z_{t+1:t+H},u
\right)
\mid
Z_t,u
\right]
\label{eq:cost-of-mean}
\end{equation}
in general. Consequently, two candidate action sequences can have similar mean predictions while inducing substantially different expected costs or failure probabilities. The distinction is therefore not only between deterministic and probabilistic world models: even the \emph{same trained MDN-VJEPA} can lead to different decisions depending on whether the planner consumes only its mean or retains its complete predictive distribution.

\paragraph{Risk-sensitive criteria.}
Expected cost is only one way to use the predictive belief. Because MDN-VJEPA retains probability mass over distinct adverse futures, the same distribution can support tail-sensitive criteria such as
\[
\operatorname{CVaR}_{\alpha}
\left(
C(Z_{t+1:t+H},u)
\right)
\]
or chance constraints of the form
\[
\Pr
\left(
\mathcal F
\mid
Z_t,u
\right)
\leq
\delta.
\]
Such criteria depend on the distribution of possible consequences rather than only on their average latent state. This yields the central decision-theoretic implication:
\[
\textit{acceptable mean prediction}
 \not\Rightarrow 
\textit{acceptable distribution over futures}.
\]
\textit{A world model intended for planning should therefore preserve not only an average predicted outcome, but also the alternatives that support it, their probabilities, and their consequences.}

\subsection{Branch-wise approximations and sampling}

The complete predictive distribution need not always be consumed at full sampling cost. MDN-VJEPA provides several increasingly detailed ways to query the \emph{same} learned predictive belief, allowing the planner to trade computational cost against fidelity to the modeled uncertainty.

\paragraph{Mean summary.}
The least expensive representation is the conditional mean trajectory
$
\bar Z_{t+1:t+H}(u)
$
from \cref{eq:planning-mean}. It provides a compact point summary but discards the distinction between alternative branches and therefore loses the \textit{between-mode} structure of the predictive distribution.

\paragraph{Branch-wise component trajectories.}
A richer approximation retains the $K$ component-mean trajectories
\begin{equation}
\left\{
\mu_{1,1:H},
\ldots,
\mu_{K,1:H}
\right\},
\label{eq:component-trajectories}
\end{equation}
together with their predictive probabilities
$
\{\pi_1,\ldots,\pi_K\}
$.
A planner can evaluate these branches separately, for example through
\[
\sum_{k=1}^{K}
\pi_k(Z_t,u)\,
C
\left(
\mu_{k,1:H},u
\right).
\]
This retains the major branching structure while approximating each mode by its conditional mean and therefore ignoring within-mode stochastic variation. It can be useful when the alternative futures are decision-relevant but large numbers of stochastic rollouts would be unnecessarily expensive.

\paragraph{Full hierarchical sampling.}
When within-branch uncertainty also affects the decision, the planner can sample from the complete coherent mixture:
\begin{equation}
\begin{aligned}
M
&\sim
\Cat
\left(
\boldsymbol{\pi}(Z_t,u)
\right),
\\
Z_{t+1:t+H}
&\sim
p_\phi
\left(
Z_{t+1:t+H}
\mid
Z_t,u,M
\right).
\end{aligned}
\label{eq:hierarchical-planning-sampling}
\end{equation}
Sampling $M$ explores uncertainty over alternative future branches, whereas sampling the trajectory conditional on $M$ explores local variability within the selected branch. This directly mirrors the between-mode and within-mode uncertainty decomposition in \cref{eq:uncertainty-decomp}.

The three interfaces can therefore be summarized as
\[
\begin{aligned}
\textit{conditional mean}
&:
&&
\text{lowest cost, but branch structure is marginalized},
\\
\textit{component trajectories}
&:
&&
\text{retain branch structure but ignore within-mode variation},
\\
\textit{full hierarchical samples}
&:
&&
\text{retain both between- and within-mode uncertainty}.
\end{aligned}
\]
MDN-VJEPA thus exposes a single predictive belief at multiple levels of computational fidelity. The practical advantage is not merely that the predictor is stochastic, but that its stochasticity remains structured around distinct plausible futures, allowing the downstream planner to marginalize, approximate, or explicitly sample that structure according to the decision problem.

\section{Experiments}
\label{sec:experiments}

The experiments follow the same progression used to construct MDN-VJEPA: we first establish that separated futures can be represented, then test whether those futures remain \textit{coherent across a trajectory}, and finally test whether the resulting predictive beliefs remain \textit{consistent under temporal composition}. This gives a nested evaluation hierarchy in which each level introduces one additional structural requirement. Before describing the individual experiments, we specify the common comparison and evaluation protocol used throughout.
 
\subsection{Common experimental protocol and evaluation criteria}
\label{sec:common-protocol}

\paragraph{Matched comparisons.}
Across controlled comparisons, we hold fixed the latent or representation space, context--target construction, action or side-information conditioning, data splits, optimizer schedule, and training budget wherever the model definitions permit. The primary comparisons therefore change the predictive family or structural constraint rather than the underlying predictive state. Experiment~1 operates directly in latent space, Experiment~3 uses a fixed representation map, and Experiments~2 and 4 operate directly on controlled predictive states. We report predictor parameter counts and include parameter-matched controls where useful.

The Branch-JEPA-style comparison in Experiment~2 is a controlled finite-support trajectory predictor trained with the full-set Energy Score; it is not a reproduction of the complete external Branch-JEPA training stack.

Train, validation, and test data are generated independently at the context, trajectory, or episode level as appropriate to each benchmark, with no realization shared across splits. Within a given stochastic comparison and metric, competing methods use the same pre-specified sampling budget unless explicitly stated otherwise; sampling budgets may differ across evaluation tasks and are reported in the corresponding experimental details. Aggregate results are reported across independent random seeds with uncertainty estimates.

For multimodal methods, ground-truth future information may be used to \emph{score} or permutation-invariantly match predicted branches during evaluation, but never to select a branch at deployment time. This distinction is important because selecting the closest prediction after observing the target can artificially inflate the apparent utility of multi-output predictors.

\paragraph{Predictive fidelity and probabilistic quality.}
No single metric is sufficient for a multimodal predictor. When a normalized predictive density is available, we report \textit{negative log-likelihood} (NLL). For sample-based predictive evaluation, we report the \textit{energy score} against the realized target. When two complete distributions are available, for example the learned and known synthetic laws, or the direct and recursively composed predictions, we additionally use sample-based distribution discrepancies such as \textit{energy distance} or \textit{maximum mean discrepancy} (MMD). Mean squared error is retained only as a secondary point-prediction diagnostic because it rewards conditional averaging and does not measure multimodal support.

When ground-truth branches or events are available, we evaluate predicted branch probabilities using reliability/calibration measures and proper event-probability scores such as the \textit{Brier score}. We additionally report \textit{empirical predictive coverage} and, where meaningful, \textit{branch precision}, \textit{recall}, and \textit{mode coverage} after permutation-invariant matching between learned components and semantic branches. Best-of-$N$ latent or trajectory error is reported only as a fixed-$N$ coverage diagnostic, rather than as a primary probabilistic score, because it can improve simply by generating more candidates.

\paragraph{Mixture utilization, trajectory coherence, and composition.}
For mixture models, we report the \textit{mean} context-dependent \textit{effective component count}
\begin{equation*}
\frac{1}{B}
\sum_{i=1}^{B}
\exp H\!\left(\boldsymbol{\pi}_i\right)
\end{equation*}
together with the \textit{average component responsibilities} $\bar r_k$ (Eq.\ref{eq:average_component_responsibility}). For trajectory coherence, we measure \emph{semantic} mode-switching after mapping predictions to known events or ground-truth branches. We do not use raw component-index switching as the primary metric, since the shared-$M$ construction fixes the component index by design.

For compositional consistency, we report \textit{direct-versus-composed energy distance} and \textit{MMD} together with target \textit{fidelity} of both predictive paths. Finally, planning experiments report task success, collision or failure rate, predictive failure-probability calibration, and risk-sensitive performance where applicable. These common criteria allow the individual experiments below to emphasize the subset of metrics relevant to each experiment.
 
\subsection{Three-level evaluation hierarchy}
\label{sec:three-levels}

\begin{table}[h]
\caption{Three-level MDN-VJEPA evaluation hierarchy. Each level adds one structural property to the preceding model, enabling a matched test of the corresponding scientific claim.}
\label{tab:three-levels}
\centering
{\scriptsize
\setlength{\tabcolsep}{2pt}
\renewcommand{\arraystretch}{0.92}
\begin{tabular}{@{}p{0.11\linewidth}p{0.23\linewidth}p{0.31\linewidth}p{0.26\linewidth}@{}}
\toprule
Level & Added mechanism & Scientific question & Main evidence \\
\midrule
I: Multimodal
&
Per-horizon GMM prediction
&
Can JEPA preserve separated plausible futures rather than their conditional average?
&
NLL, energy score, calibration, mode coverage
\\
II: Coherent
&
Level I + shared trajectory mode $M$
&
Do the predicted alternatives form coherent future trajectories rather than incompatible horizon-wise combinations?
&
trajectory score, semantic switch rate, branch coverage
\\
III: Consistent
&
Level II + recursive one-step transition + $\cL_{\mathrm{CK}}$
&
Do direct and recursively composed multimodal beliefs represent consistent multi-horizon dynamics?
&
direct--composed discrepancy, target fidelity, branch and utilization diagnostics
\\
\bottomrule
\end{tabular}
}
\end{table}

\paragraph{Level I: multimodal prediction.}
The first level isolates the effect of the predictive distribution itself. Replacing the unimodal Gaussian predictor with the horizon-wise GMM construction in \cref{eq:level1-marginal,eq:level1-trajectory} tests whether MDN-VJEPA can preserve separated conditional futures together with their probabilities and within-component uncertainties. For a multi-horizon target, Level I treats the horizon-wise mixture predictions independently:
\begin{equation*}
p_\phi^{\mathrm{I}}
\left(
Z_{t+1:t+H}\mid Z_t,\xi
\right)
=
\prod_{h=1}^{H}
\left[
\sum_{k=1}^{K}
\pi_{k,h}(Z_t,\xi)\,
\N\!\left(
Z_{t+h};
\mu_{k,h},
\Sigma_{k,h}
\right)
\right].
\end{equation*}
Thus, each horizon can be multimodal, but the component selected at one horizon need not correspond to the component selected at another. This is the appropriate negative control for testing whether multimodal marginals alone are sufficient.

\paragraph{Level II: coherent multimodal trajectories.}
Level II replaces the independent horizon-wise mixture above with the shared-mode trajectory distribution in \cref{eq:trajectory-mixture,eq:mdn-nll}. The ordering of mixture and horizon operations is therefore reversed (Eq.\ref{eq:level1-level2-contrast}):
\begin{equation*}
\prod_{h=1}^{H}
\left[
\sum_{k=1}^{K}(\cdot)
\right]
\qquad\longrightarrow\qquad
\sum_{k=1}^{K}
\left[
\prod_{h=1}^{H}(\cdot)
\right].
\end{equation*}
This apparently small structural change has an important consequence: one latent mode $M=k$ now specifies a complete candidate future trajectory. Level II therefore tests whether explicitly coupling the horizons through a shared mode improves semantic trajectory coherence without sacrificing predictive fidelity or branch coverage.

\paragraph{Level III: compositionally consistent multimodal dynamics.}
Even coherent direct trajectories need not agree with recursive propagation of the learned one-step transition. Level III therefore adds $\cL_{\mathrm{CK}}$ from \cref{eq:ck-loss}, encouraging the direct horizon predictor and the recursively composed transition law to represent the same multi-horizon predictive distribution. Consistency alone is not sufficient: both paths must also remain faithful to the observed target distribution. Level III is therefore evaluated jointly on direct--composed discrepancy and target-prediction quality.

The representation regularizers $\cL_T$ and $\cL_C$ in \cref{eq:target-kl,eq:context-kl} provide a common probabilistic representation-learning foundation across Levels~I--III rather than defining an additional predictive level; their weighted contributions enter the overall objective in \cref{eq:full-objective}. Likewise, covariance parameterization and the fixed-$K$ component-utilization safeguard are orthogonal model-design choices rather than additional levels of the predictive hierarchy.

This hierarchy serves two purposes. Scientifically, it associates each architectural addition with a distinct falsifiable claim. Practically, it gives a staged implementation protocol: establish multimodal density learning first, then trajectory coherence, and finally compositional consistency. A failure of the complete model can therefore be localized to the stage at which the relevant property first degrades.
 
\subsection{Baselines and ablations}

The core model comparisons are summarized in \cref{tab:ablations}. The columns correspond to the principal structural ingredients of the predictive hierarchy. ``GMM'' denotes a continuous Gaussian-mixture predictive family. ``Shared traj.\ $M$'' denotes the Level-II construction in which a single trajectory-level latent mode is shared across the complete future block, so that the same component identity couples all prediction horizons rather than being selected independently at each horizon. ``PIB'' denotes inclusion of the context-information compression term $\cL_C$ in \cref{eq:context-kl}, while the stochastic target-side regularization $\cL_T$ in \cref{eq:target-kl} belongs to the surrounding probabilistic VJEPA formulation. ``CK'' denotes the Level-III direct--composed consistency objective $\cL_{\mathrm{CK}}$ in \cref{eq:ck-loss}. The Branch-JEPA-style reference is marked ``atomic'' because it represents a weighted finite support over complete trajectory hypotheses rather than Gaussian mixture components; its shared-mode entry is marked ``n/a'' because it does not use the MDN trajectory-mode variable $M$.

\begin{table}[H]
\caption{Core baseline and structural-ablation matrix for the controlled experiments. Each successive MDN-VJEPA level isolates an additional property of the predictive law.}
\label{tab:ablations}
\centering
{\scriptsize
\setlength{\tabcolsep}{2.5pt}
\renewcommand{\arraystretch}{0.90}
\begin{tabular}{@{}lcccp{0.27\linewidth}@{}}
\toprule
Model & GMM & shared traj.\ $M$ & CK & Primary role \\
\midrule
Deterministic JEPA
& -- & -- & -- & point-prediction reference \\
Gaussian VJEPA
& -- & -- & -- & unimodal probabilistic reference \\
MDN-VJEPA-I
& \checkmark & -- & -- & multimodal marginals \\
MDN-VJEPA-II
& \checkmark & \checkmark & -- & trajectory coherence \\
MDN-VJEPA-III
& \checkmark & \checkmark & \checkmark & compositional consistency \\
Branch-JEPA-style
& atomic & n/a & -- & finite-support trajectory reference \\
\bottomrule
\end{tabular}
}
\end{table}

The experiments use these comparisons selectively to isolate the claim under study rather than instantiating every row in every benchmark. Experiment~1 isolates predictive multimodality in a one-step controlled latent setting by comparing deterministic, Gaussian, and MDN predictors, including a parameter-matched Gaussian control. Experiment~2 evaluates multimodal prediction and downstream planning, includes a controlled finite-support Branch-JEPA-style reference, and compares MDN-VJEPA with and without CK regularization. Experiment~3 provides the principal Level-I versus Level-II ablation by replacing the shared trajectory mode with independent horizon-wise mixture identities while preserving comparable horizon-wise multimodal capacity. Experiment~4 isolates the Level-II versus Level-III distinction by adding the distributional CK objective to otherwise matched direct and one-step MDN predictors.

The controlled experiments are designed to isolate the predictive hierarchy rather than representation learning: Experiments~1, 2, and 4 operate directly on predictive states, while Experiment~3 uses a fixed representation map. The PIB representation terms $\cL_T$ and $\cL_C$ therefore remain part of the general MDN-VJEPA formulation developed in \cref{sec:objective}, but are not activated as experimental factors in these controlled benchmarks.

Orthogonal safeguards are reported where they are actually used. In Experiment~1, the component-utilization penalty is deliberately disabled with $\lambda_{\mathrm{use}}=0$, whereas the later multimodal trajectory experiments use the weak fixed-$K$ utilization safeguard where required. The effective component count $K_{\mathrm{eff}}$ from \cref{eq:effective-k} is reported throughout as a diagnostic of mixture utilization rather than as an adaptive component-selection mechanism.
 
\subsection{Experiment 1: controlled branching dynamics}
\label{sec:exp1-branching}

We first isolate the Level-I question, i.e. whether the predictive family itself can preserve separated plausible futures, in a controlled two-dimensional latent space where the true conditional law is known exactly. This experiment deliberately operates directly on latent states rather than training an encoder, so that differences in representation learning cannot confound the comparison among deterministic, Gaussian, and mixture predictors. For context $X=(X_1,X_2)\sim\operatorname{Unif}([-1,1]^2)$, the probability of the right branch is
\begin{equation*}
p_{\mathrm R}(X)
=
\operatorname{sigmoid}(1.8X_1-0.7X_2+b),
\qquad
M^\star\sim\operatorname{Bernoulli}(p_{\mathrm R}(X)),
\end{equation*}
where the hidden branch $M^\star$ is sampled only after the context is generated and is never included in the predictor input. Conditional on the branch, the one-step target is Gaussian,
\begin{equation*}
Z_{t+1}\mid M^\star=m,X
\sim
\N\!\left(\mu_m(X),\sigma_z^2I\right),
\end{equation*}
with
\begin{equation*}
\mu_{\mathrm L}(X)
=
\begin{bmatrix}
X_1+0.20\\
0.35X_2-\delta
\end{bmatrix},
\qquad
\mu_{\mathrm R}(X)
=
\begin{bmatrix}
X_1+0.20\\
0.35X_2+\delta
\end{bmatrix}.
\end{equation*}
The base condition uses $\delta=1.25$, $\sigma_z=0.12$, and $b=0$. We use $H=1$ here intentionally: Experiment~1 isolates multimodal conditional prediction, while trajectory-level coherence is tested separately in Experiment~3. Complete implementation and training details are given in \cref{app:exp1-details}.

\paragraph{Base comparison.}
We compare deterministic JEPA, Gaussian VJEPA, a parameter-matched Gaussian VJEPA, and MDN-VJEPA with $K=2$ against the exact oracle conditional law. The parameter-matched Gaussian has $10{,}597$ trainable parameters versus $10{,}570$ for MDN-VJEPA, so the comparison does not attribute the gain from multimodality to predictor size. Results are averaged over 5 independent seeds.

\begin{table}[H]
\caption{Controlled branching prediction in the base condition. Values are mean $\pm$ standard deviation over 5 seeds. The matched Gaussian is capacity-matched to MDN-VJEPA. Lower is better for all reported metrics. Bold denotes the best learned model.}
\label{tab:exp1-base}
\centering
{\scriptsize
\setlength{\tabcolsep}{4pt}
\renewcommand{\arraystretch}{0.95}
\begin{tabular}{@{}lccc@{}}
\toprule
Model & Energy distance $\downarrow$ & Energy score $\downarrow$ & NLL $\downarrow$ \\
\midrule
Deterministic JEPA & $0.8685\pm0.0128$ & $0.9894\pm0.0141$ & -- \\
Gaussian VJEPA (matched) & $0.1913\pm0.0024$ & $0.6493\pm0.0078$ & $0.8259\pm0.0446$ \\
\textbf{MDN-VJEPA} & $\mathbf{0.0022\pm0.0007}$ & $\mathbf{0.5554\pm0.0087}$ & $\mathbf{-0.7934\pm0.0464}$ \\
Generator oracle & -- & $0.5541\pm0.0078$ & $-0.8164\pm0.0479$ \\
\bottomrule
\end{tabular}
}
\end{table}

The deterministic predictor is close to the conditional mean but is a poor description of the full branching law, with energy distance $0.8685\pm0.0128$. Allowing a single Gaussian reduces this discrepancy substantially, but the matched Gaussian still has energy distance $0.1913\pm0.0024$ because it must cover the two branches with one unimodal density. MDN-VJEPA reduces the discrepancy to $0.0022\pm0.0007$, approximately $87\times$ smaller than the matched Gaussian. Its energy score, $0.5554\pm0.0087$, is nearly identical to the oracle value $0.5541\pm0.0078$, while its NLL is only $0.0231\pm0.0075$ above the oracle NLL. MDN-VJEPA outperforms the parameter-matched Gaussian in energy distance in all 5 seeds; the mean paired gap is $0.1891\pm0.0024$.

\begin{figure}[H]
\centering
\includegraphics[width=\linewidth]{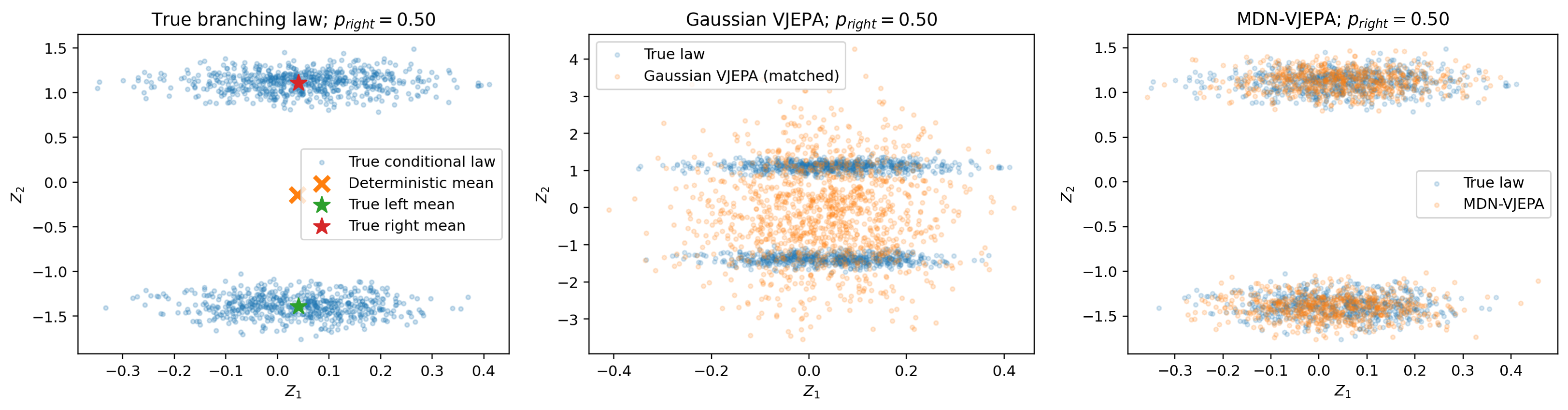}
\caption{Representative base-condition prediction at $p_{\mathrm R}\approx0.5$ using the pre-specified visualization seed. Left: the deterministic prediction lies between two concentrated valid branches. Middle: the parameter-matched Gaussian covers both alternatives only by spreading probability mass through the low-density region between them. Right: MDN-VJEPA preserves the two separated modes and their local dispersion. Ground-truth branch information is used only for synthetic evaluation and visualization.}
\label{fig:exp1-branching-comparison}
\end{figure}

\paragraph{When is multimodality necessary?}
We next vary the branch separation $\delta$ while keeping the remaining base settings fixed. \Cref{fig:exp1-separation} shows that the advantage is not universal. At $\delta\in\{0.05,0.10\}$ the branches overlap so strongly that the conditional law is effectively unimodal, and the matched Gaussian is as accurate as, or marginally better than, the mixture. Once distinct modes emerge, the difference grows rapidly: at $\delta=0.25$ the Gaussian energy distance is $0.0069\pm0.0007$ versus $0.0011\pm0.0003$ for MDN-VJEPA, and at $\delta=1.25$ the gap reaches $0.1913\pm0.0024$ versus $0.0022\pm0.0007$. Thus, the benefit of the mixture appears specifically in the regime for which it is intended---genuinely separated conditional futures---rather than from extra flexibility alone.

\begin{figure}[H]
\centering
\includegraphics[width=0.50\linewidth]{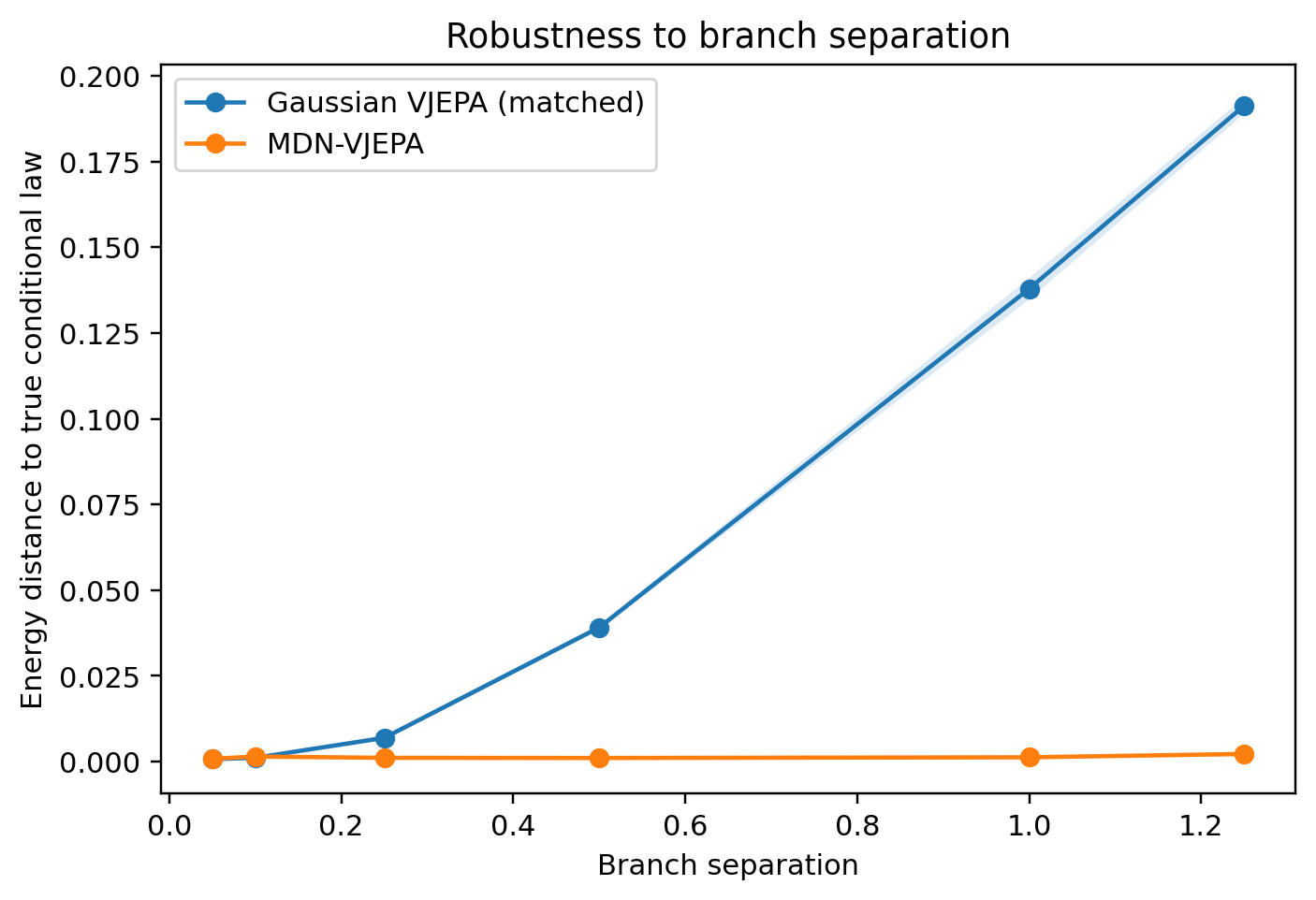}
\caption{Energy distance to the known conditional law as branch separation increases. Curves show 5-seed means with the uncertainty bands. The matched Gaussian is sufficient when the two branches nearly overlap, but its distributional error grows sharply as the future becomes genuinely multimodal, whereas MDN-VJEPA remains close to the oracle law.}
\label{fig:exp1-separation}
\end{figure}

\paragraph{Probability and uncertainty recovery.}
The learned mixture also recovers the internal structure of the true conditional law rather than merely reducing a sample discrepancy. Across seeds, the branch-probability MAE is $0.0115\pm0.0025$ and the expected calibration error is $0.0271\pm0.0058$ (Table~\ref{tab:exp1-mdn-diagnostics}). The relative errors of the within-mode and between-mode covariance terms in \cref{eq:uncertainty-decomp} are $5.71\%\pm4.12\%$ and $3.69\%\pm1.04\%$, respectively. The learned effective component count is $1.787\pm0.008$, with an absolute error of $0.0048\pm0.0062$ relative to the true mixture complexity. No component-usage penalty is active in this experiment ($\lambda_{\mathrm{use}}=0$, Eq.~\ref{eq:full-objective}), yet the two components remain approximately equally utilized: the minimum and maximum test-set mean component responsibilities are $0.488\pm0.007$ and $0.512\pm0.007$, respectively.\footnote{These quantities are not the predictive mixture probabilities $\pi_{ik}=\Pr(M_i=k\mid X_i)$. Rather, $r_{ik}=\Pr(M_i=k\mid X_i,Y_i)$ is the posterior responsibility of component $k$ after observing the realized target $Y_i$. Individual predictive probabilities $\pi_{ik}$ can differ substantially from $0.5$ depending on the context $X_i$. The values $0.488/0.512$ indicate only that, when averaged over the balanced test population, both components explain a substantial fraction of the realized targets. They therefore serve primarily as a diagnostic of component utilization or collapse, rather than as a calibration measure.} This provides no evidence of component collapse in the balanced base regime. Additional capacity controls, exact sweep tables, noise and imbalance experiments, and calibration diagnostics are reported in \cref{app:exp1-details}.
 
\subsection{Experiment 2: stochastic robot navigation}
\label{sec:exp2-navigation}

We next turn the motivating example in \cref{fig:robot-motivation} into a controlled stochastic-planning benchmark. A point robot observes its current state and evaluates candidate actions $u$. Hidden stochasticity makes the conditional future multimodal: from the same observed state--action pair, the realized trajectory may pass above or below an obstacle, with context-dependent branch probabilities and local within-branch noise. Unlike Experiment~1, which isolates one-step predictive multimodality, this benchmark uses $H=12$-step trajectories and additionally evaluates the consequences of the learned predictive distribution for downstream planning. Complete simulator, training, and evaluation details are given in \cref{app:exp2-details}.

This experiment asks:
\begin{enumerate}[label=(Q\arabic*)]
    \item Does the model assign probability mass to distinct feasible or failure branches rather than representing them only through their geometric average?
    \item Are predicted branch and failure probabilities calibrated against empirical event frequencies?
    \item Does planning with the complete predictive mixture improve success, expected cost, or collision risk relative to planning from the predictive mean alone?
    \item Does $\cL_{\mathrm{CK}}$ improve the consistency of long-horizon route beliefs under recursive rollout while retaining predictive fidelity?
\end{enumerate}

To isolate the decision-theoretic value of preserving multimodality from differences in representation quality, we evaluate both a \emph{mean-only} and a \emph{full-distribution} planner using the same trained MDN-VJEPA+CK checkpoint. The former evaluates each candidate action using the \textit{conditional mean} trajectory, whereas the latter evaluates expected cost under samples from the \textit{complete predictive mixture}. Both use the same candidate action grid; the stochastic planner uses a fixed pre-specified Monte Carlo budget reported in \cref{app:exp2-details}. Thus, this comparison changes how the same learned predictive belief is used for decision making, rather than changing the world model itself.

A particularly relevant comparison is a controlled finite-support Branch-JEPA-style predictor \citep{song2026branchjepa}. In our benchmark this serves as an atomic predictive-family control rather than a reproduction of the complete external Branch-JEPA system. It represents several weighted trajectory hypotheses but does not attach continuous local uncertainty to each hypothesis. This contrasts
\begin{equation*}
\sum_{k=1}^{K}\pi_k\delta_{\mu_k}
\qquad\text{with}\qquad
\sum_{k=1}^{K}\pi_k\N(\mu_k,\Sigma_k),
\end{equation*}
allowing us to distinguish the value of identifying separate branches from the additional value of modeling uncertainty within each branch.

\paragraph{Predictive multimodality and calibration.}
\Cref{tab:exp2-predictive} reports predictive quality over five independent seeds. Deterministic JEPA performs poorly as a distributional model, with energy distance $0.5141\pm0.0053$ to the known simulator law. A parameter-matched Gaussian improves this substantially to $0.1884\pm0.0018$, but remains unable to represent the separated route modes. The controlled atomic predictor captures the branch structure and reduces energy distance to $0.0212\pm0.0004$. MDN-VJEPA without CK further reduces it to $0.0026\pm0.0006$, approximately $8\times$ smaller than the atomic predictor, while its energy score $0.3049\pm0.0023$ is close to the oracle value $0.3037\pm0.0016$.

\begin{table}[H]
\caption{Predictive quality on stochastic robot navigation. Values are mean $\pm$ standard deviation over 5 seeds. Energy distance uses an off-diagonal finite-sample estimator and may therefore be slightly negative for two samples from the same distribution; the generator-oracle value is reported after rounding.}
\label{tab:exp2-predictive}
\centering
{\scriptsize
\setlength{\tabcolsep}{3.2pt}
\renewcommand{\arraystretch}{0.95}
\begin{tabular}{@{}lcccc@{}}
\toprule
Model
& Energy score $\downarrow$
& Energy dist. $\downarrow$
& Failure Brier $\downarrow$
& Failure MAE $\downarrow$ \\
\midrule
Deterministic JEPA
& $0.5600\pm0.0025$
& $0.5141\pm0.0053$
& $0.4738\pm0.0040$
& $0.4649\pm0.0026$ \\
Gaussian VJEPA (matched)
& $0.3979\pm0.0018$
& $0.1884\pm0.0018$
& $0.3456\pm0.0059$
& $0.3501\pm0.0093$ \\
Branch-JEPA (controlled atomic)
& $0.3145\pm0.0019$
& $0.0212\pm0.0004$
& $0.1803\pm0.0046$
& $0.0411\pm0.0032$ \\
\textbf{MDN-VJEPA}
& $\mathbf{0.3049\pm0.0023}$
& $\mathbf{0.0026\pm0.0006}$
& $\mathbf{0.1757\pm0.0027}$
& $\mathbf{0.0339\pm0.0017}$ \\
MDN-VJEPA + CK
& $0.3092\pm0.0052$
& $0.0121\pm0.0113$
& $0.1781\pm0.0071$
& $0.0496\pm0.0173$ \\
Generator oracle
& $0.3037\pm0.0016$
& $0.0000\pm0.0002$
& $0.1740\pm0.0039$
& $0.0213\pm0.0004$ \\
\bottomrule
\end{tabular}
}
\end{table}

The support-oriented diagnostics sharpen this comparison. As detailed in \cref{app:exp2-details}, the atomic predictor and both MDN variants obtain full branch coverage and weighted support precision under the evaluation tolerance, showing that all three recover the two route families. The remaining advantage of MDN-VJEPA therefore comes primarily from modeling the continuous dispersion around those branches rather than from discovering an additional route. Its predicted branch probabilities are also well calibrated (\cref{tab:exp2-calibration,fig:exp2-failure-calibration}): without CK, the branch-probability MAE is $0.0162\pm0.0048$, branch ECE is $0.0241\pm0.0102$, and failure ECE is $0.0204\pm0.0062$. Thus, the mixture weights recover not only which alternatives exist but also how predictive probability is distributed among them.

\paragraph{Planning with the mean \textit{vs.} with the complete distribution.}
The strongest decision-theoretic comparison uses the same MDN-VJEPA+CK checkpoint for both planners. As shown in \cref{tab:exp2-planning}, collapsing its prediction to the conditional mean yields a collision rate of $0.5563\pm0.0095$ and success rate of only $0.4437\pm0.0095$. Retaining the complete mixture instead reduces collision to $0.0025\pm0.0003$ and raises success to $0.9975\pm0.0003$. The corresponding true expected objective falls from $5.6473\pm0.0820$ to $1.1828\pm0.0018$, close to the oracle value $1.1769\pm0.0017$. The result directly illustrates that evaluating
\begin{equation*}
C\!\left(\E[Z\mid Z_t,u]\right)
\end{equation*}
need not produce the same decision as evaluating
\begin{equation*}
\E\!\left[C(Z)\mid Z_t,u\right]
\end{equation*}
when the predictive law is multimodal.

\begin{table}[H]
\caption{Planning performance in the stochastic navigation benchmark. Values are mean $\pm$ standard deviation over 5 seeds. The first two rows use exactly the same MDN-VJEPA+CK world model and differ only in whether planning uses its conditional mean or complete predictive distribution.}
\label{tab:exp2-planning}
\centering
{\scriptsize
\setlength{\tabcolsep}{3.0pt}
\renewcommand{\arraystretch}{0.95}
\begin{tabular}{@{}lcccc@{}}
\toprule
Planner
& Success $\uparrow$
& Collision $\downarrow$
& Path length $\downarrow$
& True objective $\downarrow$ \\
\midrule
MDN-CK mean-only
& $0.4437\pm0.0095$
& $0.5563\pm0.0095$
& $1.1967\pm0.0065$
& $5.6473\pm0.0820$ \\
\textbf{MDN-CK full-distribution}
& $\mathbf{0.9975\pm0.0003}$
& $0.0025\pm0.0003$
& $\mathbf{1.1626\pm0.0007}$
& $1.1828\pm0.0018$ \\
MDN-CK CVaR
& $0.9984\pm0.0002$
& $0.0016\pm0.0002$
& $1.1673\pm0.0089$
& $1.1799\pm0.0087$ \\
Gaussian full-distribution
& $1.0000\pm0.0000$
& $\mathbf{0.0000\pm0.0000}$
& $1.4539\pm0.0008$
& $1.4539\pm0.0008$ \\
Branch-JEPA atomic expected
& $0.9399\pm0.0305$
& $0.0601\pm0.0305$
& $1.1617\pm0.0002$
& $1.6429\pm0.2439$ \\
Oracle expected-cost
& $0.9982\pm0.0002$
& $0.0018\pm0.0002$
& $1.1623\pm0.0001$
& $\mathbf{1.1769\pm0.0017}$ \\
\bottomrule
\end{tabular}
}
\end{table}

The Gaussian planner is informative in a different way. Its collision rate is essentially zero, but its mean path length is $1.4539\pm0.0008$, approximately $25\%$ longer than the MDN route. A single broad Gaussian places predictive mass in the region between the two feasible branches and therefore induces a highly conservative detour. MDN-VJEPA can instead preserve two concentrated safe alternatives, achieving nearly oracle-level risk while retaining a short route. The CVaR planner similarly reaches a collision rate of $0.0016\pm0.0002$, showing that the learned mixture can support tail-sensitive decision criteria in addition to expected-cost planning.

\Cref{fig:exp2-planning-routes} illustrates the central mean-versus-distribution effect for the pre-specified representative seed. Although the conditional mean trajectory appears to avoid the obstacle, one of the underlying stochastic branches collides with it. Retaining the complete predictive mixture instead leads the planner to a short action for which both route alternatives avoid the obstacle; in this representative case, the selected action also agrees with the simulator-based oracle.

\begin{figure}[H]
\centering
\includegraphics[width=0.9\linewidth]{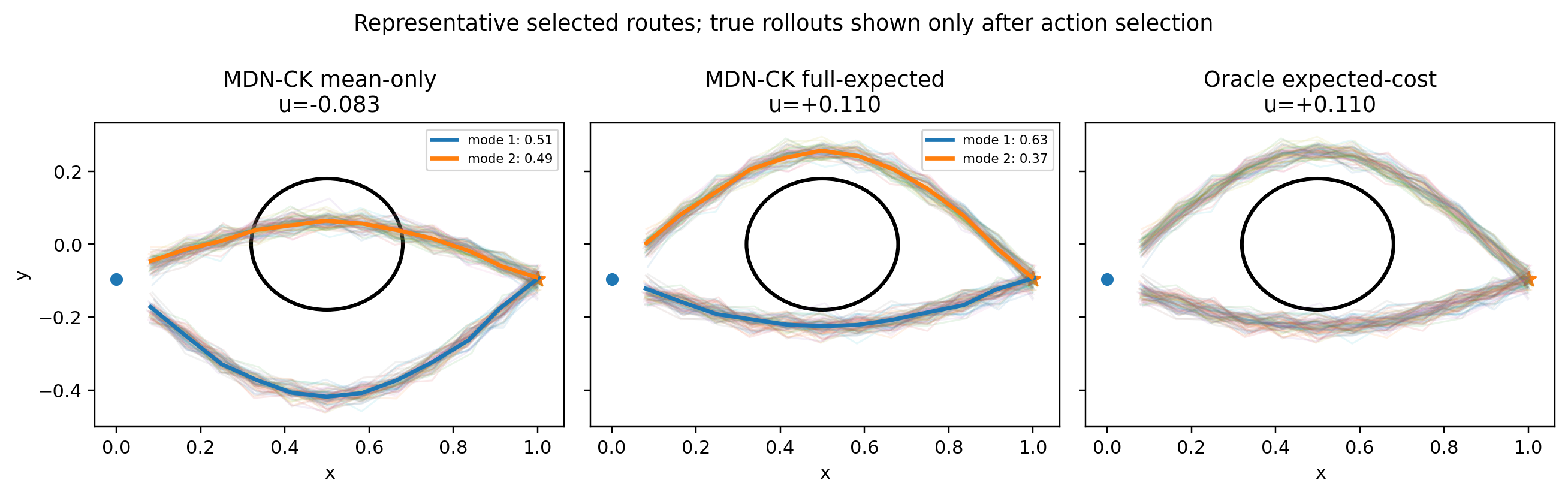}
\caption{Representative planning example using the pre-specified visualization seed. Left: planning from the conditional mean of MDN-VJEPA+CK selects an action whose averaged trajectory appears acceptable, although one of the underlying future branches intersects the obstacle. Middle: planning from the complete mixture preserves both branches and selects a short action for which both alternatives avoid the obstacle. Right: the simulator-based oracle selects the same action in this example. Thin trajectories are simulator rollouts shown only after action selection; they are never available to the learned planner.}
\label{fig:exp2-planning-routes}
\end{figure}

\paragraph{Distributional composition consistency.}
Finally, we compare MDN models initialized from the same common warm-start checkpoint and differing only in whether $\cL_{\mathrm{CK}}$ is applied. On the pre-specified $512$-context CK evaluation subset, using $96$ predictive samples per distribution, CK reduces direct--composed energy distance from $0.1585\pm0.0097$ to $0.1236\pm0.0119$, a reduction of approximately $22\%$, and the reduction occurs in all 5 seeds. The recursively composed prediction also becomes slightly more faithful to the target law: its energy score improves from $0.3806\pm0.0119$ to $0.3741\pm0.0136$. Direct predictive energy score changes only slightly, from $0.3052\pm0.0055$ to $0.3076\pm0.0089$. Thus, in this benchmark CK trades a small amount of direct predictive accuracy for substantially better agreement under temporal composition.

\begin{figure}[H]
\centering
\includegraphics[width=0.38\linewidth]{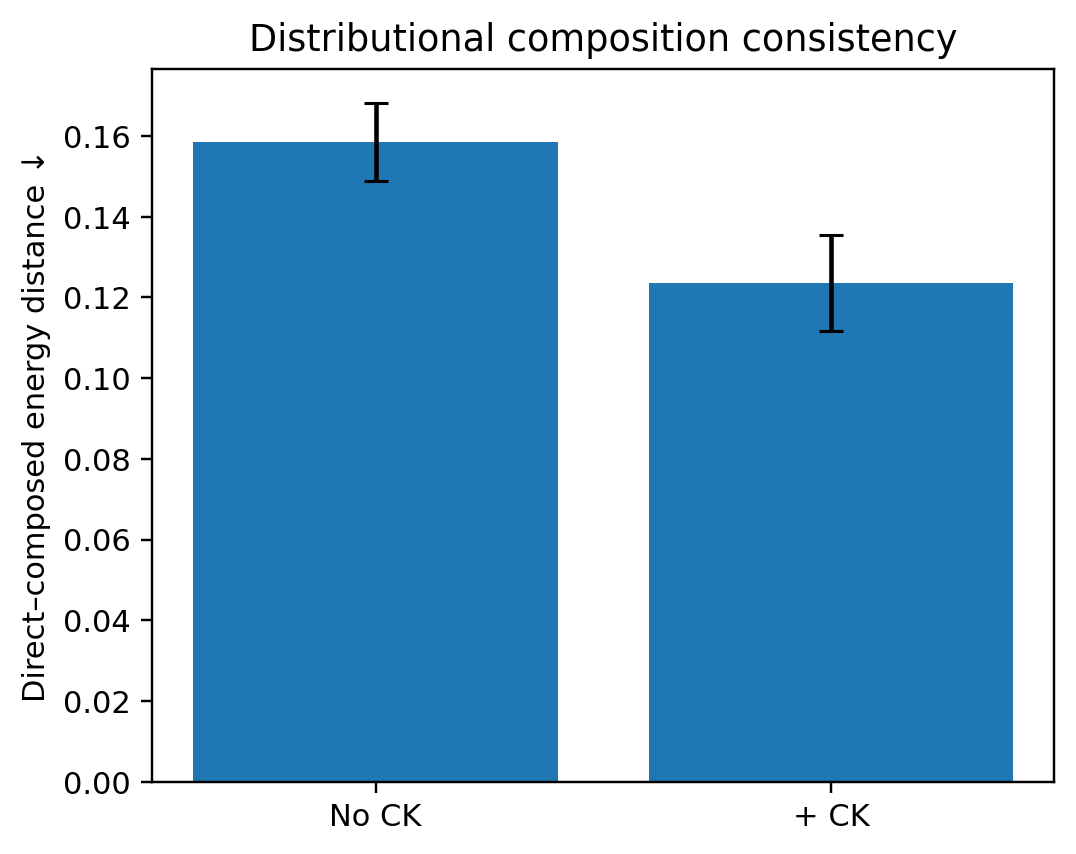}
\caption{Direct-versus-composed energy distance with and without the distributional CK objective. Both models begin from the same common MDN warm-start. CK reduces the discrepancy across all 5 seeds while preserving strong target fidelity.}
\label{fig:exp2-ck}
\end{figure}

The CK model remains genuinely multimodal after regularization (\cref{tab:exp2-calibration}): its minimum effective component count across the 5 seeds is $1.849$, and no component has near-zero posterior responsibility. The additional compositional constraint does, however, produce a modest calibration tradeoff: direct energy distance to the true law increases from $0.0026\pm0.0006$ to $0.0121\pm0.0113$ (\cref{tab:exp2-predictive}), and branch-probability MAE increases from $0.0162\pm0.0048$ to $0.0535\pm0.0281$ (\cref{tab:exp2-calibration}). We therefore interpret CK as a structural consistency regularizer rather than as an unconditional improvement to every one-step or direct predictive metric. Full diagnostics, risk sweeps, and the auxiliary MMD comparison are reported in \cref{app:exp2-details}.

\subsection{Experiment 3: stochastic video prediction in representation space}
\label{sec:exp3-video}

We next isolate the Level-II claim that multimodal predictions should form coherent futures rather than merely accurate multimodal marginals. We use a controlled stochastic-video benchmark in which the same observed motion can subsequently follow one of three semantic futures: turn downward, continue approximately straight, or turn upward. The hidden future mode is sampled only after the observed context has been formed and is never provided to the predictor.

To separate trajectory coherence from representation-learning quality, frames are mapped through a fixed, target-independent representation map to $D=12$ dimensional frame tokens, and only the predictive models are trained. The first two coordinates of the deterministic token map correspond to the object centroid and provide a fixed probe for visualization; the remaining coordinates contain nonlinear spatial features. Pixels and semantic branch labels are never supplied to the predictive models. Thus, Experiment~3 deliberately tests the temporal structure of the predictive law rather than the ability of an encoder to discover the representation itself. Full simulator, training, and evaluation details are given in \cref{app:exp3-details}.

The primary comparison follows the hierarchy introduced before (Eq.\ref{eq:level1-level2-contrast}):
\begin{equation*}
\underbrace{
\prod_{h=1}^{H}
\left[
\sum_{k=1}^{K}(\cdot)
\right]
}_{\text{Level I: independent horizon-wise mixtures}}
\qquad\textit{vs.}\qquad
\underbrace{
\sum_{k=1}^{K}
\left[
\prod_{h=1}^{H}(\cdot)
\right]
}_{\text{Level II: shared trajectory mode}}.
\end{equation*}
Level I can therefore represent all three futures correctly at every individual horizon while independently resampling their identities across time. Level II instead samples one categorical mode $M$ and uses it for the complete future block. Both variants use the same $K=3$ Gaussian component family, common component architecture and initialization, data splits, optimization protocol, and fixed stochastic evaluation budget.

This experiment asks:
\begin{enumerate}[label=(Q\arabic*)]
    \item Can Levels I and II both recover the correct multimodal support at individual horizons?
    \item Does sharing one trajectory-level mode prevent semantically incompatible switching between future branches?
    \item Does trajectory coherence improve the joint predictive distribution without sacrificing branch probabilities or marginal support?
    \item Do the horizon-specific predictions belonging to one Level-II component form one semantic future, rather than appearing coherent merely because the numerical component index is fixed?
\end{enumerate}

\paragraph{Correct marginals do not imply coherent trajectories.}
Our primary evaluation assigns every sampled future token to its nearest known semantic branch using an evaluation-only, permutation-invariant matching procedure. Semantic switching is measured only from future frame $4$ onward, where the three branches are well separated in representation space. A complete trajectory is coherent when its semantic assignment remains unchanged across all evaluated horizons. At the fixed $N=64$ sample budget, coherent branch coverage additionally asks whether every semantic branch is represented by at least one complete coherent sample.

The resulting separation between Level I and Level II is striking. As shown in \cref{tab:exp3-coherence,fig:exp3-semantic-switch}, Level I attains perfect marginal branch coverage, yet its semantic switch rate is $0.6379\pm0.0023$, and $0.9961\pm0.0004$ of sampled trajectories switch semantic branch at least once. Consequently, its coherent branch coverage is only $0.0759\pm0.0069$. Level II retains the same perfect marginal branch coverage while reducing both semantic-switch metrics to zero and increasing coherent branch coverage to $1.0000\pm0.0000$ across all 5 seeds.

\begin{table}[H]
\caption{Trajectory-coherence results for Experiment~3. Values are mean $\pm$ standard deviation over 5 seeds. Marginal branch coverage measures whether the semantic branches are represented at each horizon, whereas coherent branch coverage requires complete branch-consistent trajectories under the fixed $N=64$ sample budget.}
\label{tab:exp3-coherence}
\centering
{\scriptsize
\setlength{\tabcolsep}{2.3pt}
\renewcommand{\arraystretch}{0.95}
\begin{tabular}{@{}lccccc@{}}
\toprule
Model
& Switch rate $\downarrow$
& Any switch $\downarrow$
& Coherent cov. $\uparrow$
& Marginal cov. $\uparrow$
& Branch MAE $\downarrow$ \\
\midrule
MDN-VJEPA-I
& $0.6379\pm0.0023$
& $0.9961\pm0.0004$
& $0.0759\pm0.0069$
& $1.0000\pm0.0000$
& $\mathbf{0.0118\pm0.0049}$ \\
\textbf{MDN-VJEPA-II}
& $\mathbf{0.0000\pm0.0000}$
& $\mathbf{0.0000\pm0.0000}$
& $\mathbf{1.0000\pm0.0000}$
& $1.0000\pm0.0000$
& $0.0141\pm0.0070$ \\
Generator oracle
& $0.0000\pm0.0000$
& $0.0000\pm0.0000$
& $0.9999\pm0.0001$
& $1.0000\pm0.0000$
& $0.0000\pm0.0000$ \\
\bottomrule
\end{tabular}
}
\end{table}

\begin{figure}[H]
\centering
\includegraphics[width=0.46\linewidth]{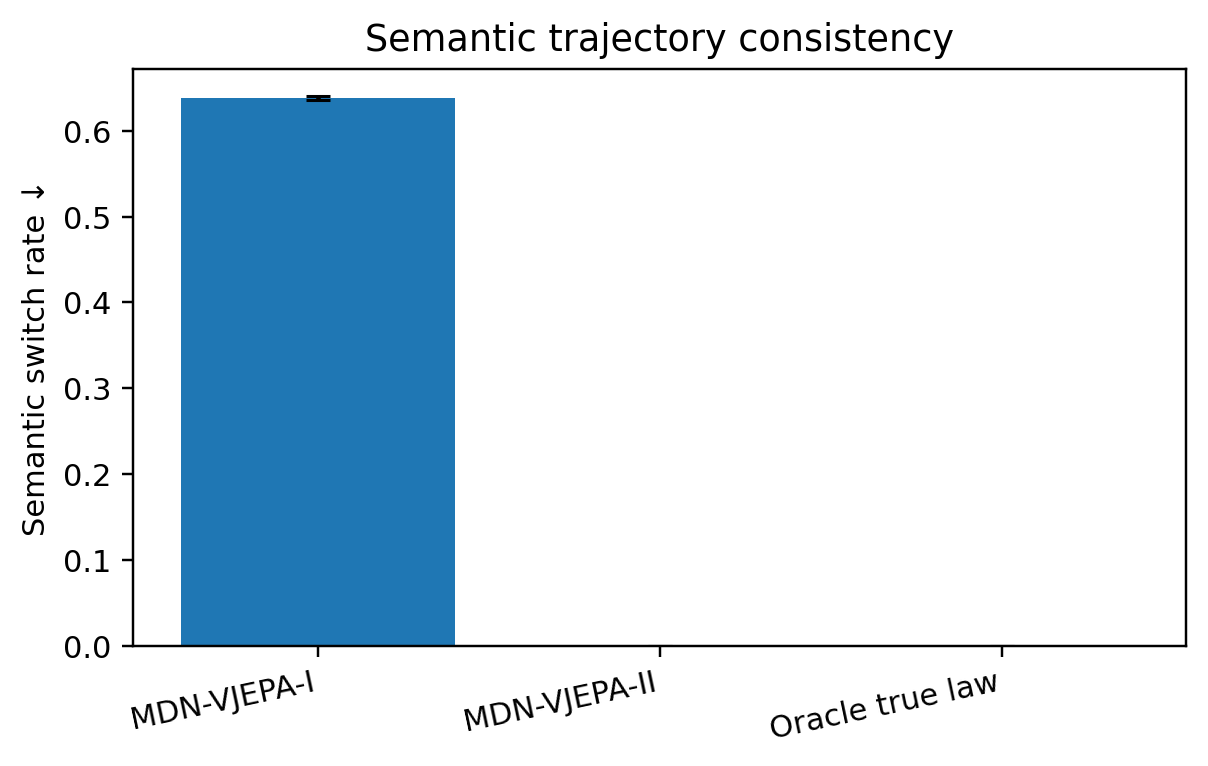}
\caption{Semantic trajectory inconsistency in Experiment~3. Level I learns accurate multimodal marginals but independently combines their modes across horizons, producing frequent semantic switching. Sharing one trajectory-level mode in Level II removes this inconsistency across all 5 seeds. Error bars show one standard deviation.}
\label{fig:exp3-semantic-switch}
\end{figure}

Importantly, the coherence gain cannot be explained by Level I failing to learn the branches. Both models achieve marginal branch coverage $1.0$, and their branch-probability errors are similarly small: $0.0118\pm0.0049$ for Level I and $0.0141\pm0.0070$ for Level II. Their mean effective component counts are also similar, $2.8515\pm0.0373$ and $2.8637\pm0.0204$, respectively; detailed support and utilization diagnostics are reported in \cref{tab:exp3-diagnostics}. Thus, Level I learns essentially the correct set of horizon-wise alternatives but gives them the wrong joint temporal factorization.

\paragraph{Joint predictive fidelity.}
The same distinction appears in proper trajectory-level scores. \Cref{tab:exp3-predictive} compares the learned joint predictive laws with deterministic, Gaussian, and oracle references. Relative to Level I, Level II improves trajectory NLL from $-230.0638\pm0.3491$ to $-239.5576\pm0.4521$, reduces Energy Score from $2.1387\pm0.0206$ to $1.8331\pm0.0205$, and reduces energy distance to the true trajectory law from $0.6246\pm0.0075$ to $0.0151\pm0.0076$, a reduction of approximately $97.6\%$. Its fixed-budget best-of-$64$ trajectory RMSE also decreases from $0.2063\pm0.0028$ to $0.0430\pm0.0003$, close to the oracle value $0.0412\pm0.0001$.

\begin{table}[H]
\caption{Joint predictive quality for stochastic representation-space video futures. Values are mean $\pm$ standard deviation over 5 seeds. NLL is unavailable for the deterministic point predictor.}
\label{tab:exp3-predictive}
\centering
{\scriptsize
\setlength{\tabcolsep}{3.0pt}
\renewcommand{\arraystretch}{0.95}
\begin{tabular}{@{}lcccc@{}}
\toprule
Model
& Traj. NLL $\downarrow$
& Energy score $\downarrow$
& Energy dist. $\downarrow$
& Best-of-$64$ RMSE $\downarrow$ \\
\midrule
Deterministic JEPA
& --
& $2.9580\pm0.0289$
& $2.2534\pm0.0070$
& $0.2701\pm0.0026$ \\
Gaussian VJEPA
& $-105.2598\pm0.2659$
& $2.1610\pm0.0190$
& $0.6697\pm0.0066$
& $0.3037\pm0.0028$ \\
MDN-VJEPA-I
& $-230.0638\pm0.3491$
& $2.1387\pm0.0206$
& $0.6246\pm0.0075$
& $0.2063\pm0.0028$ \\
\textbf{MDN-VJEPA-II}
& $\mathbf{-239.5576\pm0.4521}$
& $\mathbf{1.8331\pm0.0205}$
& $\mathbf{0.0151\pm0.0076}$
& $\mathbf{0.0430\pm0.0003}$ \\
Generator oracle
& $-244.2569\pm0.1707$
& $1.8233\pm0.0208$
& $0.0000\pm0.0031$
& $0.0412\pm0.0001$ \\
\bottomrule
\end{tabular}
}
\end{table}

The distinction is visible directly in the representation-space trajectories. \Cref{fig:exp3-trajectories} shows the pre-specified representative example. Although Level I tracks the correct three branch locations at individual horizons, complete samples repeatedly splice downward, straight, and upward futures into trajectories that do not correspond to any coherent realization. Level II instead produces three clean trajectory families that follow the true branch geometry.

\begin{figure}[H]
\centering
\includegraphics[width=0.96\linewidth]{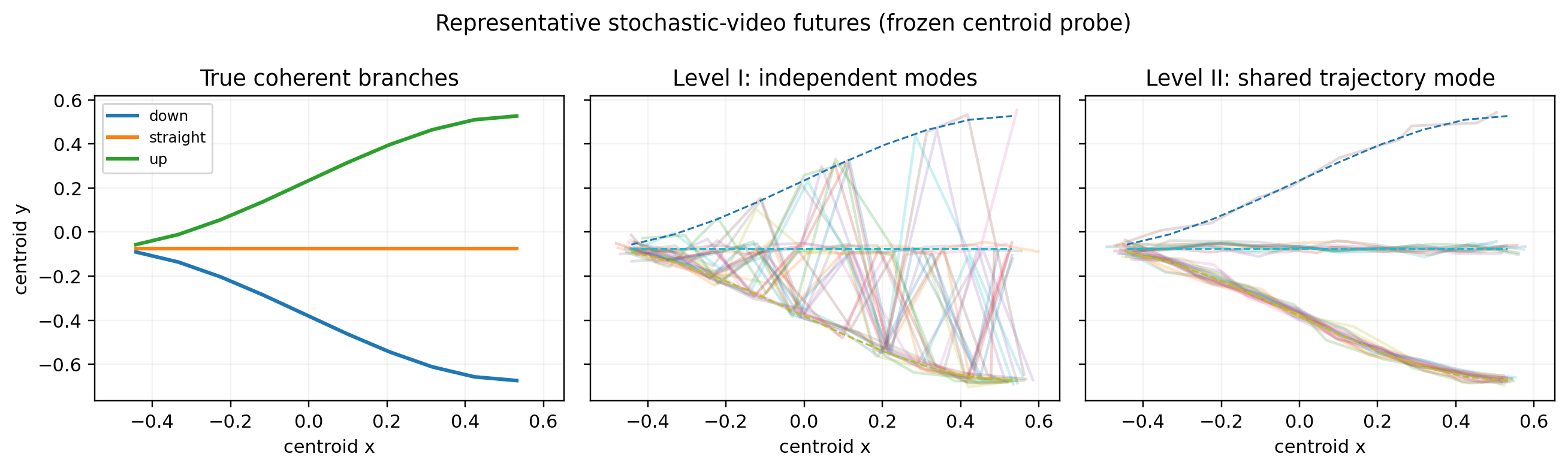}
\caption{Representative stochastic-video futures in the frozen centroid-probe space. Left: the three true coherent semantic branches. Middle: Level-I samples recover the correct horizon-wise support but independently switch among semantic branches, producing implausible complete trajectories. Right: Level II shares one trajectory-level mode and produces coherent down, straight, and up futures. Dashed curves denote the corresponding true branch means. The representative seed and example are pre-specified rather than selected after viewing test results.}
\label{fig:exp3-trajectories}
\end{figure}

Finally, the zero sampled switch rate is not merely a consequence of keeping the numerical component index fixed. We independently remap every horizon-specific Level-II component mean to its nearest semantic branch after prediction and obtain a component-mean semantic switch rate of zero for every seed. Together with the near-oracle trajectory scores, this shows that the shared-$M$ construction learns genuinely coherent semantic futures rather than only persistent mixture labels. The corresponding within-branch uncertainty tubes, probability diagnostics, capacity comparison, and finite-sample coverage results are provided in \cref{app:exp3-details}.

\subsection{Experiment 4: direct \textit{vs.} composed multi-horizon prediction}
\label{sec:exp4-ck}

The final experiment isolates the Level-III claim that a coherent multimodal world model should describe compatible future distributions whether a future state is predicted directly or reached by recursively composing the learned one-step transition. Experiment~2 already tests this property inside the stochastic navigation task; here we remove the downstream planning objective and evaluate compositional consistency directly over a controlled action-conditioned Markov process whose true predictive law can be sampled exactly.

We use a three-dimensional continuous latent state $Z_t=(x_t,y_t,r_t)$ with state-dependent branching dynamics. At every transition, a binary motion mode is sampled from a distribution conditioned only on the current modeled state and action. The continuous coordinate $r_t$ acts as a turn-memory variable, making recently selected directions increasingly persistent without introducing an unobserved persistent branch label. The resulting process is therefore Markovian in the modeled latent state. A known sequence of continuous controls is supplied as side information $\xi_{t:t+H-1}$. Complete simulator and optimization details are given in \cref{app:exp4-details}.

The primary comparison is between MDN-VJEPA-II and MDN-VJEPA-III. Both contain the same shared-mode direct trajectory predictor and the same one-step MDN transition, and both optimize direct trajectory likelihood, one-step transition likelihood, and the same component-utilization safeguard. They are first trained to a common warm-start checkpoint and then cloned exactly. The two fine-tuning arms use the same data, minibatch order, architecture, initialization, optimizer settings, and maximum training budget; the structural difference is
\begin{equation*}
\lambda_{\mathrm{CK}}=0
\qquad\textit{vs.}\qquad
\lambda_{\mathrm{CK}}>0.
\end{equation*}
For CK optimization, the directly supervised predictive distribution is treated as a stop-gradient anchor only inside the CK term; its parameters continue to be trained normally through the trajectory likelihood. CK therefore regularizes the recursively propagated transition toward the separately supervised direct belief rather than allowing both paths to move toward one another. No seed is retried or replaced based on test performance.

For the same initial state and control sequence, we compare the direct marginal
\begin{equation*}
p_\phi^{\mathrm{dir},h}
\left(
Z_{t+h}
\mid
Z_t,\xi_{t:t+h-1}
\right)
\end{equation*}
with the distribution obtained by ancestral composition of the one-step transition,
\begin{equation*}
p_\phi^{\mathrm{comp},h}
\left(
Z_{t+h}
\mid
Z_t,\xi_{t:t+h-1}
\right).
\end{equation*}
We evaluate $h\in\{1,2,4,8,12,16\}$, while $\cL_{\mathrm{CK}}$ is applied only at $h>1$. The $h=1$ comparison therefore remains an independently likelihood-trained diagnostic rather than a quantity explicitly regularized by CK.

This experiment asks:
\begin{enumerate}[label=(Q\arabic*)]
    \item How does direct--composed discrepancy vary with prediction horizon in the absence of CK?
    \item Does Level III reduce disagreement across the multi-horizon prediction range rather than only at one selected endpoint?
    \item Do the direct and recursively composed predictive paths retain fidelity to the true future law after adding CK?
    \item Is any consistency improvement obtained without collapsing the multimodal predictive distribution or distorting its branch probabilities?
\end{enumerate}

\paragraph{CK improves consistency across the multi-horizon prediction range.}
We use an unbiased U-statistic estimate of energy distance as the primary direct--composed discrepancy. Because the natural dispersion of the true future law varies substantially with horizon, we additionally normalize the energy distance at each horizon by a held-out true-law dispersion scale $s_h$ fixed before model test evaluation. Our pre-specified scalar summary is the normalized trapezoidal area under the discrepancy curve over the evaluated CK horizons
$
\mathcal H_{\mathrm{CK}}=\{2,4,8,12,16\}
$.

As shown in \cref{tab:exp4-main,fig:exp4-normalized-ck}, adding CK reduces normalized energy-distance AUC from $0.02040\pm0.00227$ to $0.01716\pm0.00162$, a reduction of approximately $15.9\%$. The corresponding raw energy-distance AUC decreases from $0.01266\pm0.00165$ to $0.01076\pm0.00120$, or approximately $15.0\%$. Both AUC reductions occur in all 5 seeds.

\begin{table}[h]
\caption{Multi-horizon composition consistency in Experiment~4. Values are mean $\pm$ standard deviation over 5 seeds. The normalized energy-distance AUC is the pre-specified primary scalar consistency summary over $h\in\{2,4,8,12,16\}$; lower is better. Energy Scores are reported at $h=16$. Bold denotes the lower learned value for the discrepancy and Energy-Score metrics; $K_{\mathrm{eff}}$ is a utilization diagnostic.}
\label{tab:exp4-main}
\centering
{\scriptsize
\setlength{\tabcolsep}{2.5pt}
\renewcommand{\arraystretch}{0.95}
\begin{tabular}{@{}lccccc@{}}
\toprule
Model
& Norm. ED-AUC $\downarrow$
& Raw ED-AUC $\downarrow$
& Direct ES $\downarrow$
& Composed ES $\downarrow$
& $K_{\mathrm{eff}}$ \\
\midrule
MDN-VJEPA-II
& $0.02040\pm0.00227$
& $0.01266\pm0.00165$
& $0.54507\pm0.01003$
& $\mathbf{0.54427\pm0.00937}$
& $1.8622\pm0.0192$ \\
\textbf{MDN-VJEPA-III}
& $\mathbf{0.01716\pm0.00162}$
& $\mathbf{0.01076\pm0.00120}$
& $\mathbf{0.54442\pm0.01066}$
& $0.54533\pm0.01105$
& $1.8681\pm0.0124$ \\
\bottomrule
\end{tabular}
}
\end{table}

\begin{figure}[h]
\centering
\includegraphics[width=0.48\linewidth]{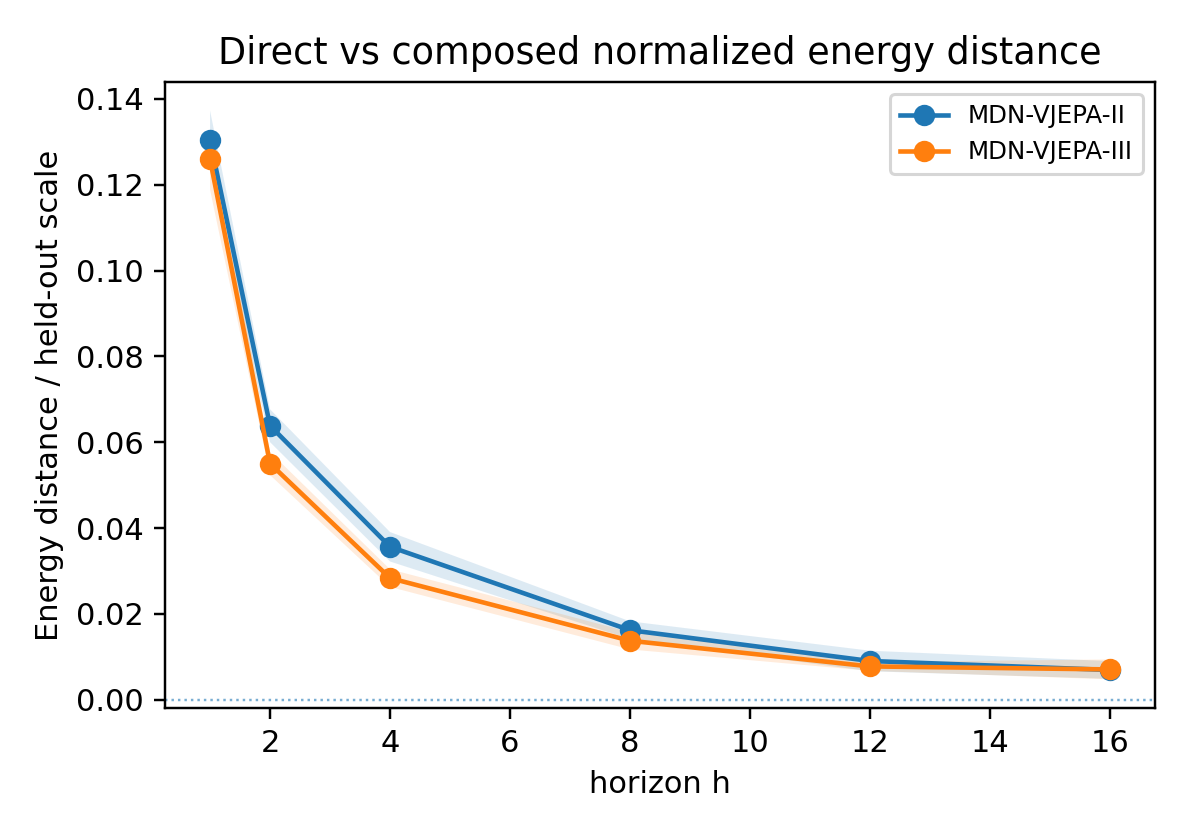}
\caption{Direct-versus-composed energy distance normalized by a held-out true-law dispersion scale at each horizon. Level III reduces discrepancy at $h=2,4,8,$ and $12$ and is comparable to Level II at $h=16$. Its integrated $h>1$ discrepancy is lower in all 5 seeds. Error bands show one standard deviation.}
\label{fig:exp4-normalized-ck}
\end{figure}

The horizon-wise result is more informative than the terminal value alone. Level III reduces raw energy distance from $0.01975$ to $0.01702$ at $h=2$, from $0.01862$ to $0.01481$ at $h=4$, from $0.01296$ to $0.01100$ at $h=8$, and from $0.00871$ to $0.00749$ at $h=12$. At $h=16$, the two models are effectively tied, with $0.00748\pm0.00222$ for Level II and $0.00762\pm0.00250$ for Level III. Thus, the empirical claim is not that CK dominates at every endpoint, but that it reduces disagreement over the multi-horizon prediction range. Detailed horizon-wise results are reported in \cref{tab:exp4-horizon}.

Interestingly, the no-CK discrepancy does not monotonically accumulate with horizon. It becomes smaller at later horizons as the controlled Markov process increasingly concentrates into persistent dynamical regimes. The relevant Level-III test is therefore whether CK reduces the integrated inconsistency present across horizons, not whether inconsistency must itself increase with $h$.

\paragraph{Predictive fidelity is largely retained.}
Consistency alone would be uninformative if both predictive paths became jointly inaccurate. We therefore independently compare the direct and composed predictions with the exact simulator law. At $h=16$, the direct Energy Score changes only from $0.54507\pm0.01003$ to $0.54442\pm0.01066$, while the composed Energy Score changes from $0.54427\pm0.00937$ to $0.54533\pm0.01105$. Both remain close to the oracle score $0.54187\pm0.00864$.

Likewise, the complete direct trajectory NLL per horizon is essentially unchanged, from $-4.19456\pm0.04725$ to $-4.19473\pm0.04524$. Direct energy distance to the true $h=16$ law changes from $0.00603\pm0.00205$ to $0.00544\pm0.00135$. The recursively composed path exhibits a small fidelity tradeoff: its terminal energy distance to the true law increases from $0.00407\pm0.00136$ to $0.00644\pm0.00157$, although its Energy Score changes by only about $0.2\%$. We therefore interpret CK as a structural consistency regularizer rather than as an unconditional improvement to every predictive metric. Full target-fidelity results are given in \cref{tab:exp4-fidelity,fig:exp4-target-ed,fig:exp4-composed-es}.

The consistency improvement is also not explained by suppressing one of the predictive branches. The effective component count remains $1.8622\pm0.0192$ without CK and $1.8681\pm0.0124$ with CK, while the mean minimum posterior responsibility is approximately $0.446$ in both cases. The terminal direct upper-mode-weight MAE changes from $0.02868\pm0.00189$ to $0.02774\pm0.00081$, while the recursively composed upper-regime event-probability MAE remains similar, changing from $0.04268\pm0.00330$ to $0.04454\pm0.00229$. Additional mixture and branch diagnostics are reported in \cref{app:exp4-details}.

\subsection{Summary of empirical findings}

Taken together, Experiments~1--4 test four complementary aspects of multimodal probabilistic world modeling: whether distinct conditional futures are represented faithfully, whether retaining the complete predictive belief changes downstream decisions, whether multimodal predictions form coherent trajectories across time, and whether the resulting dynamics remain consistent when propagated across prediction horizons.

Experiment~1 isolates the predictive-family question under a known two-branch conditional law. MDN-VJEPA reduces energy distance from $0.1913\pm0.0024$ for a parameter-matched Gaussian predictor to $0.0022\pm0.0007$, approaching the oracle predictive law. Experiment~2 then tests the decision-theoretic consequence of retaining this multimodal belief: using the complete predictive mixture rather than the conditional mean of the \emph{same} trained MDN-VJEPA model reduces collision rate from $0.5563\pm0.0095$ to $0.0025\pm0.0003$ in stochastic navigation.

Experiment~3 isolates temporal coherence. Sharing one trajectory-level mode reduces the semantic switch rate from $0.6379\pm0.0023$ to zero while preserving full marginal branch coverage, and reduces energy distance to the true trajectory law from $0.6246\pm0.0075$ to $0.0151\pm0.0076$. Finally, Experiment~4 evaluates multi-horizon compositional consistency. The Level-III Chapman--Kolmogorov regularizer reduces normalized direct--composed energy-distance AUC by approximately $15.9\%$, with improvement in all 5 seeds, while largely preserving predictive Energy Score and likelihood.

These results support the progressive picture developed throughout the paper: multimodality is needed to preserve separated futures; trajectory-level coupling is needed to organize those futures into coherent alternatives; retaining the complete predictive distribution can materially change downstream decisions; and distributional path consistency can reduce disagreement between direct and recursively composed predictive beliefs without collapsing the learned multimodal structure. More broadly, the experiments support the principle that a probabilistic JEPA should preserve not only how uncertain the future is, but which distinct futures constitute that uncertainty, how they form coherent trajectories, and how those beliefs compose through time.

\section{Discussion}
\label{sec:discussion}

\subsection{Our contribution is not ``GMM instead of Gaussian''}

A GMM predictive head is the starting point, not the complete novelty claim. MDNs predate JEPA by decades \citep{bishop1994mdn}; MDN-RNN latent world models already modeled multiple future latent outcomes \citep{ha2018worldmodels}; VJEPA explicitly proposed GMM heads as a natural extension for multimodal futures \citep{huang2026vjepa}; and Branch-JEPA provides a finite-support multimodal JEPA alternative with an explicit MDN comparison \citep{song2026branchjepa}. The research question addressed here is therefore more specific: how should a probabilistic JEPA represent continuous multimodal future beliefs, organize them into coherent trajectories, and keep those beliefs compatible under temporal composition?

The central predictive contribution is the combination
\begin{equation}
\underbrace{\text{continuous mixture prediction}}_{\text{multimodality}}
+
\underbrace{\text{shared trajectory-level modes}}_{\text{coherence}}
+
\underbrace{\text{distributional path consistency}}_{\text{composition}}.
\label{eq:novelty-synthesis}
\end{equation}
These three ingredients correspond directly to Levels~I--III of the model hierarchy. The PIB terms remain orthogonal to this predictive hierarchy: in the full representation-learning formulation they regularize how the predictive state is learned, whereas the three levels determine the structure and temporal organization of the future predictive law. The controlled experiments in this paper deliberately isolate the latter and therefore do not constitute an empirical ablation of PIB representation learning.

The Gaussian-mixture realization additionally separates uncertainty over alternative futures from uncertainty within a particular future. This is a property of the continuous mixture representation rather than of multimodality alone: a finite atomic predictor can represent several weighted alternatives, but does not parameterize local within-branch density unless an additional uncertainty model is introduced.

\subsection{Atomic branches \textit{vs.} Gaussian components}

Finite atomic branches and Gaussian mixture components emphasize different aspects of multimodal prediction:
\begin{align*}
\textit{atomic branch}
&:
\text{branch probability + representative trajectory},
\\
\textit{Gaussian component}
&:
\text{branch probability + representative trajectory + within-branch uncertainty}.
\end{align*}
Concretely, an atomic predictor represents branch $k$ as
$
\pi_k\delta_{\mu_k},
$
so that, conditional on selecting the branch, all probability mass is concentrated on the representative trajectory $\mu_k$. MDN-VJEPA instead represents the corresponding branch as
$
\pi_k\N(\mu_k,\Sigma_k),
$
so that $\mu_k$ specifies the branch location while $\Sigma_k$ describes stochastic variation around it. The distinction is therefore not whether either model can represent multiple weighted branches (both can) but whether uncertainty \emph{within} each branch is modeled explicitly.

Neither representation is uniformly preferable. Atomic support can preserve sharply separated alternatives without assigning density between them, whereas Gaussian components additionally model variation around each alternative but may merge modes, inflate covariance, or allocate probability to unsupported regions when the Gaussian approximation is poor.

Experiment~2 illustrates this distinction in a controlled setting. The Branch-JEPA-style atomic predictor and both MDN variants recover both semantic route families under the fixed support tolerance, showing that the principal difference is not branch discovery. The continuous MDN nevertheless more closely reproduces the simulator's within-branch stochastic law, yielding substantially smaller energy distance than the atomic predictor. In downstream expected-cost planning, the MDN also attains a lower collision probability than the controlled atomic predictor in this benchmark, consistent with local uncertainty around a route being decision-relevant. These results should not be interpreted as a general dominance claim over Branch-JEPA: the experiment uses a controlled finite-support predictive-family reference rather than a reproduction of the complete external system.

\subsection{Mode coherence is not global mode identity}

A trajectory-level mode makes semantic specialization possible but does not guarantee global component identity. Label switching, duplicated components, context-dependent splitting, and multiple components representing the same semantic future can all occur. We therefore distinguish \emph{trajectory coherence} from \emph{global semantic identifiability}. Coherence requires that one sampled or marginalized trajectory-level mode parameterize an internally consistent future block; it does not require component index $k$ to retain the same semantic meaning across all predictive contexts.

Experiment~3 provides evidence for the former rather than the latter. Under evaluation-only semantic matching, the Level-II predictor eliminates semantic switching while preserving the same horizon-wise branch coverage as Level~I, and the horizon-specific means belonging to each Level-II component also remain semantically consistent. This demonstrates local trajectory-level specialization in the controlled generator, but does not establish a universal semantic identity for mixture labels.

The number of active Gaussian components should likewise not be identified automatically with the number of semantically distinct futures. In general,
\begin{equation*}
\textit{number of GMM components}
\neq
\textit{number of semantic future modes}.
\end{equation*}
Several Gaussian components may cooperate to approximate one curved or strongly non-Gaussian branch, while one sufficiently broad component may cover several weakly separated outcomes. Accordingly, $K_{\mathrm{eff}}$ should primarily be interpreted as the effective complexity or utilization of the predictive mixture. Semantic interpretation requires additional evidence such as alignment with maneuver classes, event labels, or stable trajectory-level behavior.

\subsection{Consistency is not predictive accuracy}

The Level-III objective imposes a structural requirement rather than introducing another proper scoring rule. Reducing
\begin{equation*}
\mathcal E
\left(
p_\phi^{\mathrm{dir},h},
p_\phi^{\mathrm{comp},h}
\right)
\end{equation*}
only establishes that two predictive paths agree more closely; it does not by itself establish that either path is correct. The direct and one-step predictors must therefore remain supervised by the observed target law, and direct--composed discrepancy must be interpreted together with predictive fidelity.

This distinction is visible empirically. In Experiment~2, CK substantially reduces direct--composed energy distance and improves the composed-path Energy Score, while slightly worsening the direct-target Energy Score. In the more controlled Experiment~4, Level~III reduces integrated direct--composed energy-distance discrepancy across the evaluated horizons in all five seeds, while direct predictive likelihood and Energy Score remain essentially unchanged. The recursively composed path nevertheless incurs a small terminal fidelity tradeoff. CK should therefore be interpreted as a \emph{consistency regularizer}: it improves compatibility between alternative computational paths through the learned dynamics, but is not expected to improve every predictive metric simultaneously.

This also motivates the stop-gradient implementation used in Experiment~4. The separately supervised direct prediction provides a reference belief inside the CK term, while the consistency gradient acts on the recursively composed transition. This prevents the consistency objective alone from simply moving two inaccurate distributions toward one another.

\subsection{Long-horizon mixture propagation}

Nonlinear MDN composition is generally not closed within a fixed finite-GMM family. After several recursive transitions, mixture weights and component parameters depend on random intermediate states, and the resulting predictive law may become substantially more complicated than the original $K$-component parameterization. This motivates the sample-based formulation of \cref{eq:ck-loss}, which compares the resulting predictive distributions without requiring analytic closure or component-wise alignment.

The same choice introduces a computational limitation at long horizons because ancestral rollout populations must be propagated and compared. One possible extension is to periodically project the propagated distribution back into a compact mixture representation. Gaussian Mixture Approximation methods provide one possible mechanism for such projection \citep{huang2025gma}. Such compression would introduce an additional approximation problem, including the possibility of merging low-probability branches, and is therefore left outside the present core formulation.

\subsection{Limitations and scope}

The present experiments are deliberately controlled tests of the predictive mechanisms. Experiments~1, 2, and 4 operate directly on predictive latent states, while Experiment~3 uses a fixed representation map. They therefore isolate multimodal prediction, trajectory coherence, decision-theoretic use of the predictive law, and compositional consistency without confounding these questions with representation learning. The corresponding limitation is that the paper does not yet demonstrate the complete PIB--MDN-VJEPA system with a learned high-dimensional encoder on large-scale natural video or real robotic interaction. In particular, the PIB representation terms are part of the general formulation but are not empirically isolated in the controlled experiments reported here.

A second limitation is the fixed maximum component count $K$. Too few components can merge distinct futures, whereas excessive $K$ increases optimization and identifiability difficulties and does not guarantee that additional components acquire useful semantic roles. The weak utilization safeguard can discourage accidental dead components but cannot guarantee recovery of rare futures.

Third, Gaussian components may be inadequate when the conditional distribution within one semantic branch is strongly non-elliptical. Diagonal covariance is computationally attractive in high-dimensional representation spaces but ignores within-component correlations; low-rank or richer component families may be required when these dependencies are important.

Fourth, the Level-II realization assumes conditional independence across future horizons given the shared trajectory mode, apart from their common dependence on the predictive context and mode. A joint covariance or richer mode-conditioned trajectory model could capture additional residual temporal dependencies.

Fifth, sample-based CK optimization adds computational cost and Monte Carlo variability. More importantly, consistency is only desirable when both predictive paths remain faithful to data, as demonstrated by the modest fidelity tradeoffs observed in the experiments. Finally, the discrete variables used here are trajectory-level or local transition modes rather than persistent switching states. MDN-VJEPA therefore does not constitute a full switching state-space model with explicit discrete-mode dynamics.

\section{Conclusion}

We introduced MDN-VJEPA, a multimodal probabilistic JEPA that represents several plausible future latent states together with their probabilities and within-mode uncertainties. Rather than averaging separated futures into a single prediction, the model preserves them as distinct alternatives that can be carried forward into prediction and planning.

The resulting framework develops multimodal prediction along three complementary dimensions: preserving distinct future modes, organizing them into coherent trajectories, and encouraging predictive beliefs to remain consistent under temporal composition. Controlled experiments support each distinction: MDN-VJEPA closely recovers known branching laws when genuine multimodality is present, retaining the complete predictive distribution can substantially change planning decisions, trajectory-level coupling prevents incoherent switching between futures, and distributional Chapman--Kolmogorov regularization reduces disagreement between direct and recursively composed predictions.

These results suggest that multimodality in a probabilistic world model is not only about representing greater uncertainty, but about preserving the structure of that uncertainty so that it remains meaningful across time and useful for decision making. A probabilistic JEPA should preserve not only how uncertain the future is, but which distinct futures constitute that uncertainty, how they form coherent trajectories, and how those beliefs compose through time.

\bibliography{mdn_vjepa}
\bibliographystyle{tmlr}

\appendix 

\section{Symbols, Objectives, and Notation}
\label{app:notation}

This appendix collects the notation used throughout the paper and summarizes the relationship among the principal predictive distributions and training objectives. Symbols that are specific to an individual controlled experiment are defined locally in the corresponding experimental section or appendix.

\subsection{Core variables and indices}

\begin{table}[H]
\caption{Core variables and indices used throughout the paper.}
\label{tab:notation-core}
\centering
{\small
\setlength{\tabcolsep}{5pt}
\renewcommand{\arraystretch}{1.08}
\begin{tabular}{@{}p{0.22\linewidth}p{0.71\linewidth}@{}}
\toprule
Symbol & Meaning \\
\midrule
$t$
& Current time index. \\

$h$
& Prediction horizon measured from $t$; typically $h\in\{1,\ldots,H\}$. \\

$H$
& Length of the directly predicted future block or planning horizon. \\

$X_{\leq t}$
& Observed history available at the current time. \\

$X_{t+h}$
& Future observation used to construct the target representation at horizon $h$. \\

$Z_t$
& Current predictive latent state produced from the observed history. \\

$Z_{t+h}^{\mathrm T}$
& Stochastic future target representation used for predictive supervision. The superscript $\mathrm T$ denotes the target role. \\

$Y_{t,H}^{\mathrm T}$
& Complete target trajectory
$
(Z_{t+1}^{\mathrm T},\ldots,Z_{t+H}^{\mathrm T})
$.
\\

$\xi_t$
& Side information associated with the transition from $t$ to $t+1$, such as an action, elapsed time, goal, or physical constraint. \\

$\xi_{t:t+h-1}$
& Side-information sequence required to propagate from $t$ to horizon $t+h$. \\

$u\equiv u_{t:t+H-1}$
& Candidate action sequence used in the planning section; treated as the relevant part of $\xi$. \\

$d$
& Dimension of the continuous latent representation. \\

$K$
& Fixed maximum number of Gaussian-mixture components. \\

$k$
& Mixture-component index, $k\in\{1,\ldots,K\}$. \\

$\ell$
& Predictive hierarchy level,
$
\ell\in\{\mathrm I,\mathrm{II},\mathrm{III}\}
$.
\\

$B$
& Minibatch size when defining batch-level mixture-utilization quantities. \\

$i$
& Example or trajectory index within a dataset or minibatch. \\
\bottomrule
\end{tabular}
}
\end{table}

\subsection{Encoder and predictor notation}

The probabilistic JEPA formulation uses one trainable encoder parameter set $\theta$, its exponential-moving-average copy $\bar\theta$, and predictive parameters $\phi$. The same trainable encoder is evaluated in different roles; these roles should not be interpreted as separate independently parameterized encoders.

\begin{table}[H]
\caption{Encoder, predictor, and distribution notation.}
\label{tab:notation-model}
\centering
{\small
\setlength{\tabcolsep}{5pt}
\renewcommand{\arraystretch}{1.08}
\begin{tabular}{@{}p{0.30\linewidth}p{0.63\linewidth}@{}}
\toprule
Notation & Meaning \\
\midrule
$q_\theta(Z_t\mid X_{\leq t})$
& Trainable online context path producing the current predictive state. \\

$q_\theta^{\mathrm T}(Z_{t+h}\mid X_{t+h})$
& Trainable target-view evaluation used for target-side representation regularization. The superscript $\mathrm T$ denotes the target-view role, not a separate parameter set. \\

$q_{\bar\theta}(Z_{t+h}\mid X_{t+h})$
& EMA target encoder producing the stochastic target representation used for predictive supervision. \\

$\theta$
& Trainable encoder parameters. \\

$\bar\theta$
& EMA copy of $\theta$ used by the target encoder. \\

$\phi$
& Predictive parameters; when both direct and recursive heads are present, $\phi$ collectively denotes their predictive parameters. \\

$p_\phi^{(1)}$
& Learned one-step transition kernel. The superscript $(1)$ means \emph{one transition step}. \\

$p_\phi^{\mathrm I}$
& Level-I trajectory law: a product of horizon-wise mixtures. The Roman superscript $\mathrm I$ denotes the predictive hierarchy level and is distinct from $(1)$. \\

$p_\phi^{\mathrm{II}}$
& Level-II trajectory law: a mixture of trajectory products sharing one trajectory-level mode. \\

$p_\phi^{\mathrm{dir},h}$
& Horizon-$h$ distribution obtained from the direct multi-horizon predictor. \\

$p_\phi^{\mathrm{comp},h}$
& Horizon-$h$ distribution obtained by recursively composing the learned one-step transition. \\

$p_h^\star$
& True horizon-$h$ predictive law when available in a controlled setting. \\
\bottomrule
\end{tabular}
}
\end{table}

For clarity, the principal superscripts therefore encode different concepts:
\begin{equation*}
\begin{aligned}
\mathrm T
&:
&&
\text{target representation or target-view role},
\\
(1)
&:
&&
\text{one-step transition},
\\
\mathrm I,\mathrm{II},\mathrm{III}
&:
&&
\text{predictive hierarchy levels},
\\
\mathrm{dir},\mathrm{comp}
&:
&&
\text{direct and recursively composed prediction paths}.
\end{aligned}
\end{equation*}

\subsection{Mixture and latent-mode notation}

For a Gaussian-mixture predictive distribution, component $k$ is parameterized by
\begin{equation*}
\left(
\pi_{\phi,k},
\mu_{\phi,k,h},
\Sigma_{\phi,k,h}
\right),
\end{equation*}
with the dependence on the predictive context suppressed when unambiguous.

\begin{table}[H]
\caption{Mixture and uncertainty notation.}
\label{tab:notation-mixture}
\centering
{\small
\setlength{\tabcolsep}{5pt}
\renewcommand{\arraystretch}{1.08}
\begin{tabular}{@{}p{0.24\linewidth}p{0.69\linewidth}@{}}
\toprule
Symbol & Meaning \\
\midrule
$\pi_{\phi,k}$
& Predictive probability assigned to mixture component $k$, before observing the target. \\

$\boldsymbol{\pi}_\phi$
& Vector of mixture probabilities
$
(\pi_{\phi,1},\ldots,\pi_{\phi,K})
$.
\\

$\mu_{\phi,k,h}$
& Mean of component $k$ at prediction horizon $h$. \\

$\Sigma_{\phi,k,h}$
& Covariance of component $k$ at horizon $h$; represents within-mode predictive variation. \\

$\sigma_{\min}$
& Positive standard-deviation floor used to prevent pathological variance collapse. \\

$M$
& Level-II trajectory-level categorical mode shared across the complete directly predicted future block. \\

$M_{t+1}$
& Local categorical mode associated with one stochastic transition of the recursive one-step MDN kernel. \\

$\bar\mu_h$
& Mixture predictive mean at horizon $h$,
$
\bar\mu_h=\sum_k\pi_k\mu_{k,h}
$.
\\

$r_{ik}$
& Posterior responsibility of component $k$ for target trajectory $i$ after that target has been observed. \\

$\bar r_k$
& Average posterior responsibility of component $k$ over a minibatch. \\

$K_{\mathrm{eff}}$
& Entropy effective component count,
$
\exp(H(\boldsymbol{\pi}))
$; a mixture-utilization or effective-complexity diagnostic rather than a semantic mode count. \\

$\rho_{\min}$
& Small minimum-responsibility threshold used by the weak dead-component safeguard. \\

$[x]_+$
& Positive-part operator,
$
[x]_+=\max(x,0)
$.
\\
\bottomrule
\end{tabular}
}
\end{table}

The trajectory-level variable $M$ and the one-step variable $M_{t+1}$ have related but distinct meanings:
\begin{equation*}
\begin{aligned}
M
&:
&&
\text{one mode shared by a complete \emph{directly predicted} future block},
\\
M_{t+1}
&:
&&
\text{one local mixture choice in the recursive transition kernel}.
\end{aligned}
\end{equation*}
Recursive rollout does not assume that the numerical component label persists across time, and the model does not introduce a separate discrete transition law $p(M_{t+1}\mid M_t)$.

\subsection{Three predictive levels}

The predictive hierarchy can be summarized compactly as follows.

\paragraph{Level I: horizon-wise multimodality.}
Each horizon has a multimodal marginal, while the complete future factorizes as a product of mixtures:
\begin{equation*}
p_\phi^{\mathrm I}
\left(
Z_{t+1:t+H}
\mid
Z_t,\xi_{t:t+H-1}
\right)
=
\prod_{h=1}^{H}
\left[
\sum_{k=1}^{K}
\pi_{\phi,k,h}
\N
\left(
Z_{t+h};
\mu_{\phi,k,h},
\Sigma_{\phi,k,h}
\right)
\right].
\end{equation*}
The mixture identity may therefore vary independently across prediction horizons.

\paragraph{Level II: trajectory coherence.}
A single trajectory-level mode is shared across all horizons:
\begin{equation*}
p_\phi^{\mathrm{II}}
\left(
Z_{t+1:t+H}
\mid
Z_t,\xi_{t:t+H-1}
\right)
=
\sum_{k=1}^{K}
\pi_{\phi,k}
\prod_{h=1}^{H}
\N
\left(
Z_{t+h};
\mu_{\phi,k,h},
\Sigma_{\phi,k,h}
\right).
\end{equation*}
Thus, Level II replaces a product of mixtures by a mixture of trajectory products.

\paragraph{Level III: compositional consistency.}
Level III retains the Level-II direct trajectory law and additionally instantiates a recursively composable one-step kernel. It encourages
\begin{equation*}
p_\phi^{\mathrm{dir},h}
\approx
p_\phi^{\mathrm{comp},h},
\qquad
h>1,
\end{equation*}
while both paths remain supervised for predictive fidelity.

The progression is therefore
\begin{equation*}
\underbrace{\text{multimodal marginals}}_{\mathrm{Level~I}}
\quad\longrightarrow\quad
\underbrace{\text{coherent multimodal trajectories}}_{\mathrm{Level~II}}
\quad\longrightarrow\quad
\underbrace{\text{compositionally consistent dynamics}}_{\mathrm{Level~III}}.
\end{equation*}

\subsection{Predictive uncertainty}

At a fixed horizon $h$, the Level-II predictive mean is
\begin{equation*}
\bar\mu_h
=
\sum_{k=1}^{K}
\pi_k\mu_{k,h}.
\end{equation*}
The corresponding conditional covariance decomposes as
\begin{equation*}
\operatorname{Cov}
\left(
Z_{t+h}
\mid
Z_t,\xi
\right)
=
\underbrace{
\sum_{k=1}^{K}
\pi_k\Sigma_{k,h}
}_{\text{within-mode uncertainty}}
+
\underbrace{
\sum_{k=1}^{K}
\pi_k
(\mu_{k,h}-\bar\mu_h)
(\mu_{k,h}-\bar\mu_h)^\top
}_{\text{between-mode / branching uncertainty}}.
\end{equation*}
This is a decomposition of \emph{conditional predictive uncertainty}; it is not, by itself, an aleatoric--epistemic decomposition.

\subsection{Training objectives}

The complete training objective is
\begin{equation}
\begin{aligned}
\cL_{\mathrm{MDN\text{-}VJEPA}}^{(\ell)}
={}&
\cL_{\mathrm{dir}}^{(\ell)}
+
\lambda_{\mathrm{step}}\cL_{\mathrm{step}}
+
\beta_T\cL_T
+
\beta_C\cL_C
\\
&+
\lambda_{\mathrm{use}}\cR_{\mathrm{use}}
+
\mathbf{1}\{\ell=\mathrm{III}\}
\lambda_{\mathrm{CK}}\cL_{\mathrm{CK}},
\qquad
\ell\in\{\mathrm I,\mathrm{II},\mathrm{III}\}.
\end{aligned}
\label{eq:app-full-objective} \tag{Eq.\ref{eq:full-objective}}
\end{equation}
This restates \cref{eq:full-objective} for reference. The individual terms are summarized in \cref{tab:notation-objectives}.

\begin{table}[H]
\caption{Training objectives and their roles.}
\label{tab:notation-objectives}
\centering
{\small
\setlength{\tabcolsep}{4pt}
\renewcommand{\arraystretch}{1.10}
\begin{tabular}{@{}p{0.22\linewidth}p{0.27\linewidth}p{0.44\linewidth}@{}}
\toprule
Term & Main reference & Role \\
\midrule
$\cL_{\mathrm{dir}}^{\mathrm I}$
& \cref{eq:level1-nll}
& Level-I horizon-wise multimodal negative log-likelihood. \\

$\cL_{\mathrm{dir}}^{\mathrm{II}}$
& \cref{eq:mdn-nll}
& Level-II shared-mode trajectory negative log-likelihood. \\

$\cL_{\mathrm{dir}}^{\mathrm{III}}$
& $\cL_{\mathrm{dir}}^{\mathrm{III}}
\equiv
\cL_{\mathrm{dir}}^{\mathrm{II}}$
& Direct Level-III prediction retains the Level-II coherent trajectory law. \\

$\cL_{\mathrm{step}}$
& \cref{eq:step-nll}
& Fits the one-step transition used for recursive rollout. \\

$\cL_T$
& \cref{eq:target-kl}
& Regularizes the stochastic target-side representation through the trainable target-view encoder path. \\

$\cL_C$
& \cref{eq:context-kl}
& Compresses unnecessary observed-history information in the current predictive state. \\

$\cR_{\mathrm{use}}$
& \cref{eq:usage-reg}
& Weak fixed-$K$ safeguard against persistently dead mixture components. \\

$\cL_{\mathrm{CK}}$
& \cref{eq:ck-loss}
& Encourages distributional agreement between direct and recursively composed predictive laws. \\
\bottomrule
\end{tabular}
}
\end{table}

The principal objective coefficients are:

\begin{table}[H]
\caption{Loss coefficients and architectural safeguards.}
\label{tab:notation-coefficients}
\centering
{\small
\setlength{\tabcolsep}{5pt}
\renewcommand{\arraystretch}{1.08}
\begin{tabular}{@{}p{0.23\linewidth}p{0.70\linewidth}@{}}
\toprule
Symbol & Meaning \\
\midrule
$\lambda_{\mathrm{step}}$
& Weight on supervised one-step transition learning. It is required when the recursive Level-III path is instantiated and may be zero when Levels~I--II use no recursive transition. \\

$\beta_T$
& Weight on target-side representation regularization $\cL_T$. \\

$\beta_C$
& Weight on context-side information compression $\cL_C$; $\beta_C>0$ activates the explicit PIB context bottleneck. \\

$\lambda_{\mathrm{use}}$
& Weight on the weak component-utilization safeguard. \\

$\lambda_{\mathrm{CK}}$
& Weight on the Level-III distributional consistency objective. \\

$w_h$
& Relative weight assigned to horizon $h$ inside the multi-horizon CK objective. \\

$\mathcal H$
& Set of horizons at which direct--composed consistency is evaluated. \\

$\sigma_{\min}$
& Architectural standard-deviation floor; unlike the quantities above, it is not an additional loss term. \\
\bottomrule
\end{tabular}
}
\end{table}

The representation terms $\cL_T$ and $\cL_C$ and the utilization safeguard $\cR_{\mathrm{use}}$ are orthogonal to the Level-I--III predictive hierarchy. Likewise, the covariance floor $\sigma_{\min}$ is an architectural safeguard rather than a loss, and $\bar\theta$ is updated by EMA rather than by direct gradient descent.

\subsection{Direct--composed consistency notation}

Let
\begin{equation*}
\mathcal E(P,Q)
\end{equation*}
denote the energy distance between two predictive distributions. The population Level-III objective is (Eq.\ref{eq:ck-loss})
\begin{equation*}
\cL_{\mathrm{CK}}
=
\sum_{\substack{h\in\mathcal H\\h>1}}
w_h
\,
\mathcal E
\left(
p_\phi^{\mathrm{dir},h},
p_\phi^{\mathrm{comp},h}
\right).
\end{equation*}

For nonlinear MDN transitions, $p_\phi^{\mathrm{comp},h}$ is generally not available as a finite closed-form $K$-component GMM. It is therefore represented by ancestral samples. Sample-based energy distance is permutation invariant with respect to mixture-component labels, so direct and composed components need not be aligned before comparison.

In the Level-III implementation used in Experiment~4, direct samples serve as a stop-gradient reference within the CK term, while the recursive samples remain differentiable with respect to the one-step transition. The direct predictor continues to receive its ordinary predictive-likelihood gradients.

\subsection{Planning notation}

\begin{table}[H]
\caption{Notation used for distribution-aware planning.}
\label{tab:notation-planning}
\centering
{\small
\setlength{\tabcolsep}{5pt}
\renewcommand{\arraystretch}{1.08}
\begin{tabular}{@{}p{0.27\linewidth}p{0.66\linewidth}@{}}
\toprule
Symbol & Meaning \\
\midrule
$u=u_{t:t+H-1}$
& Candidate future action sequence. \\

$C(Z_{t+1:t+H},u)$
& Trajectory-level cost associated with a predicted future and candidate action sequence. \\

$\bar Z_{t+1:t+H}(u)$
& Conditional mean future trajectory under the predictive law. \\

$J_{\mathrm{mean}}(u)$
& Mean-only planning objective,
$
C(\bar Z_{t+1:t+H}(u),u)
$.
\\

$J_{\mathrm{dist}}(u)$
& Distribution-aware objective,
$
\E[C(Z_{t+1:t+H},u)\mid Z_t,u]
$.
\\

$\mathcal F$
& Failure event defined over a predicted trajectory, such as collision. \\

$\Pr(\mathcal F\mid Z_t,u)$
& Predicted probability of the failure event under the complete predictive law. \\

$\operatorname{CVaR}_{\alpha}$
& Conditional value-at-risk at tail level $\alpha$. \\

$\delta$
& Chance-constraint tolerance in
$
\Pr(\mathcal F\mid Z_t,u)\leq\delta
$.
\\
\bottomrule
\end{tabular}
}
\end{table}

Experiment-specific notation may reuse a symbol locally when its meaning is explicitly redefined; for example, the branch-separation parameter in Experiment~1 is local to that experiment.

\subsection{Evaluation terminology}

For completeness, \cref{tab:notation-metrics} summarizes the probabilistic diagnostics used in the experiments.

\begin{table}[H]
\caption{Principal evaluation quantities.}
\label{tab:notation-metrics}
\centering
{\small
\setlength{\tabcolsep}{5pt}
\renewcommand{\arraystretch}{1.08}
\begin{tabular}{@{}p{0.22\linewidth}p{0.71\linewidth}@{}}
\toprule
Quantity & Interpretation \\
\midrule
NLL
& Negative log-likelihood of a realized target under a normalized predictive density. \\

Energy Score (ES)
& Proper sample-based score comparing a predictive distribution with a realized target. \\

Energy distance (ED)
& Sample-based discrepancy between two complete probability distributions. \\

MMD
& Maximum mean discrepancy, used as an auxiliary sample-based distributional discrepancy. \\

Brier score
& Proper score for predicted probabilities of binary events such as branch identity or failure. \\

ECE
& Expected calibration error for predicted branch or failure probabilities. \\

$K_{\mathrm{eff}}$
& Effective mixture complexity derived from predictive weights; used as a component-utilization diagnostic. \\

$\bar r_k$
& Mean posterior responsibility of component $k$; used to diagnose whether components receive target assignments. \\

Best-of-$N$
& Fixed-sample coverage diagnostic based on the closest among $N$ predictive samples; not used as a primary proper probabilistic score. \\
\bottomrule
\end{tabular}
}
\end{table}

Two distinctions are particularly important throughout the paper. First, the predictive mixture weight $\pi_{ik}$ is available \emph{before} observing target $i$, whereas the posterior responsibility $r_{ik}$ is computed \emph{after} observing that target. Second, Energy Score compares a predictive distribution with an observed outcome, whereas energy distance compares two complete distributions.

\section{Marginalization of the trajectory-level mixture}
\label{app:level2-marginalization}

This appendix derives the horizon-wise marginal of the Level-II trajectory distribution in \cref{eq:trajectory-mixture}. The derivation shows that sharing one trajectory-level mode across the future block preserves the Gaussian-mixture form at every individual horizon, and therefore preserves the capacity for horizon-wise multimodality while changing the joint coupling across time.

For compactness, define the predictive conditioning information
\[
C_t
\coloneqq
\left(
Z_t,\xi_{t:t+H-1}
\right).
\]
Restating the Level-II trajectory distribution from \cref{eq:trajectory-mixture},
\begin{equation}
p_\phi^{\mathrm{II}}
\left(
Z_{t+1:t+H}\mid C_t
\right)
=
\sum_{k=1}^{K}
\pi_{\phi,k}
\prod_{j=1}^{H}
\N\left(
Z_{t+j};
\mu_{\phi,k,j},
\Sigma_{\phi,k,j}
\right),
\tag{\ref{eq:trajectory-mixture}}
\end{equation}
where the dependence of
$\pi_{\phi,k}$,
$\mu_{\phi,k,j}$, and
$\Sigma_{\phi,k,j}$
on $C_t$ is suppressed for readability.

\paragraph{Direct marginalization.}
Fix a horizon
$
h\in\{1,\ldots,H\}
$.
To obtain the predictive distribution of $Z_{t+h}$, we integrate out all remaining future states:
\[
p_\phi^{\mathrm{II}}
\left(
Z_{t+h}\mid C_t
\right)
=
\int
p_\phi^{\mathrm{II}}
\left(
Z_{t+1:t+H}\mid C_t
\right)
\prod_{\substack{j=1\\j\neq h}}^{H}
dZ_{t+j}.
\]
Substituting \cref{eq:trajectory-mixture} gives
\begin{align*}
p_\phi^{\mathrm{II}}
\left(
Z_{t+h}\mid C_t
\right)
&=
\int
\left[
\sum_{k=1}^{K}
\pi_{\phi,k}
\prod_{j=1}^{H}
\N\left(
Z_{t+j};
\mu_{\phi,k,j},
\Sigma_{\phi,k,j}
\right)
\right]
\prod_{\substack{j=1\\j\neq h}}^{H}
dZ_{t+j}
\\
&=
\sum_{k=1}^{K}
\pi_{\phi,k}
\int
\prod_{j=1}^{H}
\N\left(
Z_{t+j};
\mu_{\phi,k,j},
\Sigma_{\phi,k,j}
\right)
\prod_{\substack{j=1\\j\neq h}}^{H}
dZ_{t+j},
\end{align*}
where the finite sum has been exchanged with the integral.

Under the Level-II factorization in \cref{eq:trajectory-mixture}, the horizon-$h$ factor can be separated:
\begin{align*}
p_\phi^{\mathrm{II}}
\left(
Z_{t+h}\mid C_t
\right)
&=
\sum_{k=1}^{K}
\pi_{\phi,k}
\N\left(
Z_{t+h};
\mu_{\phi,k,h},
\Sigma_{\phi,k,h}
\right)
\\
&\qquad\times
\prod_{\substack{j=1\\j\neq h}}^{H}
\int
\N\left(
Z_{t+j};
\mu_{\phi,k,j},
\Sigma_{\phi,k,j}
\right)
dZ_{t+j}.
\end{align*}
Every Gaussian factor is a normalized probability density, so
\[
\int
\N\left(
Z_{t+j};
\mu_{\phi,k,j},
\Sigma_{\phi,k,j}
\right)
dZ_{t+j}
=
1.
\]
Consequently,
\begin{equation}
p_\phi^{\mathrm{II}}
\left(
Z_{t+h}\mid C_t
\right)
=
\sum_{k=1}^{K}
\pi_{\phi,k}
\N\left(
Z_{t+h};
\mu_{\phi,k,h},
\Sigma_{\phi,k,h}
\right),
\tag{\ref{eq:level2-marginal}}
\end{equation}
which is the horizon-wise Gaussian-mixture marginal in \cref{eq:level2-marginal}. Thus, trajectory-level coupling does not collapse the marginal to a single component: the horizon-$h$ law retains the same $K$-term mixture representation and therefore retains the capacity to express separated predictive modes.

\paragraph{Latent-variable derivation.}
The same result follows directly from the trajectory-level latent variable. Restating the shared-mode construction from \cref{eq:traj-mode},
\begin{equation}
M
\sim
\Cat
\left(
\pi_{\phi,1},\ldots,\pi_{\phi,K}
\right),
\tag{\ref{eq:traj-mode}}
\end{equation}
with
\[
p_\phi
\left(
Z_{t+h}\mid M=k,C_t
\right)
=
\N\left(
Z_{t+h};
\mu_{\phi,k,h},
\Sigma_{\phi,k,h}
\right).
\]
Marginalizing $M$ by the law of total probability gives
\begin{align*}
p_\phi^{\mathrm{II}}
\left(
Z_{t+h}\mid C_t
\right)
&=
\sum_{k=1}^{K}
p_\phi
\left(
M=k\mid C_t
\right)
p_\phi
\left(
Z_{t+h}\mid M=k,C_t
\right)
\\
&=
\sum_{k=1}^{K}
\pi_{\phi,k}
\N\left(
Z_{t+h};
\mu_{\phi,k,h},
\Sigma_{\phi,k,h}
\right).
\end{align*}
This derivation makes the mixture semantics explicit. Although the latent variable $M$ is marginalized out in the horizon-wise predictive law, the mixture structure induced by its possible values remains. Sharing $M$ across the complete future block therefore changes how predictions are coupled jointly without removing their Gaussian-mixture form at an individual horizon.

\paragraph{Level I \textit{vs.} Level II.}
The distinction between Levels~I and II is therefore not necessarily visible in any individual horizon-wise marginal; it lies in the joint coupling of those marginals across the future block. Restating the Level-I product-of-mixtures construction from \cref{eq:level1-trajectory},
\begin{equation}
p_\phi^{\mathrm I}
\left(
Z_{t+1:t+H}\mid C_t
\right)
=
\prod_{j=1}^{H}
\left[
\sum_{k=1}^{K}
\pi_{\phi,k,j}
\N\left(
Z_{t+j};
\mu_{\phi,k,j},
\Sigma_{\phi,k,j}
\right)
\right],
\tag{\ref{eq:level1-trajectory}}
\end{equation}
so the mixture identity can be selected independently at different horizons. By contrast, Level II uses the mixture-of-products construction in \cref{eq:trajectory-mixture},
\begin{equation}
p_\phi^{\mathrm{II}}
\left(
Z_{t+1:t+H}\mid C_t
\right)
=
\sum_{k=1}^{K}
\pi_{\phi,k}
\left[
\prod_{j=1}^{H}
\N\left(
Z_{t+j};
\mu_{\phi,k,j},
\Sigma_{\phi,k,j}
\right)
\right],
\tag{\ref{eq:trajectory-mixture}}
\end{equation}
so one shared latent value $M=k$ couples all prediction horizons.

The distinction is especially clear in the matched-marginal case. Suppose that, at every horizon $h$,
\[
\pi_{\phi,k,h}
=
\pi_{\phi,k},
\qquad
\mu_{\phi,k,h}^{\mathrm I}
=
\mu_{\phi,k,h}^{\mathrm{II}},
\qquad
\Sigma_{\phi,k,h}^{\mathrm I}
=
\Sigma_{\phi,k,h}^{\mathrm{II}}
\]
for all $k$. Then Levels~I and II have exactly the same marginal distribution at every horizon,
\[
p_\phi^{\mathrm I}
\left(
Z_{t+h}\mid C_t
\right)
=
p_\phi^{\mathrm{II}}
\left(
Z_{t+h}\mid C_t
\right),
\qquad
h=1,\ldots,H,
\]
while their joint trajectory distributions generally remain different:
\[
p_\phi^{\mathrm I}
\left(
Z_{t+1:t+H}\mid C_t
\right)
\neq
p_\phi^{\mathrm{II}}
\left(
Z_{t+1:t+H}\mid C_t
\right).
\]
The reason is structural: Level I independently mixes at each horizon, whereas Level II marginalizes a single shared trajectory-level mode. Hence,
\[
\text{identical horizon-wise predictive marginals}
\;\not\Rightarrow\;
\text{identical joint trajectory distributions}.
\]

This distinction is precisely what Experiment~3 isolates. Level I and Level II can preserve the same relevant alternatives at individual horizons, yet the shared trajectory-level mode in Level II determines whether those alternatives combine into coherent complete futures.

\paragraph{Extension beyond conditionally factorized horizons.}
The derivation above follows the conditionally factorized Level-II realization in \cref{eq:trajectory-mixture}. If the mode-conditioned future is instead represented by a joint Gaussian over the concatenated trajectory,
\[
Z_{t+1:t+H}
\mid
M=k,C_t
\sim
\N
\left(
\mu_{k,1:H},
\Sigma_{k,1:H}
\right),
\]
the same qualitative conclusion holds. Marginalizing a joint Gaussian to horizon $h$ yields the corresponding Gaussian marginal, and marginalizing over $M$ therefore again produces a Gaussian mixture with the shared trajectory-level weights $\pi_{\phi,k}$. Thus, preservation of the horizon-wise mixture form is not restricted to the conditionally independent within-mode realization.

\section{Additional Experimental Details and Results}
\label{app:experiments}

\paragraph{Compute environment.}
All four experiments were run in Google Colab \citep{colab2024}. The recorded host configuration used an x86\_64 Intel Xeon CPU at 2.00\,GHz with one physical core and two logical CPUs, approximately 13.6\,GB of system memory, and approximately 121\,GB of available disk space. The CPU supported AVX2 and AVX-512 instructions and was hosted under KVM virtualization. The experiment scripts selected CUDA when available and otherwise used the CPU.

\subsection{Experiment 1: controlled branching dynamics}
\label{app:exp1-details}

\paragraph{Protocol and implementation.}
Experiment~1 is a controlled latent-space test of the predictive family rather than a test of representation learning. For every seed, train, validation, and test contexts are generated independently, and the hidden branch realization is sampled only after the observed context has been generated. The test split is used only for final evaluation. Semantic left/right matching uses the known synthetic branch prototypes only after prediction, so it cannot leak future information into optimization or branch selection.

The primary predictor is an MLP with two hidden layers and SiLU activations. The deterministic model is trained by squared error, whereas Gaussian VJEPA and MDN-VJEPA are trained by negative log-likelihood with a diagonal covariance and variance floor. Early stopping uses the validation predictive loss. The fixed-$K$ usage regularizer is intentionally disabled in the primary experiment so that successful use of both components is not imposed by the objective. The main settings are summarized in \cref{tab:exp1-hparams}.

\begin{table}[H]
\caption{Experiment~1 implementation settings.}
\label{tab:exp1-hparams}
\centering
{\scriptsize
\setlength{\tabcolsep}{4pt}
\renewcommand{\arraystretch}{0.95}
\begin{tabular}{@{}ll@{}}
\toprule
Setting & Value \\
\midrule
Seeds & $\{0,1,2,3,4\}$ \\
Train / validation / test size & $6000/1500/2000$ \\
Latent dimension & $2$ \\
Prediction horizon & $H=1$ \\
MDN components & $K=2$ \\
Base branch separation & $\delta=1.25$ \\
Base within-mode standard deviation & $\sigma_z=0.12$ \\
Base branch-logit bias & $b=0$ \\
MLP hidden width & $96$ \\
Gaussian standard-deviation floor & $\sigma_{\min}=10^{-3}$ \\
Maximum epochs / patience & $180/20$ \\
Batch size & $256$ \\
Optimizer & AdamW \\
Learning rate / weight decay & $2\times10^{-3}/10^{-4}$ \\
Usage penalty & $\lambda_{\mathrm{use}}=0$ \\
Evaluation samples per conditional law & $128$ \\
Matched-Gaussian hidden width & $99$ \\
MDN / matched-Gaussian parameters & $10{,}570/10{,}597$ \\
\bottomrule
\end{tabular}
}
\end{table}

\paragraph{Full base comparison and capacity control.}
For completeness, \cref{tab:exp1-base-full} includes both the standard Gaussian predictor and the parameter-matched Gaussian control. Increasing Gaussian capacity from the standard predictor to the matched predictor does not materially change the result: their energy distances are $0.1922\pm0.0034$ and $0.1913\pm0.0024$, respectively. By contrast, MDN-VJEPA reaches $0.0022\pm0.0007$. Across the 5 paired seeds, the matched-Gaussian-minus-MDN energy-distance gap is $0.1891\pm0.0024$, and the corresponding NLL gap is $1.6192\pm0.0090$.

\begin{table}[H]
\caption{Full base-condition comparison. Values are mean $\pm$ standard deviation over 5 seeds.}
\label{tab:exp1-base-full}
\centering
{\scriptsize
\setlength{\tabcolsep}{3.5pt}
\renewcommand{\arraystretch}{0.95}
\begin{tabular}{@{}lccc@{}}
\toprule
Model & Energy distance $\downarrow$ & Energy score $\downarrow$ & NLL $\downarrow$ \\
\midrule
Deterministic JEPA & $0.8685\pm0.0128$ & $0.9894\pm0.0141$ & -- \\
Gaussian VJEPA & $0.1922\pm0.0034$ & $0.6508\pm0.0074$ & $0.8248\pm0.0466$ \\
Gaussian VJEPA (matched) & $0.1913\pm0.0024$ & $0.6493\pm0.0078$ & $0.8259\pm0.0446$ \\
MDN-VJEPA & $0.0022\pm0.0007$ & $0.5554\pm0.0087$ & $-0.7934\pm0.0464$ \\
Generator oracle & -- & $0.5541\pm0.0078$ & $-0.8164\pm0.0479$ \\
\bottomrule
\end{tabular}
}
\end{table}

\paragraph{Mixture recovery diagnostics.}
\Cref{tab:exp1-mdn-diagnostics} reports diagnostics that use the known synthetic law. The probability error and calibration error are small, both terms of the covariance decomposition are recovered to within a few percent on average, and the effective component count closely matches the true predictive-mixture complexity. The near-balanced responsibilities are expected in the base regime because the global branch-logit bias is zero; importantly, this behavior is obtained with $\lambda_{\mathrm{use}}=0$.

\begin{table}[H]
\caption{MDN-VJEPA recovery diagnostics in the base condition. Values are mean $\pm$ standard deviation over 5 seeds.}
\label{tab:exp1-mdn-diagnostics}
\centering
{\scriptsize
\setlength{\tabcolsep}{5pt}
\renewcommand{\arraystretch}{0.95}
\begin{tabular}{@{}lr@{}}
\toprule
Diagnostic & Value \\
\midrule
Excess NLL over oracle & $0.0231\pm0.0075$ \\
Branch-probability MAE & $0.0115\pm0.0025$ \\
Branch ECE & $0.0271\pm0.0058$ \\
Within-covariance relative error & $5.71\%\pm4.12\%$ \\
Between-covariance relative error & $3.69\%\pm1.04\%$ \\
Effective component count & $1.7873\pm0.0079$ \\
Effective-$K$ absolute error & $0.0048\pm0.0062$ \\
Minimum average responsibility & $0.4882\pm0.0074$ \\
Maximum average responsibility & $0.5118\pm0.0074$ \\
\bottomrule
\end{tabular}
}
\end{table}

\paragraph{Exact separation sweep.}
The branch-separation values underlying \cref{fig:exp1-separation} are given in \cref{tab:exp1-separation-values}. When $\delta\in\{0.05,0.10\}$, the components overlap strongly and the decomposition into individually meaningful branches is weakly identifiable even though the predictive density can be fitted accurately. This illustrates an important distinction: density recovery does not by itself guarantee semantic identification of individual mixture components when those components are nearly coincident.

\begin{table}[H]
\caption{Energy distance to the true law under increasing branch separation. Values are mean $\pm$ standard deviation over 5 seeds.}
\label{tab:exp1-separation-values}
\centering
{\scriptsize
\setlength{\tabcolsep}{5pt}
\renewcommand{\arraystretch}{0.95}
\begin{tabular}{@{}ccc@{}}
\toprule
$\delta$ & Gaussian VJEPA (matched) & MDN-VJEPA \\
\midrule
$0.05$ & $0.0007\pm0.0004$ & $0.0008\pm0.0004$ \\
$0.10$ & $0.0011\pm0.0006$ & $0.0015\pm0.0008$ \\
$0.25$ & $0.0069\pm0.0007$ & $0.0011\pm0.0003$ \\
$0.50$ & $0.0391\pm0.0008$ & $0.0010\pm0.0005$ \\
$1.00$ & $0.1379\pm0.0031$ & $0.0012\pm0.0003$ \\
$1.25$ & $0.1913\pm0.0024$ & $0.0022\pm0.0007$ \\
\bottomrule
\end{tabular}
}
\end{table}

\paragraph{Within-mode noise and branch imbalance.}
The remaining one-factor-at-a-time sweeps test complementary ways in which the target law moves toward or away from clear multimodality. Increasing $\sigma_z$ makes the two branches overlap more strongly, so the matched Gaussian improves from energy distance $0.2205\pm0.0049$ at $\sigma_z=0.08$ to $0.0144\pm0.0015$ at $\sigma_z=0.80$; MDN-VJEPA remains close to the true law throughout. Likewise, the Gaussian approximation is worst near a balanced mixture and improves when one branch becomes dominant, whereas MDN-VJEPA remains near $2\times10^{-3}$ across the tested imbalance range. Exact values are reported in \cref{tab:exp1-secondary-sweeps} and visualized in \cref{fig:exp1-secondary-sweeps}.

\begin{table}[H]
\caption{Secondary robustness sweeps. Each entry is energy distance to the true conditional law, reported as mean $\pm$ standard deviation over 5 seeds.}
\label{tab:exp1-secondary-sweeps}
\centering
{\scriptsize
\setlength{\tabcolsep}{3pt}
\renewcommand{\arraystretch}{0.95}
\begin{tabular}{@{}ccc@{\hspace{1.2em}}ccc@{}}
\toprule
$\sigma_z$ & Gaussian & MDN & $b$ & Gaussian & MDN \\
\midrule
$0.08$ & $0.2205\pm0.0049$ & $0.0026\pm0.0018$ & $-2$ & $0.1537\pm0.0104$ & $0.0017\pm0.0013$ \\
$0.12$ & $0.1913\pm0.0024$ & $0.0022\pm0.0007$ & $-1$ & $0.1807\pm0.0068$ & $0.0020\pm0.0004$ \\
$0.25$ & $0.1201\pm0.0021$ & $0.0020\pm0.0004$ & $0$ & $0.1913\pm0.0024$ & $0.0022\pm0.0007$ \\
$0.50$ & $0.0454\pm0.0016$ & $0.0026\pm0.0007$ & $1$ & $0.1819\pm0.0017$ & $0.0025\pm0.0016$ \\
$0.80$ & $0.0144\pm0.0015$ & $0.0034\pm0.0013$ & $2$ & $0.1613\pm0.0097$ & $0.0019\pm0.0014$ \\
\bottomrule
\end{tabular}
}
\end{table}

\begin{figure}[H]
\centering
\includegraphics[width=\linewidth]{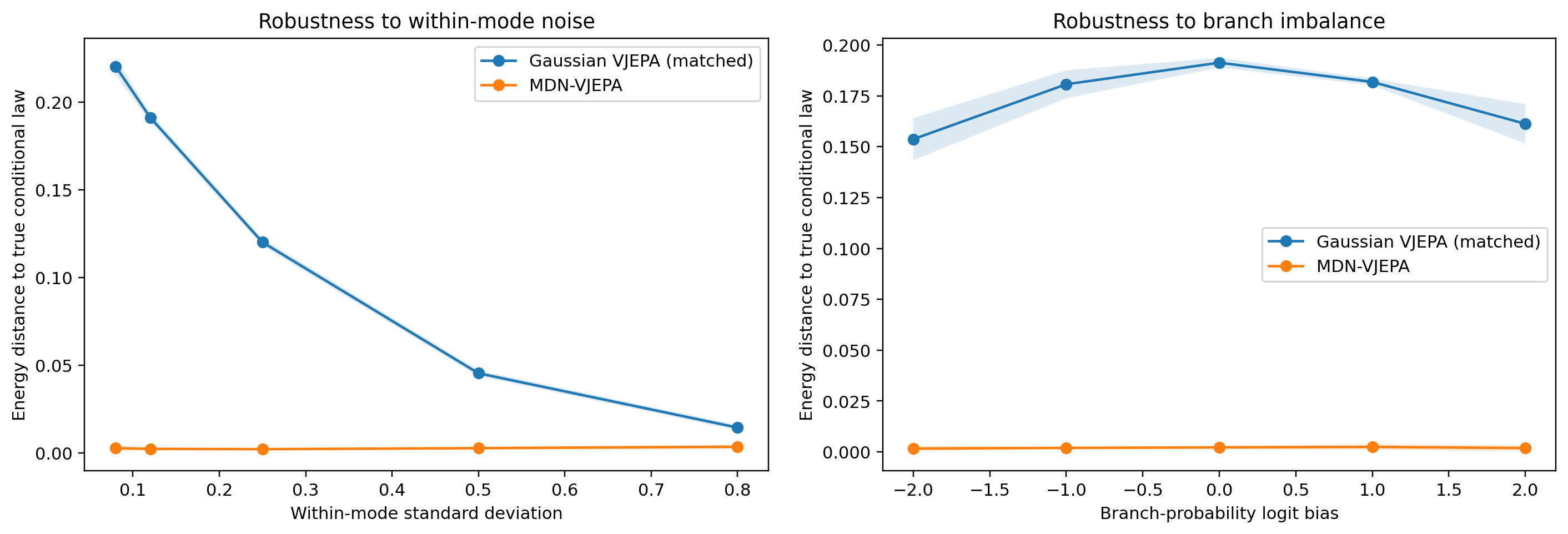}
\caption{Secondary robustness tests. Left: increasing within-mode noise progressively blurs the two modes, making a single Gaussian increasingly adequate. Right: the Gaussian approximation is most challenged near balanced branch probabilities and improves as one branch dominates. MDN-VJEPA remains close to the known law throughout both sweeps.}
\label{fig:exp1-secondary-sweeps}
\end{figure}

\paragraph{Calibration.}
\Cref{fig:exp1-calibration} shows the branch-probability reliability diagram for the pre-specified representative seed. The aggregate 5-seed ECE is $0.0271\pm0.0058$. The calibration plot is diagnostic only: branch labels and ground-truth probabilities are available because this is a synthetic controlled law and are never used by the predictor during training.

\begin{figure}[H]
\centering
\includegraphics[width=0.38\linewidth]{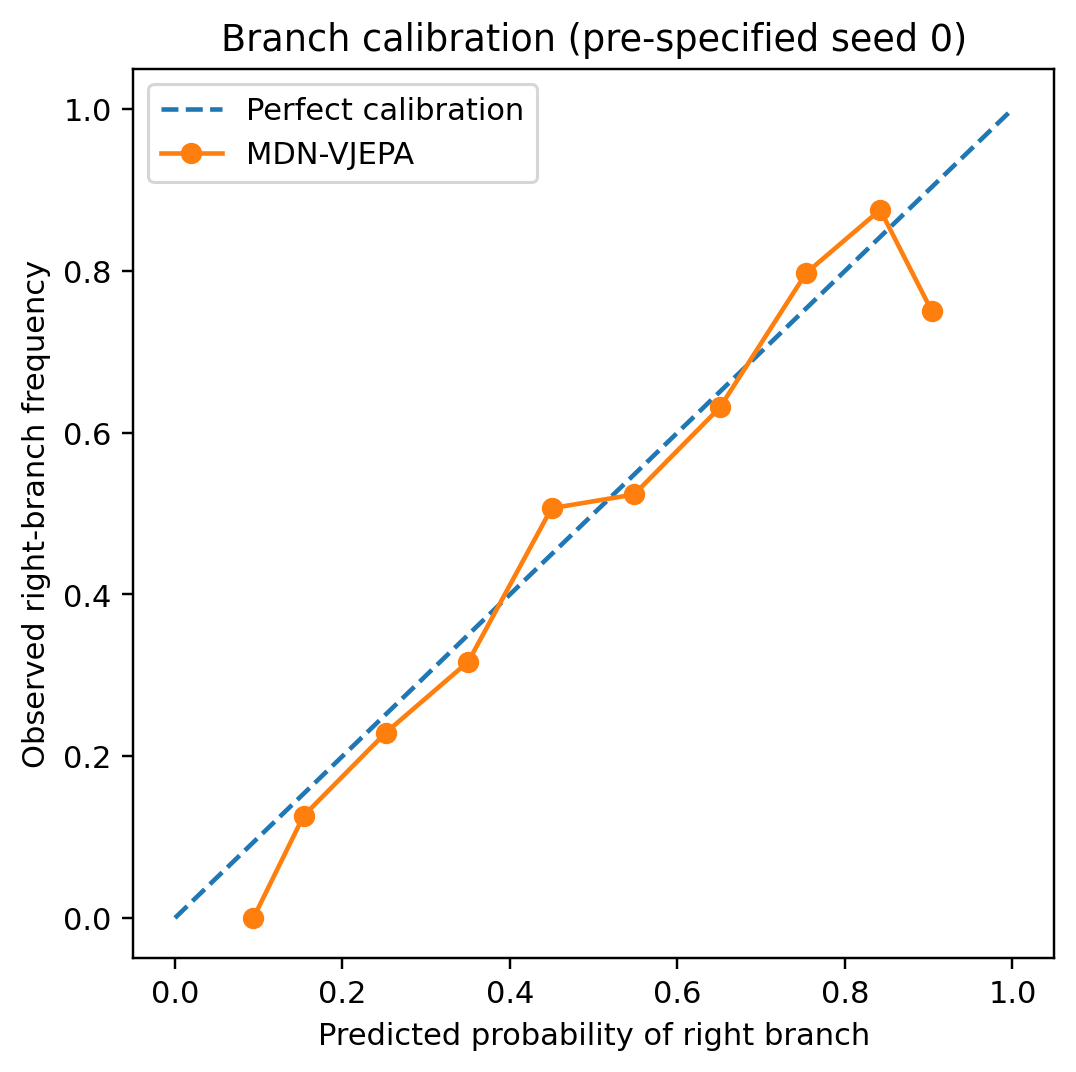}
\caption{Representative branch-probability reliability diagram. The dashed diagonal denotes perfect calibration.}
\label{fig:exp1-calibration}
\end{figure}

\subsection{Experiment 2: stochastic robot navigation}
\label{app:exp2-details}

\paragraph{Controlled navigation simulator.}
The benchmark operates directly in a two-dimensional predictive state space so that the conditional trajectory law and downstream collision events are known exactly. Each episode starts from
\begin{equation}
Z_t=
\begin{bmatrix}
0\\
Y_0
\end{bmatrix},
\qquad
Y_0\sim\N(0,0.04^2),
\end{equation}
and an action
\begin{equation}
u\sim\operatorname{Unif}[-0.55,0.55]
\end{equation}
is observed by the predictor. The probability of the upper branch is
\begin{equation}
p_{\uparrow}(Y_0,u)
=
\operatorname{sigmoid}
\left(
-1.4u+0.8Y_0
\right).
\end{equation}
The realized hidden branch
\begin{equation}
M^\star
\sim
\operatorname{Bernoulli}
\left(
p_{\uparrow}(Y_0,u)
\right)
\end{equation}
is sampled only after the observable state--action context has been formed and is never provided to the predictive model.

For horizon index $j\in\{1,\ldots,H\}$, let
\begin{equation}
x_j=\frac{j}{H},
\qquad
s_j=\sin(\pi x_j).
\end{equation}
The two branch means are
\begin{align}
\mu_j^{\downarrow}
&=
\begin{bmatrix}
x_j\\
Y_0+(u-a_{\mathrm b})s_j
\end{bmatrix},
\\
\mu_j^{\uparrow}
&=
\begin{bmatrix}
x_j\\
Y_0+(u+a_{\mathrm b})s_j
\end{bmatrix},
\end{align}
with branch amplitude $a_{\mathrm b}=0.24$. After the branch is sampled, independent Gaussian perturbations with standard deviations $\sigma_x=0.004$ and $\sigma_y=0.018$ are added at every horizon. The resulting law therefore contains two concentrated curved trajectory families with context-dependent probabilities and nonzero within-branch uncertainty.

The obstacle is a disc centered at $(0.5,0)$ with radius $0.18$. A rollout is counted as a collision if any predicted state enters the disc. The goal is centered at $(1,Y_0)$ with tolerance $0.10$, and path length is the sum of Euclidean increments from the initial state through the $H$ predicted states. The base planning objective is
\begin{equation}
C(\tau)
=
L(\tau)
+
\lambda_{\mathrm{col}}
\mathbf{1}\{\tau\text{ collides}\},
\end{equation}
with $\lambda_{\mathrm{col}}=8$.

\paragraph{Data separation and leakage safeguards.}
For each of 5 seeds we independently generate $8000/1800/2200$ training, validation, and test trajectories. Exact state--action overlap between splits is explicitly checked. One-step transition training uses only teacher-forced transitions extracted from trajectories within the corresponding training or validation split; no trajectory fragments cross split boundaries. Test targets never affect checkpoint selection.

Planning uses a separately generated collection of initial states that is independent of all training, validation, and test trajectories. Learned planners have access only to the learned world model during action selection; for these planners, the true simulator is invoked only after the action has been chosen. The oracle planner instead uses simulator samples during action selection, from a random stream distinct from the stream subsequently used to evaluate the selected action. Ground-truth branch information is used only for held-out diagnostics such as semantic branch matching and never to select a component at deployment time.

\paragraph{Training and evaluation settings.}
\Cref{tab:exp2-hparams} summarizes the primary configuration.

\begin{table}[H]
\caption{Experiment~2 training and evaluation settings.}
\label{tab:exp2-hparams}
\centering
{\scriptsize
\setlength{\tabcolsep}{4pt}
\renewcommand{\arraystretch}{0.95}
\begin{tabular}{@{}ll@{}}
\toprule
Setting & Value \\
\midrule
Seeds & $\{0,1,2,3,4\}$ \\
Train / validation / test trajectories & $8000/1800/2200$ \\
Prediction horizon & $H=12$ \\
State dimension & $2$ \\
MDN components & $K=2$ \\
Hidden width & $128$ \\
Maximum training epochs & $200$ \\
Early-stopping patience & $30$ \\
Batch size & $256$ \\
Optimizer & AdamW \\
Learning rate / weight decay & $10^{-3}/10^{-4}$ \\
Gradient-norm clipping & $5$ \\
Gaussian standard-deviation floor & $10^{-3}$ \\
MDN common warm-start & $25$ epochs \\
Usage safeguard $(\lambda_{\mathrm{use}},\rho_{\min})$ & $(10,0.10)$ \\
Direct / one-step likelihood weights & $1/1$ \\
Final CK weight & $5$ \\
CK ramp duration & $30$ epochs \\
CK training horizons & $\{4,8,12\}$ \\
CK samples per training context & $16$ \\
Gumbel temperature & $1.0\rightarrow0.25$ \\
Direct predictive evaluation samples & $128$ \\
CK test contexts / samples & $512/96$ \\
Planning contexts & $160$ \\
Candidate actions & $41$ \\
Standard planning samples & $128$ \\
CVaR planning samples & $512$ \\
Simulator evaluation rollouts & $768$ \\
Oracle-selection rollouts & $256$ \\
Collision penalty & $8$ \\
Chance-constraint threshold & $0.05$ \\
CVaR level & $0.90$ \\
CK contexts per update & $64$ \\
CK update frequency & every $2$ direct minibatches \\
CK training estimator & differentiable energy-distance V-statistic \\
CK validation contexts / samples & $128/24$ \\
\bottomrule
\end{tabular}
}
\end{table}

The no-CK and CK MDNs first share an identical $25$-epoch warm-start and are then cloned from the same parameters. The warm-start combines the direct trajectory NLL, the one-step NLL, and the fixed-$K$ component-utilization safeguard. Before combination, the direct trajectory NLL is divided by $H$ so that its numerical scale does not grow simply with the number of predicted horizons; the normalized direct and one-step likelihood terms then receive weights $1/1$. During fine-tuning, both likelihoods are optimized jointly from the same training-trajectory minibatch. For CK training, hard straight-through Gumbel--Softmax samples are used in the forward pass so that each generated state lies on one actual mixture component while retaining a differentiable surrogate gradient for the categorical choice. The CK coefficient is increased gradually over the first $30$ fine-tuning epochs.

Checkpoint selection is validation-only. For the CK arm, the validation score includes both predictive likelihood and an exact-categorical sample estimate of validation direct-versus-composed discrepancy. No test, planning, collision, or oracle metric participates in checkpoint selection.

\paragraph{Capacity control.}
\Cref{tab:exp2-capacity} summarizes the predictor capacities used in this experiment. The direct MDN predictor contains $29{,}666$ trainable parameters. We choose a Gaussian predictor with hidden width $148$, yielding $29{,}796$ parameters, so its direct predictive capacity is closely matched to the MDN direct head. The complete MDN world model contains $48{,}108$ parameters because it additionally contains the one-step transition network required for recursive propagation and CK evaluation.

\begin{table}[H]
\caption{Predictor parameter counts used in Experiment~2. For the MDN, the total additionally includes the one-step transition network used for recursive composition.}
\label{tab:exp2-capacity}
\centering
{\scriptsize
\setlength{\tabcolsep}{5pt}
\renewcommand{\arraystretch}{0.95}
\begin{tabular}{@{}lrr@{}}
\toprule
Model & Direct predictor & Total world model \\
\midrule
Deterministic JEPA & $20{,}120$ & $20{,}120$ \\
Gaussian VJEPA (matched) & $29{,}796$ & $29{,}796$ \\
Branch-JEPA controlled atomic & $23{,}474$ & $23{,}474$ \\
MDN-VJEPA no CK & $29{,}666$ & $48{,}108$ \\
MDN-VJEPA + CK & $29{,}666$ & $48{,}108$ \\
\bottomrule
\end{tabular}
}
\end{table}

\paragraph{Calibration, branch support, and component utilization.}
\Cref{tab:exp2-calibration} provides the detailed diagnostics underlying the compact predictive comparison in the main text. Both MDN-VJEPA and the controlled atomic predictor identify both semantic route families: weighted support precision and branch coverage are $1.0$ for every multimodal model under the fixed evaluation tolerance. MDN-VJEPA additionally reproduces the continuous law around those branches, explaining its substantially smaller energy distance.

\begin{table}[H]
\caption{Calibration and mixture diagnostics for Experiment~2. Values are mean $\pm$ standard deviation over 5 seeds.}
\label{tab:exp2-calibration}
\centering
{\scriptsize
\setlength{\tabcolsep}{3.2pt}
\renewcommand{\arraystretch}{0.95}
\begin{tabular}{@{}lcccc@{}}
\toprule
Model
& Failure ECE $\downarrow$
& Branch ECE $\downarrow$
& Branch MAE $\downarrow$
& $K_{\mathrm{eff}}$ \\
\midrule
Branch-JEPA controlled atomic
& $0.0339\pm0.0073$
& $0.0200\pm0.0093$
& $0.0129\pm0.0071$
& $1.9561\pm0.0061$ \\
MDN-VJEPA
& $0.0204\pm0.0062$
& $0.0241\pm0.0102$
& $0.0162\pm0.0048$
& $1.9530\pm0.0101$ \\
MDN-VJEPA + CK
& $0.0386\pm0.0170$
& $0.0508\pm0.0235$
& $0.0535\pm0.0281$
& $1.8947\pm0.0311$ \\
Generator oracle
& $0.0147\pm0.0033$
& $0.0159\pm0.0060$
& $0$
& $1.9535\pm0.0004$ \\
\bottomrule
\end{tabular}
}
\end{table}

The corresponding matched-support trajectory errors are $0.0039\pm0.0005$, $0.0047\pm0.0005$, and $0.0047\pm0.0006$ for the atomic, MDN, and MDN+CK predictors, respectively. Thus, CK does not destroy the learned route geometry even though it modestly perturbs the mixture probabilities. Across all seeds, the minimum effective component count is $1.942$ for MDN-VJEPA without CK and $1.849$ with CK. The minimum average posterior responsibility is $0.4582$, providing no evidence of accidental component collapse. The representative collision-probability reliability diagram in \cref{fig:exp2-failure-calibration} provides a complementary visual calibration check, with bin counts shown explicitly to indicate the empirical support of each reliability estimate.

\begin{figure}[H]
\centering
\includegraphics[width=0.42\linewidth]{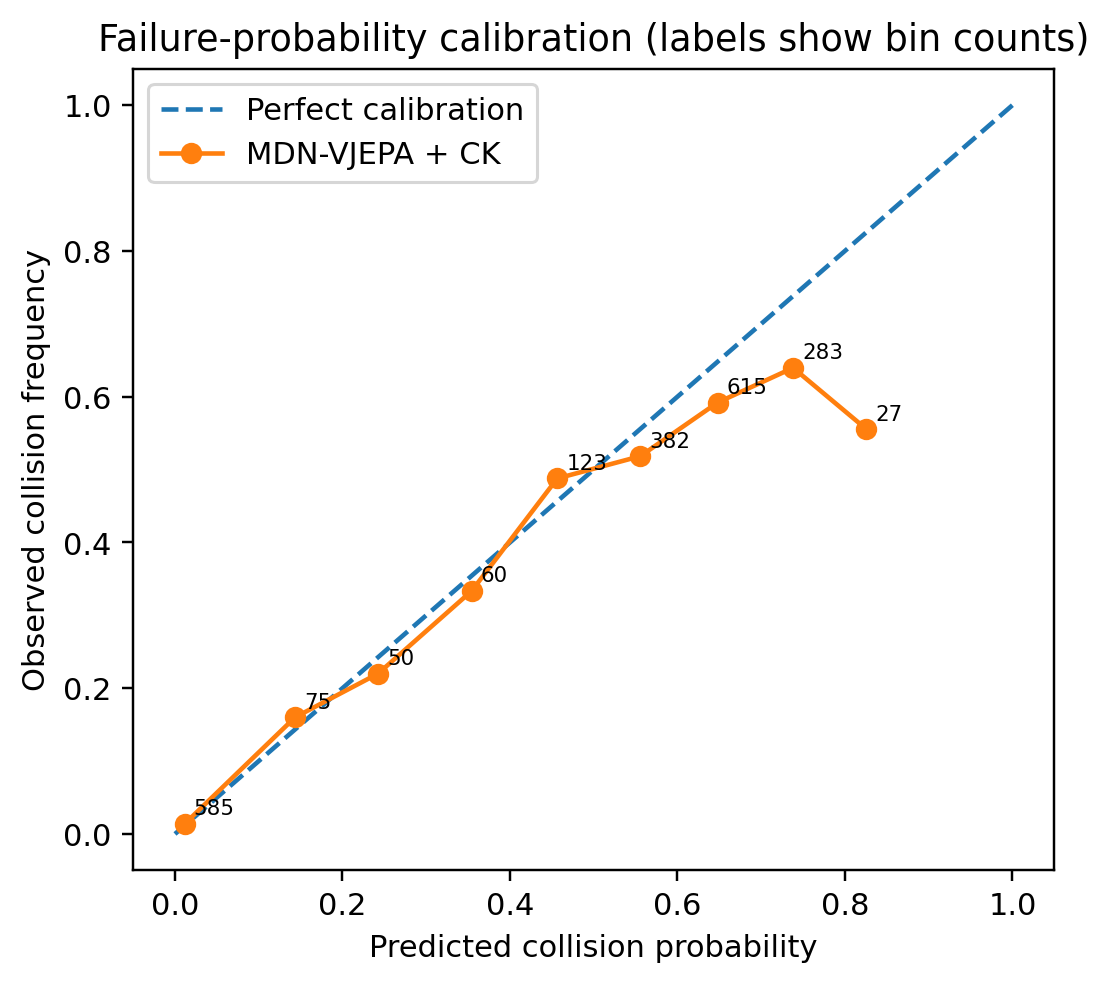}
\caption{Representative collision-probability reliability diagram for MDN-VJEPA+CK. Numbers denote the number of test examples in each probability bin. Ground-truth collision events are used only for evaluation.}
\label{fig:exp2-failure-calibration}
\end{figure}

\paragraph{Full planning comparison.}
\Cref{tab:exp2-planning-full} includes all planning variants. Goal rate is $1.0$ for every method, so differences in success arise from collisions rather than failure to reach the terminal region. The expected-cost and CVaR planners based on the MDN mixture both approach the oracle objective. In contrast, the Gaussian achieves essentially zero collisions through a substantially longer route, while the atomic predictor retains short trajectories but underestimates some local collision uncertainty.

\begin{table}[H]
\caption{Full planning comparison for Experiment~2. Values are mean $\pm$ standard deviation over 5 seeds.}
\label{tab:exp2-planning-full}
\centering
{\scriptsize
\setlength{\tabcolsep}{2.5pt}
\renewcommand{\arraystretch}{0.94}
\begin{tabular}{@{}lcccc@{}}
\toprule
Planner
& Success $\uparrow$
& Collision $\downarrow$
& Path length $\downarrow$
& True objective $\downarrow$ \\
\midrule
MDN-CK mean-only
& $0.4437\pm0.0095$
& $0.5563\pm0.0095$
& $1.1967\pm0.0065$
& $5.6473\pm0.0820$ \\
MDN-CK full-expected
& $0.9975\pm0.0003$
& $0.0025\pm0.0003$
& $1.1626\pm0.0007$
& $1.1828\pm0.0018$ \\
MDN-CK chance
& $0.9905\pm0.0028$
& $0.0095\pm0.0028$
& $1.1626\pm0.0002$
& $1.2383\pm0.0224$ \\
MDN-CK CVaR
& $0.9984\pm0.0002$
& $0.0016\pm0.0002$
& $1.1673\pm0.0089$
& $1.1799\pm0.0087$ \\
MDN-noCK full-expected
& $0.9975\pm0.0004$
& $0.0025\pm0.0004$
& $1.1623\pm0.0001$
& $1.1821\pm0.0029$ \\
Gaussian full-expected
& $1.0000\pm0.0000$
& $0.0000\pm0.0000$
& $1.4539\pm0.0008$
& $1.4539\pm0.0008$ \\
Branch-JEPA atomic expected
& $0.9399\pm0.0305$
& $0.0601\pm0.0305$
& $1.1617\pm0.0002$
& $1.6429\pm0.2439$ \\
Deterministic point
& $0.4409\pm0.0012$
& $0.5591\pm0.0012$
& $1.1964\pm0.0007$
& $5.6695\pm0.0095$ \\
Oracle expected-cost
& $0.9982\pm0.0002$
& $0.0018\pm0.0002$
& $1.1623\pm0.0001$
& $1.1769\pm0.0017$ \\
\bottomrule
\end{tabular}
}
\end{table}

\paragraph{Robustness to the collision penalty.}
We repeat the same mean-only versus full-distribution planning comparison for collision penalties $\lambda_{\mathrm{col}}\in\{2,4,8,16\}$ without retraining the world model. As shown in \cref{fig:exp2-risk-sweep}, the mean-only collision rate remains approximately $0.56$ throughout, whereas full-distribution planning remains near $0.0025$. Increasing the penalty therefore cannot recover the branch information that has already been discarded by collapsing the predictive law to its mean.

\begin{figure}[H]
\centering
\includegraphics[width=0.45\linewidth]{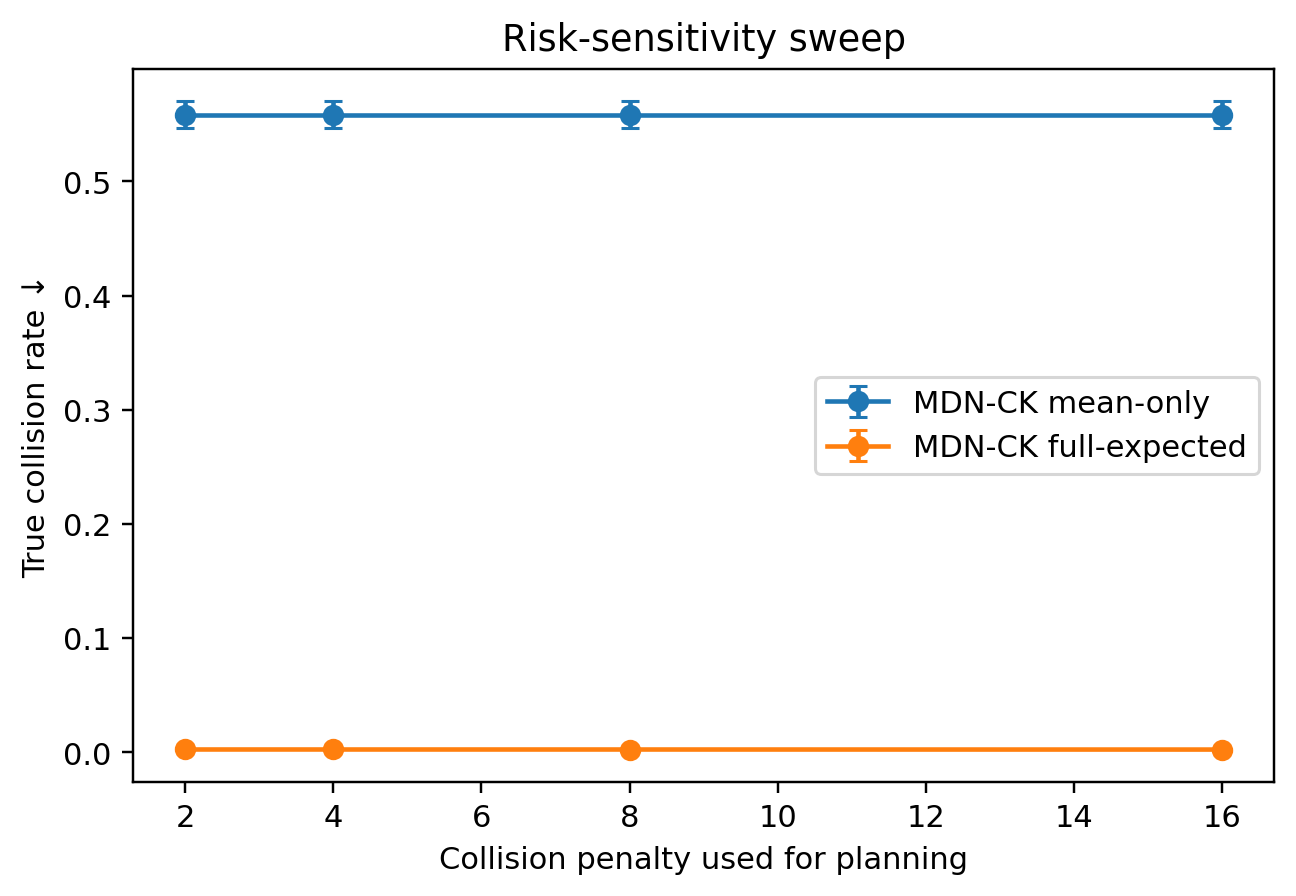}
\caption{Collision-rate robustness to the planning collision penalty. Mean-only and full-distribution planners use the same MDN-VJEPA+CK model. Retaining the full predictive distribution remains beneficial throughout the tested penalty range.}
\label{fig:exp2-risk-sweep}
\end{figure}

\paragraph{Detailed CK comparison.}
\Cref{tab:exp2-ck-full} reports both predictive fidelity and direct-versus-composed discrepancies. CK training uses a differentiable finite-sample V-statistic estimate of energy distance, whereas the reported test discrepancy uses the off-diagonal estimator defined by the evaluation protocol. The resulting direct--composed energy distance decreases from $0.1585\pm0.0097$ to $0.1236\pm0.0119$. The reduction occurs in every seed. At the same time, recursively composed target Energy Score improves from $0.3806\pm0.0119$ to $0.3741\pm0.0136$, while direct-target Energy Score changes only slightly.

\begin{table}[H]
\caption{Detailed distributional composition results on the pre-specified $512$-context CK evaluation subset, using $96$ samples per predictive distribution. Values are mean $\pm$ standard deviation over 5 seeds.}
\label{tab:exp2-ck-full}
\centering
{\scriptsize
\setlength{\tabcolsep}{3.0pt}
\renewcommand{\arraystretch}{0.95}
\begin{tabular}{@{}lcccc@{}}
\toprule
Model
& Dir.--comp. ED $\downarrow$
& Dir.--comp. MMD $\downarrow$
& Direct target ES $\downarrow$
& Composed target ES $\downarrow$ \\
\midrule
MDN-VJEPA no CK
& $0.1585\pm0.0097$
& $0.0980\pm0.0123$
& $0.3052\pm0.0055$
& $0.3806\pm0.0119$ \\
MDN-VJEPA + CK
& $0.1236\pm0.0119$
& $0.1065\pm0.0089$
& $0.3076\pm0.0089$
& $0.3741\pm0.0136$ \\
\bottomrule
\end{tabular}
}
\end{table}

The auxiliary RBF-MMD does not show the same improvement, increasing from $0.0980\pm0.0123$ to $0.1065\pm0.0089$ in \cref{tab:exp2-ck-full}. In this experiment its bandwidth is chosen separately for each distribution pair using that pair's median inter-sample distance. The resulting numerical MMD values therefore use different kernels across the no-CK and CK comparisons and are treated as a secondary diagnostic rather than the primary CK statistic. Experiment~4 provides the dedicated test of multi-horizon composition and is the more appropriate setting for a common fixed-bandwidth MMD comparison.

Overall, the navigation experiment exposes a three-way tradeoff. The unconstrained MDN provides the most accurate direct conditional law; CK sacrifices a small amount of direct calibration to improve recursive consistency; and retaining the resulting complete predictive distribution during planning is substantially more useful than collapsing that same belief to its mean.

\subsection{Experiment 3: stochastic video prediction in representation space}
\label{app:exp3-details}

\paragraph{Controlled stochastic-video generator.}
Experiment~3 is designed specifically to isolate the difference between multimodal marginals and a coherent multimodal trajectory law. Each episode contains $C=4$ observed frames followed by $H=10$ future frames. The observed object has initial vertical coordinate and horizontal speed
\begin{equation*}
Y_0
\sim
\operatorname{Unif}[-0.08,0.08],
\qquad
V
\sim
\operatorname{Unif}[0.9,1.1].
\end{equation*}
The observed context follows a straight pre-branch trajectory. For context frame $c\in\{1,\ldots,C\}$,
\begin{equation*}
x_c^{\mathrm C}
=
-0.55-0.08V(C-c),
\qquad
y_c^{\mathrm C}
=
Y_0.
\end{equation*}

The future contains three semantic branches
\begin{equation*}
b\in
\{\mathrm{down},\mathrm{straight},\mathrm{up}\},
\qquad
s_b\in\{-1,0,+1\},
\end{equation*}
where $s_b$ denotes the corresponding branch sign. Let
\begin{equation*}
r_h=\frac{h}{H},
\qquad
g_h=3r_h^2-2r_h^3,
\qquad
h=1,\ldots,H.
\end{equation*}
The branch mean positions are
\begin{align*}
x_{b,h}
&=
-0.55+0.12Vh,
\\
y_{b,h}
&=
Y_0+0.60\,s_b\,g_h.
\end{align*}
The smoothstep factor $g_h$ makes the three futures initially close and then progressively separates them.

Branch probabilities depend only on the pre-branch episode variables $Y_0$ and $V$, which determine the noiseless observed context, and not on the subsequently sampled future mode. Defining
\begin{equation*}
c=\frac{Y_0}{0.08},
\qquad
v=\frac{V-1}{0.1},
\end{equation*}
the branch probabilities are
\begin{equation*}
\boldsymbol p
=
\operatorname{softmax}
\left(
-0.55c-0.25v, 
0.45, 
0.55c+0.25v
\right).
\end{equation*}
Only after the pre-branch episode variables, and hence the noiseless context, have been determined do we sample the hidden future mode
\begin{equation*}
M^\star
\sim
\Cat(\boldsymbol p).
\end{equation*}
The realized mode is never included in the predictor input.

\paragraph{Frozen representation-space tokens.}
Rather than training an observation encoder in this experiment, each frame position $(x,y)$ is deterministically mapped to the target-independent token
\begin{equation*}
\psi(x,y)
=
\begin{bmatrix}
x,\,
y,\,
xy,\,
x^2,\,
y^2,\,
\sin(\pi x),\,
\cos(\pi x),\,
\sin(\pi y),\,
\cos(\pi y),\,
\sin\!\left(\frac{\pi}{2}(x+y)\right),\,
\cos\!\left(\frac{\pi}{2}(x-y)\right),\,
\sin(2\pi y)
\end{bmatrix}^{\!\top}.
\end{equation*}
Thus $D=12$. Before the additive representation noise described below, the first two coordinates coincide with the object centroid $(x,y)$ and therefore provide a fixed centroid-aligned visualization probe.

For a realized branch $M^\star=b$, the future representation is
\begin{equation*}
Z_{t+h}
=
\psi(x_{b,h},y_{b,h})
+
\epsilon_h,
\end{equation*}
with
\begin{equation*}
\epsilon_h
\sim
\N
\left(
0,
\operatorname{diag}
\left(
0.018^2,\,
0.018^2,\,
0.035^2,\ldots,0.035^2
\right)
\right).
\end{equation*}
The context tokens receive independent isotropic representation noise with standard deviation $0.010$. Pixel-space Gaussian-blob frames are rendered only for optional visualization and never enter training or evaluation of the predictive models.

\paragraph{Data separation and evaluation-only semantics.}
For each of five independent seeds, we generate $6000/1500/1800$ train, validation, and test episodes. Exact context overlap across splits is explicitly checked. The test set is used only for final evaluation.

Semantic branch identities and exact branch prototypes are never used in optimization, checkpoint selection, routing, or inference. They are used only after prediction for permutation-invariant evaluation. Because the true branches are initially close, semantic coherence is scored only over future frames $4$--$10$. The mean nearest-branch separation over these evaluated horizons ranges from approximately $1.07$ to $1.99$ in the $12$-dimensional representation space, substantially larger than the learned marginal support errors reported below.

\paragraph{Generator-oracle reference.}
For evaluation, we additionally report a \emph{generator oracle} constructed from the known synthetic data-generating process. The oracle uses the underlying episode variables and their exact branch probabilities and branch prototypes before context-representation noise is added. It therefore represents the conditional future law of the controlled generator given these underlying variables, rather than the exact Bayes conditional law given only the noisy context tokens supplied to the learned predictors. The generator oracle is used solely as a reference for predictive metrics and is not a realizable predictor available at inference time.

\paragraph{Model and training protocol.}
All learned models use a two-layer SiLU MLP with hidden width $128$ applied to the flattened $C\times D$ context tokens. The deterministic predictor outputs one $H\times D$ trajectory. The Gaussian predictor outputs one mean and diagonal standard deviation per future token.

MDN-VJEPA-I and MDN-VJEPA-II use the same MLP trunk architecture and component-head parameterization, producing
\begin{equation*}
\left\{
\mu_{k,h},
\sigma_{k,h}
\right\}_{k=1,h=1}^{K,H}.
\end{equation*}
They differ in the temporal factorization of the routing distribution. Level I outputs $H\times K$ horizon-specific logits and evaluates
\begin{equation*}
\log p_{\mathrm I}(Z_{t+1:t+H}\mid Z_t)
=
\sum_{h=1}^{H}
\log
\sum_{k=1}^{K}
\pi_{k,h}
p_{k,h}(Z_{t+h}),
\end{equation*}
whereas Level II outputs one $K$-way routing distribution and evaluates
\begin{equation*}
\log p_{\mathrm {II}}(Z_{t+1:t+H}\mid Z_t)
=
\log
\sum_{k=1}^{K}
\pi_k
\prod_{h=1}^{H}
p_{k,h}(Z_{t+h}).
\end{equation*}
This is the controlled distinction tested by Experiment~3: both predictors expose the same horizon-specific Gaussian component parameterization, but Level I factorizes the mixture choice across horizons whereas Level II shares one mixture choice across the complete trajectory.

To avoid giving either model semantic supervision, both MDNs are initialized from the same label-free $K$-means procedure applied only to centered training target trajectories. Centering removes sample-specific offsets and makes initialization depend primarily on trajectory shape rather than absolute location. Neither semantic labels nor test information enter this initialization, and the mixture-routing logits are initialized neutrally.

Training uses AdamW with validation-only early stopping. We include a weak fixed-$K$ usage safeguard against accidental dead components; it acts only when a component's average responsibility falls below $\rho_{\min}=0.08$ and does not force equal routing probabilities. The primary settings are summarized in \cref{tab:exp3-hparams}.

\begin{table}[H]
\caption{Experiment~3 training and evaluation settings.}
\label{tab:exp3-hparams}
\centering
{\scriptsize
\setlength{\tabcolsep}{4pt}
\renewcommand{\arraystretch}{0.95}
\begin{tabular}{@{}ll@{}}
\toprule
Setting & Value \\
\midrule
Seeds & $\{0,1,2,3,4\}$ \\
Train / validation / test episodes & $6000/1500/1800$ \\
Context frames & $C=4$ \\
Future horizon & $H=10$ \\
Representation dimension & $D=12$ \\
Semantic branches / MDN components & $K=3$ \\
Semantic evaluation frames & future frames $4$--$10$ \\
Hidden width & $128$ \\
Maximum epochs & $140$ \\
Early-stopping patience & $20$ \\
Batch size & $256$ \\
Optimizer & AdamW \\
Learning rate / weight decay & $10^{-3}/10^{-4}$ \\
Gradient-norm clipping & $5$ \\
Predictive standard-deviation floor & $5\times10^{-3}$ \\
Usage safeguard $(\lambda_{\mathrm{use}},\rho_{\min})$ & $(5,0.08)$ \\
$K$-means restarts & $10$ \\
Evaluation samples & $64$ \\
Best-of-$N$ budget & $N=64$ \\
Nominal sample-interval coverage & $90\%$ \\
Marginal support tolerance & $0.15$ \\
\bottomrule
\end{tabular}
}
\end{table}

\paragraph{Capacity comparison.}
The parameter counts are shown in \cref{tab:exp3-capacity}. Level I is slightly larger than Level II because it predicts an independent $K$-way routing distribution for every horizon. The large Level-II improvement therefore cannot be explained by additional predictor capacity.

\begin{table}[H]
\caption{Trainable predictor parameter counts in Experiment~3.}
\label{tab:exp3-capacity}
\centering
{\scriptsize
\setlength{\tabcolsep}{6pt}
\renewcommand{\arraystretch}{0.95}
\begin{tabular}{@{}lr@{}}
\toprule
Model & Parameters \\
\midrule
Deterministic JEPA & $38{,}264$ \\
Gaussian VJEPA & $53{,}744$ \\
MDN-VJEPA-I & $119{,}534$ \\
MDN-VJEPA-II & $116{,}051$ \\
\bottomrule
\end{tabular}
}
\end{table}

\paragraph{Semantic-coherence metrics.}
For a predictive trajectory sample $\widetilde Z_{t+1:t+H}$, we assign each evaluated horizon to its nearest semantic branch prototype,
\begin{equation*}
s_h(\widetilde Z)
=
\arg\min_b
\left\|
\widetilde Z_{t+h}
-
\mu_{b,h}^{\star}
\right\|_2.
\end{equation*}
For the evaluated horizon set $\mathcal H_{\mathrm{sem}}$, the semantic switch rate is
\begin{equation*}
R_{\mathrm{switch}}
=
\frac{1}
{N\left(|\mathcal H_{\mathrm{sem}}|-1\right)}
\sum_{n=1}^{N}
\sum_{\substack{h,h'\in\mathcal H_{\mathrm{sem}}\\h'=h+1}}
\mathbf{1}
\left\{
s_{h'}(\widetilde Z^{(n)})
\neq
s_h(\widetilde Z^{(n)})
\right\}.
\end{equation*}
The any-switch rate is the fraction of predictive samples with at least one such change. Coherent branch coverage asks whether, among the fixed $N=64$ samples, each semantic branch appears in at least one trajectory whose assignment remains on that branch throughout the evaluated horizon. Marginal branch coverage ignores temporal coupling and instead asks whether all branches are represented at each individual horizon.

The resulting coherence statistics are reported in the main text in \cref{tab:exp3-coherence}. They distinguish two different questions:
\begin{equation*}
\textit{Are all futures represented?}
\qquad\textit{vs.}\qquad
\textit{Are they represented as coherent futures?}
\end{equation*}
Level I answers the first question correctly but fails the second: it attains full marginal branch coverage while exhibiting frequent semantic switching. Level II preserves the same marginal coverage while recovering coherent complete trajectories.

\paragraph{Additional probabilistic and component diagnostics.}
Detailed diagnostics are given in \cref{tab:exp3-diagnostics}. The marginal support errors of Levels I and II are nearly identical, $0.0097\pm0.0004$ and $0.0096\pm0.0006$, respectively, while both retain all three branches at every evaluated horizon. Their effective component counts are also similar and close to the oracle value. Across individual seeds, the minimum $K_{\mathrm{eff}}$ is $2.787$ for Level I and $2.841$ for Level II, providing no evidence of component collapse.

The Level-II component-mean diagnostic independently assigns every horizon-specific component mean to its nearest semantic branch after prediction. Its semantic switch rate and any-switch rate are both exactly zero for all 5 seeds. This is stronger than simply observing a fixed numerical component index: it verifies that each learned component-mean sequence follows one semantic future throughout the evaluated block.

\begin{table}[H]
\caption{Additional probabilistic and mixture diagnostics for Experiment~3. Values are mean $\pm$ standard deviation over 5 seeds. The $90\%$ column is a finite-$N=64$ predictive sample-interval diagnostic rather than an analytic interval.}
\label{tab:exp3-diagnostics}
\centering
{\scriptsize
\setlength{\tabcolsep}{2.8pt}
\renewcommand{\arraystretch}{0.95}
\begin{tabular}{@{}lccccc@{}}
\toprule
Model
& $90\%$ cov.
& Branch Brier $\downarrow$
& Support err. $\downarrow$
& Marginal cov. $\uparrow$
& $K_{\mathrm{eff}}$ \\
\midrule
Gaussian VJEPA
& $0.8805\pm0.0041$
& --
& --
& --
& -- \\
MDN-VJEPA-I
& $0.8762\pm0.0019$
& $0.6398\pm0.0036$
& $0.0097\pm0.0004$
& $1.0000\pm0.0000$
& $2.8515\pm0.0373$ \\
MDN-VJEPA-II
& $0.8758\pm0.0015$
& $0.6400\pm0.0039$
& $0.0096\pm0.0006$
& $1.0000\pm0.0000$
& $2.8637\pm0.0204$ \\
Generator oracle
& $0.8724\pm0.0014$
& $0.6389\pm0.0032$
& $0.0000\pm0.0000$
& $1.0000\pm0.0000$
& $2.8531\pm0.0013$ \\
\bottomrule
\end{tabular}
}
\end{table}

The nominal $90\%$ sample interval yields empirical coverage around $0.87$--$0.88$ for all probabilistic models, including the oracle. Because the interval is constructed from only $64$ predictive samples using empirical quantiles, we interpret this metric relative to the oracle under the same finite-sample protocol rather than as an analytic calibration interval. The close agreement among Gaussian VJEPA, both MDN variants, and the oracle indicates that the dramatic Level-I/Level-II difference is not explained by a gross difference in marginal interval width.

\paragraph{Predictive-score visualization.}
The aggregate Energy Score comparison from \cref{tab:exp3-predictive} is visualized in \cref{fig:exp3-energy-score}. Level II is close to the generator oracle and substantially improves over both the independent-mixture Level I model and the unimodal Gaussian reference.

\begin{figure}[H]
\centering
\includegraphics[width=0.50\linewidth]{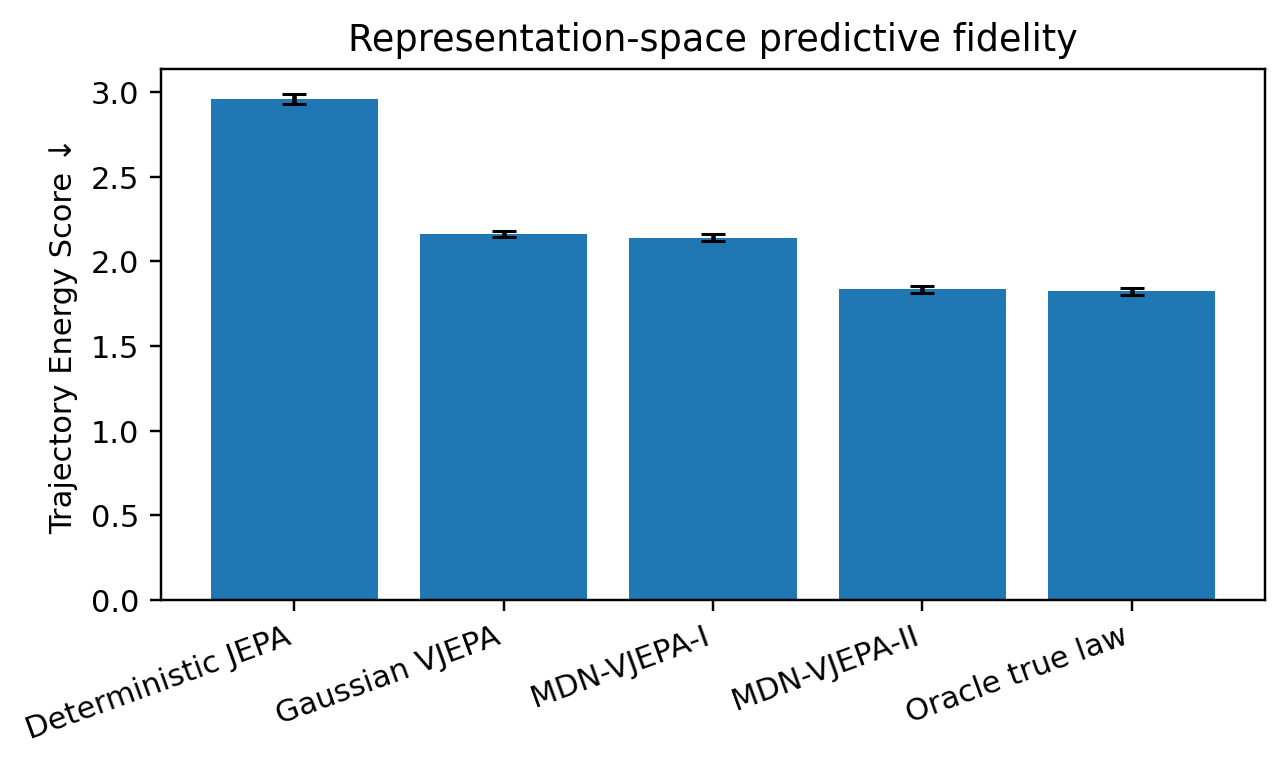}
\caption{Trajectory Energy Score on the stochastic-video representation benchmark. Level II approaches the oracle trajectory law, whereas Level I remains close to the unimodal Gaussian despite accurately modeling the individual horizon-wise branch supports. Error bars show one standard deviation over 5 seeds.}
\label{fig:exp3-energy-score}
\end{figure}

\paragraph{Learned branch uncertainty.}
\Cref{fig:exp3-uncertainty-tubes} provides a complementary visualization of the Level-II components for the pre-specified representative example. Components are globally matched to semantic branches only for visualization. Solid curves are learned component means, dashed curves are the corresponding true branch means, and shaded regions show the learned $90\%$ within-component envelope in the frozen centroid coordinate. The learned component means closely follow all three semantic futures while retaining narrow local uncertainty around each branch.

\begin{figure}[H]
\centering
\includegraphics[width=0.50\linewidth]{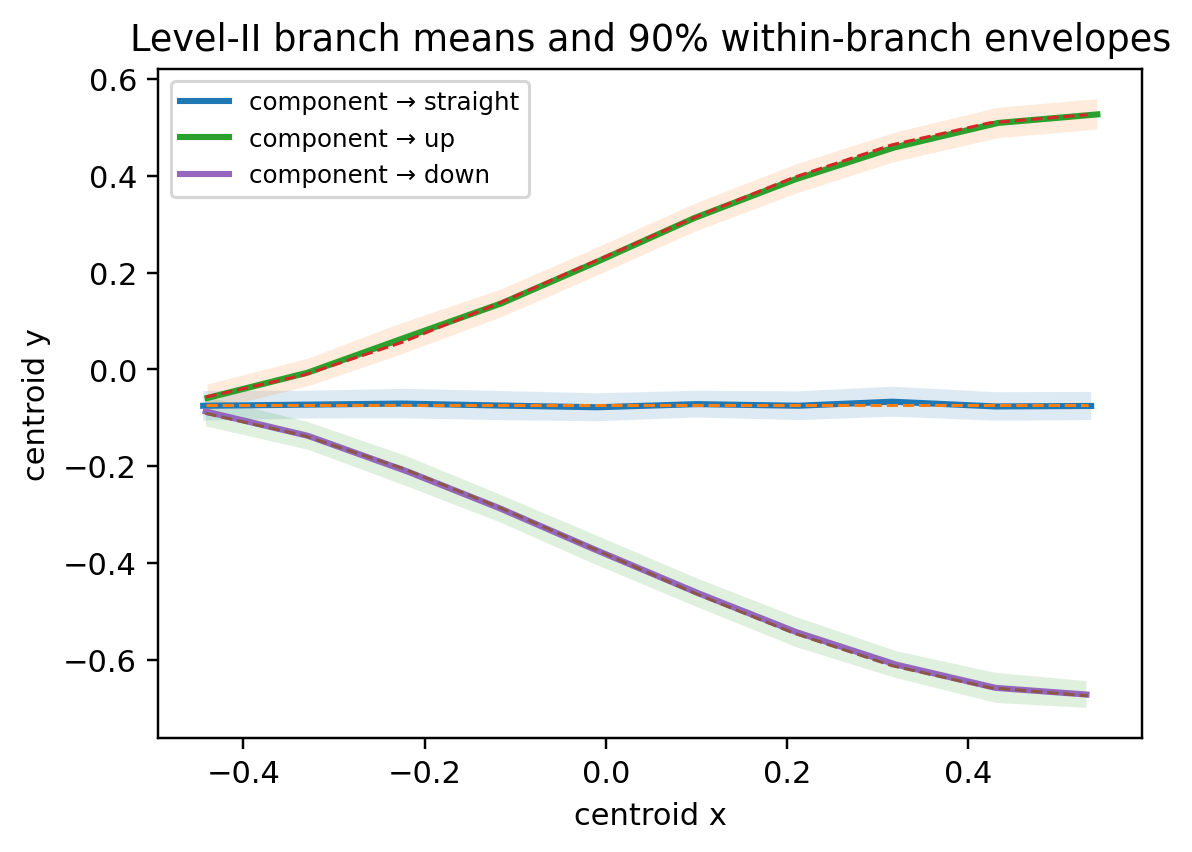}
\caption{Level-II component means and within-branch uncertainty for the pre-specified representative example. Each learned component is matched to a semantic branch using an evaluation-only global assignment. Solid curves denote learned component means, dashed curves denote the corresponding true branch means, and shaded regions show the learned $90\%$ within-component envelope in centroid $y$. The result illustrates the intended Level-II semantics: one component describes one coherent branch together with local uncertainty around it.}
\label{fig:exp3-uncertainty-tubes}
\end{figure}

Overall, Experiment~3 isolates the structural distinction between multimodal marginal accuracy and multimodal trajectory coherence. MDN-VJEPA-I learns all three branches, their probabilities, and their local per-horizon support, yet its independent mixture factorization assigns substantial probability to cross-horizon combinations that do not correspond to any realizable semantic future. MDN-VJEPA-II changes the temporal coupling of those modes and recovers a joint predictive distribution that is both semantically coherent and close to the oracle trajectory law.

\subsection{Experiment 4: direct versus composed multi-horizon prediction}
\label{app:exp4-details}

\paragraph{Controlled action-conditioned Markov process.}
Experiment~4 uses a three-dimensional continuous state
\begin{equation*}
Z_t=(x_t,y_t,r_t),
\end{equation*}
where $r_t$ is a continuous turn-memory coordinate. The initial state is sampled as
\begin{equation*}
x_0\sim\operatorname{Unif}[-0.18,0.18],
\qquad
y_0\sim\operatorname{Unif}[-0.14,0.14],
\qquad
r_0\sim\operatorname{Unif}[-0.025,0.025].
\end{equation*}

Each episode contains a known action sequence $u_{0:H-1}$. We generate smooth heterogeneous controls from
\begin{equation*}
u_j
=
b
+
a_1
\sin\!\left(
\frac{2\pi j}{H-1}+\varphi_1
\right)
+
a_2
\sin\!\left(
\frac{4\pi j}{H-1}+\varphi_2
\right),
\end{equation*}
where
\begin{equation*}
b\sim\operatorname{Unif}[-0.60,0.60],
\qquad
a_1\sim\operatorname{Unif}[0,0.20],
\qquad
a_2\sim\operatorname{Unif}[0,0.10],
\end{equation*}
and $\varphi_1,\varphi_2\sim\operatorname{Unif}[0,2\pi]$. Actions are clipped to $[-0.85,0.85]$.

At every transition, the upper mode $m_t=+1$ is sampled according to the state-dependent probability
\begin{equation*}
\Pr(m_t=+1\mid Z_t,u_t)
=
\operatorname{sigmoid}
\Big(
9r_t
+
1.15u_t
+
1.35y_t
-
0.40x_t
+
0.30\sin\!\big(\pi(x_t+0.20)\big)
-
0.15\sin(2\pi y_t)
\Big).
\end{equation*}
The lower mode corresponds to $m_t=-1$. Importantly, $m_t$ is resampled conditionally on the currently modeled state at every step; there is no persistent hidden categorical variable outside $Z_t$.

Let
\begin{equation*}
\widetilde u_t=\tanh(1.4u_t)
\end{equation*}
and
\begin{equation*}
d_t
=
0.050
+
0.022\sin\!\big(1.7\pi(x_t+0.28)\big)
+
0.015\cos(1.3\pi u_t)
+
0.010\tanh\!\left(2y_t r_t\right).
\end{equation*}
The conditional transition mean is
\begin{align*}
\mu_{x,t+1}
={}&
x_t
+
0.085
+
0.032\widetilde u_t
+
0.018\sin\!\big(\pi(y_t+0.45r_t)\big)
+
0.006\sin(2\pi x_t),
\\
\mu_{y,t+1}
={}&
0.915y_t
+
0.045\widetilde u_t
+
m_td_t
+
0.024\tanh(2.5y_t)
+
0.012r_t\sin\!\big(\pi(x_t+0.1)\big),
\\
\mu_{r,t+1}
={}&
0.84r_t
+
0.16m_t
+
0.020\tanh(1.8u_t)
+
0.010\sin(\pi y_t).
\end{align*}
The realized next state is
\begin{equation*}
Z_{t+1}
=
\mu_{t+1}
+
\epsilon_t,
\qquad
\epsilon_t
\sim
\N
\left(
0,
\operatorname{diag}
\left(
0.007^2,\,
0.015^2,\,
0.009^2
\right)
\right).
\end{equation*}

The $r_t$ dynamics make recently selected directions persistent but not deterministic. Consequently, recursive transition errors can accumulate nontrivially while the complete simulator remains Markovian in the modeled state.

\paragraph{Data separation and matched comparison.}
For each of 5 seeds we independently generate $7000/1800/2200$ training, validation, and test episodes. Exact initial-state/action-sequence overlap between splits is checked explicitly. The test set is used only for final evaluation.

Both Level-II and Level-III models contain a coherent direct trajectory MDN and a one-step MDN, each with $K=2$ diagonal-Gaussian components. Both models are first trained for a common $25$-epoch warm-start using direct trajectory likelihood, one-step likelihood, and the same weak fixed-$K$ component-utilization safeguard. The resulting checkpoint is cloned exactly before fine-tuning. Both arms subsequently receive the same deterministic minibatch order and optimization budget. Level II uses $\lambda_{\mathrm{CK}}=0$, whereas Level III adds the CK discrepancy.

For CK only, samples from the direct predictive distribution are treated as a stop-gradient anchor. The Experiment~4 implementation uses the horizon-normalized sample discrepancy
\begin{equation*}
\widehat{\cL}_{\mathrm{CK}}^{(h)}
=
\frac{
\widehat{\mathcal E}_U
\left(
\operatorname{sg}\!\left[
S_\phi^{\mathrm{dir},h}
\right],
S_\phi^{\mathrm{comp},h}
\right)
}{
s_h+\epsilon
},
\qquad
h\in\mathcal H_{\mathrm{CK}},
\end{equation*}
where $S_\phi^{\mathrm{dir},h}$ and $S_\phi^{\mathrm{comp},h}$ denote direct and recursively composed samples and $\epsilon=10^{-8}$. During minibatch optimization, the implementation cycles through
$
\mathcal H_{\mathrm{CK}}=\{2,4,8,12,16\}
$
and evaluates one horizon per minibatch, yielding an equal-weight stochastic realization of the multi-horizon CK objective.

\paragraph{Unbiased sample discrepancies.}
The initial version of this experiment used finite-sample V-statistics whose diagonal self-pairs introduce a positive finite-sample bias. The final experiment instead uses off-diagonal U-statistic estimators. Given direct samples $\{X_i\}_{i=1}^{M}$ and composed samples $\{Y_j\}_{j=1}^{N}$,
\begin{equation*}
\widehat{\mathcal E}_U(X,Y)
=
\frac{2}{MN}
\sum_{i=1}^{M}
\sum_{j=1}^{N}
\|X_i-Y_j\|
-
\frac{1}{M(M-1)}
\sum_{i\neq i'}
\|X_i-X_{i'}\|
-
\frac{1}{N(N-1)}
\sum_{j\neq j'}
\|Y_j-Y_{j'}\|.
\end{equation*}
The estimator is not clamped during training or test evaluation and may therefore be slightly negative at finite sample size.

The complementary RBF-MMD diagnostic uses the corresponding unbiased estimator
\begin{equation*}
\widehat{\operatorname{MMD}}_U^2
=
\frac{1}{M(M-1)}
\sum_{i\neq i'}
k(X_i,X_{i'})
+
\frac{1}{N(N-1)}
\sum_{j\neq j'}
k(Y_j,Y_{j'})
-
\frac{2}{MN}
\sum_{i,j}
k(X_i,Y_j).
\end{equation*}
For checkpoint selection only, a negative finite-sample validation CK estimate is truncated at zero so that random negative U-statistic fluctuations are not rewarded.

\paragraph{Horizon normalization and fixed calibration quantities.}
Raw energy distance changes scale as the true future law becomes more or less dispersed. We therefore estimate a true-law dispersion
\begin{equation*}
s_h
=
\E
\left[
\|Z_{t+h}-Z'_{t+h}\|
\right],
\qquad
Z_{t+h},Z'_{t+h}\sim p_h^\star,
\end{equation*}
using an independent held-out simulator calibration set. The normalized discrepancy is
\begin{equation*}
\widetilde{\mathcal E}_h
=
\frac{
\widehat{\mathcal E}_{U,h}
}{
s_h
}.
\end{equation*}
The primary scalar summary is the trapezoidal AUC of $\widetilde{\mathcal E}_h$ over $h\in\{2,4,8,12,16\}$, divided by the corresponding horizon interval.

The RBF bandwidth and $s_h$ are selected once from the same held-out calibration distribution and then frozen for all seeds, both model arms, and all test comparisons. Their values are reported in \cref{tab:exp4-calibration}.

\begin{table}[H]
\caption{Held-out calibration quantities fixed before model test evaluation. The same MMD bandwidth and energy-distance normalization scale are used for both arms and all seeds at each horizon.}
\label{tab:exp4-calibration}
\centering
{\scriptsize
\setlength{\tabcolsep}{6pt}
\renewcommand{\arraystretch}{0.95}
\begin{tabular}{@{}rcc@{}}
\toprule
$h$ & RBF bandwidth $\sigma_h$ & ED scale $s_h$ \\
\midrule
1  & $0.044688$ & $0.178499$ \\
2  & $0.313595$ & $0.309250$ \\
4  & $0.316808$ & $0.522428$ \\
8  & $0.271706$ & $0.799783$ \\
12 & $0.207838$ & $0.964058$ \\
16 & $0.177795$ & $1.078036$ \\
\bottomrule
\end{tabular}
}
\end{table}

\paragraph{Training and evaluation settings.}
The primary implementation settings are summarized in \cref{tab:exp4-hparams}. CK training uses $32$ samples for each of $24$ selected contexts, compared with $64$ samples for validation and $128$ samples for final evaluation. The CK coefficient is increased linearly over the first $30$ fine-tuning epochs. Checkpoint selection is validation-only, and no seed is retried, replaced, or selected according to test results.

\begin{table}[H]
\caption{Experiment~4 training and evaluation settings.}
\label{tab:exp4-hparams}
\centering
{\scriptsize
\setlength{\tabcolsep}{4pt}
\renewcommand{\arraystretch}{0.95}
\begin{tabular}{@{}ll@{}}
\toprule
Setting & Value \\
\midrule
Seeds & $\{0,1,2,3,4\}$ \\
Train / validation / test episodes & $7000/1800/2200$ \\
State dimension / components & $D=3$, $K=2$ \\
Prediction horizon & $H=16$ \\
Evaluation horizons & $\{1,2,4,8,12,16\}$ \\
CK horizons & $\{2,4,8,12,16\}$ \\
Hidden width & $128$ \\
Common warm-start & $25$ epochs \\
Maximum fine-tuning & $140$ epochs \\
Early-stopping patience & $30$ \\
Batch size & $256$ \\
Optimizer & AdamW \\
Learning rate / weight decay & $10^{-3}/10^{-4}$ \\
Gradient-norm clipping & $5$ \\
Predictive standard-deviation floor & $0.004$ \\
Usage safeguard $(\lambda_{\mathrm{use}},\rho_{\min})$ & $(5,0.05)$ \\
Final CK coefficient / ramp & $4.0$ / $30$ epochs \\
CK train contexts / samples & $24/32$ \\
CK validation contexts / samples & $128/64$ \\
Evaluation contexts / samples & $512/128$ \\
True-law samples per population & $128$ \\
Calibration contexts / samples & $256/64$ \\
CK estimator & unbiased energy-distance U-statistic \\
MMD estimator & unbiased fixed-bandwidth RBF MMD$^2$ U-statistic \\
CK gradient & stop-gradient direct anchor $\rightarrow$ recursive transition \\
\bottomrule
\end{tabular}
}
\end{table}

\paragraph{Capacity.}
Level II and Level III use exactly the same trainable architecture; CK introduces no additional model parameters. The direct trajectory head contains $44{,}098$ parameters and the one-step transition contains $18{,}958$, for a total of $63{,}056$ parameters in both arms.

\begin{table}[H]
\caption{Trainable parameter counts in Experiment~4. The Level-II and Level-III arms use exactly the same architecture; CK introduces no additional trainable parameters.}
\label{tab:exp4-capacity}
\centering
{\scriptsize
\setlength{\tabcolsep}{6pt}
\renewcommand{\arraystretch}{0.95}
\begin{tabular}{@{}lrrr@{}}
\toprule
Model & Direct head & One-step head & Total \\
\midrule
MDN-VJEPA-II  & 44,098 & 18,958 & 63,056 \\
MDN-VJEPA-III & 44,098 & 18,958 & 63,056 \\
\bottomrule
\end{tabular}
}
\end{table}

\paragraph{Horizon-wise consistency.}
\Cref{tab:exp4-horizon} gives the detailed direct--composed discrepancies underlying the AUC result in the main text. CK improves both raw and normalized energy distance at $h=2,4,8,$ and $12$. At $h=16$, the difference is negligible and slightly favors Level II. The main conclusion therefore comes from the complete horizon range rather than from selecting the terminal point.

\begin{table}[H]
\caption{Horizon-wise direct--composed discrepancies in Experiment~4. Energy distance and MMD use unbiased U-statistic estimators. MMD uses one held-out fixed bandwidth per horizon. Lower is better.}
\label{tab:exp4-horizon}
\centering
{\scriptsize
\setlength{\tabcolsep}{2.0pt}
\renewcommand{\arraystretch}{0.95}
\begin{tabular}{@{}rcccccc@{}}
\toprule
& \multicolumn{2}{c}{Norm. ED $\downarrow$}
& \multicolumn{2}{c}{Raw ED $\downarrow$}
& \multicolumn{2}{c}{MMD$^2$ $\downarrow$} \\
\cmidrule(lr){2-3}
\cmidrule(lr){4-5}
\cmidrule(lr){6-7}
$h$ & Level II & Level III & Level II & Level III & Level II & Level III \\
\midrule
2
& $0.06386\pm0.00383$
& $0.05503\pm0.00269$
& $0.01975\pm0.00119$
& $0.01702\pm0.00083$
& $0.00788\pm0.00155$
& $0.00543\pm0.00087$ \\
4
& $0.03564\pm0.00343$
& $0.02834\pm0.00206$
& $0.01862\pm0.00179$
& $0.01481\pm0.00107$
& $0.01002\pm0.00207$
& $0.00871\pm0.00112$ \\
8
& $0.01621\pm0.00206$
& $0.01375\pm0.00203$
& $0.01296\pm0.00165$
& $0.01100\pm0.00162$
& $0.01036\pm0.00215$
& $0.00872\pm0.00107$ \\
12
& $0.00903\pm0.00240$
& $0.00777\pm0.00104$
& $0.00871\pm0.00231$
& $0.00749\pm0.00100$
& $0.00958\pm0.00278$
& $0.00837\pm0.00113$ \\
16
& $0.00694\pm0.00206$
& $0.00707\pm0.00231$
& $0.00748\pm0.00222$
& $0.00762\pm0.00250$
& $0.01061\pm0.00200$
& $0.01067\pm0.00183$ \\
\bottomrule
\end{tabular}
}
\end{table}

The corresponding raw energy-distance curves are shown in \cref{fig:exp4-raw-ck}. Although the discrepancy is not monotone in horizon, the Level-III curve is systematically lower over most of the evaluated interval.

\begin{figure}[H]
\centering
\includegraphics[width=0.48\linewidth]{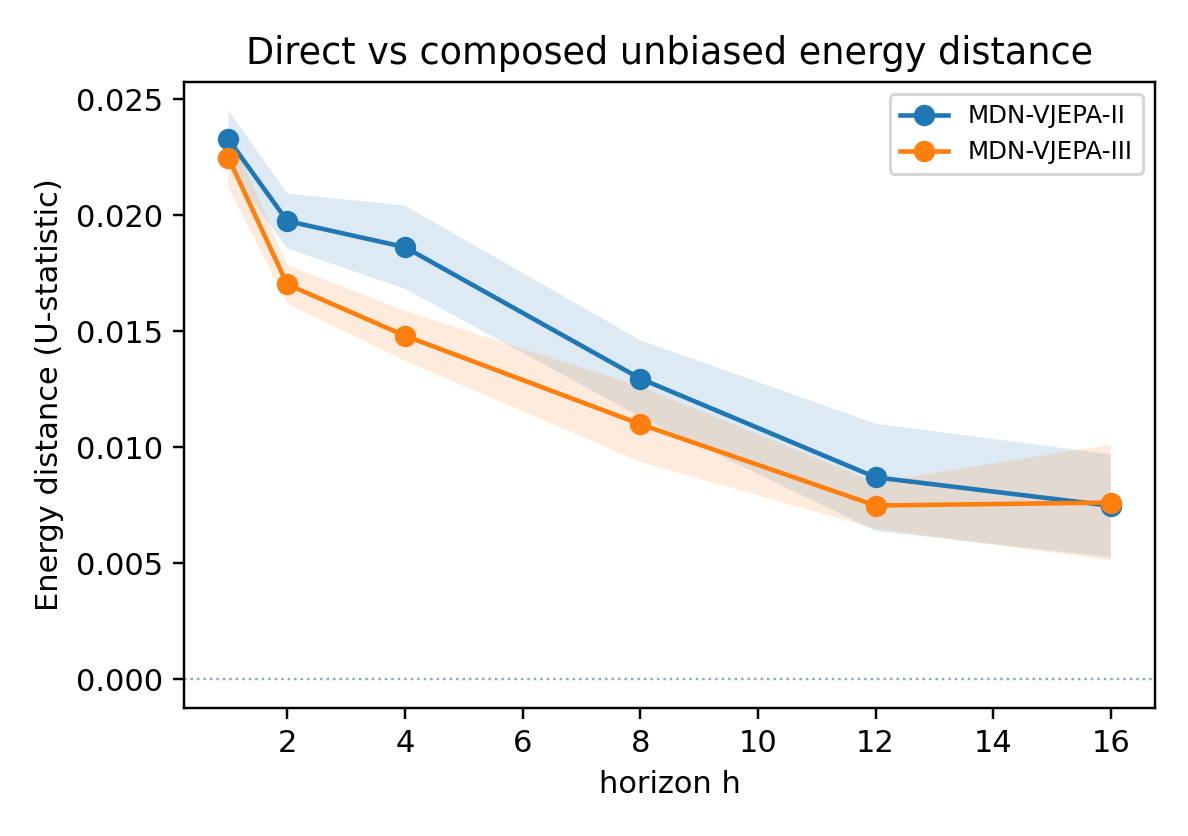}
\caption{Raw unbiased direct-versus-composed energy distance across horizons. Level III reduces discrepancy through $h=12$ and is comparable to Level II at the terminal horizon.}
\label{fig:exp4-raw-ck}
\end{figure}

\paragraph{Complementary MMD diagnostic.}
The fixed-bandwidth unbiased MMD$^2$-AUC decreases on average from $0.00992\pm0.00206$ to $0.00866\pm0.00111$, a reduction of approximately $12.7\%$. This secondary statistic is less stable across seeds than energy distance, however: the MMD$^2$-AUC improves in 2 of the 5 seeds. We therefore retain energy distance as the primary evidence for the CK claim and interpret MMD only as a complementary diagnostic.

\begin{figure}[H]
\centering
\includegraphics[width=0.48\linewidth]{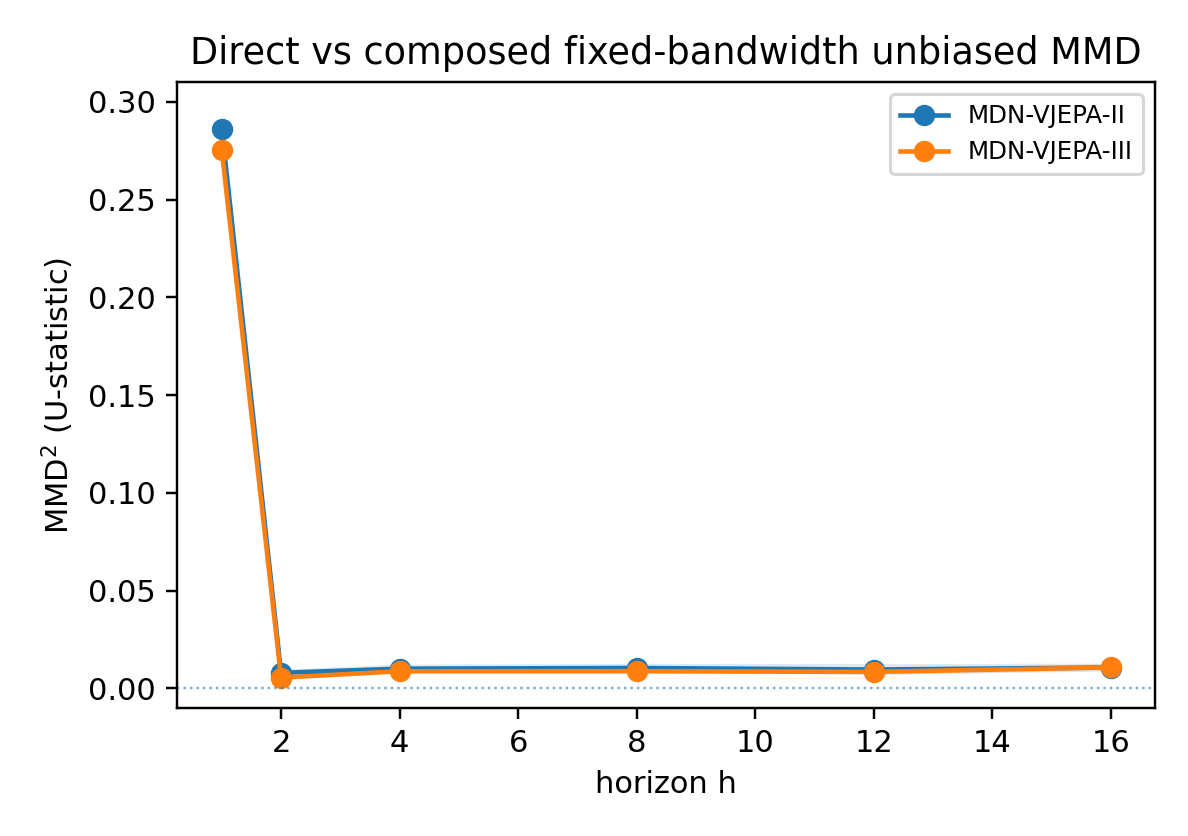}
\caption{Direct-versus-composed unbiased RBF-MMD$^2$ using one held-out fixed bandwidth per horizon. The mean Level-III curve is generally lower at intermediate horizons, but the seed-wise MMD$^2$-AUC improvement is less consistent than the primary energy-distance result.}
\label{fig:exp4-mmd}
\end{figure}

\paragraph{Target fidelity.}
\Cref{tab:exp4-fidelity} reports the terminal target-fidelity diagnostics. The direct path is essentially unaffected by CK: direct-to-true energy distance changes from $0.00603\pm0.00205$ to $0.00544\pm0.00135$, and direct Energy Score changes from $0.54507\pm0.01003$ to $0.54442\pm0.01066$. The recursively composed path incurs a small tradeoff, with composed-to-true energy distance increasing from $0.00407\pm0.00136$ to $0.00644\pm0.00157$. Its Energy Score nevertheless changes only from $0.54427\pm0.00937$ to $0.54533\pm0.01105$, with both values close to the oracle score $0.54187\pm0.00864$.

\begin{table}[H]
\caption{Endpoint target-fidelity and branch diagnostics at $h=16$. Values are mean $\pm$ standard deviation over 5 seeds. Energy distances use unbiased U-statistics. The direct mode-weight diagnostic compares the predictive weight of the component with the larger mean $r$ coordinate against the simulator estimate of $\Pr(r_{t+h}>0)$; the composed event diagnostic estimates $\Pr(r_{t+h}>0)$ directly from recursive samples.}
\label{tab:exp4-fidelity}
\centering
{\tiny
\setlength{\tabcolsep}{2.0pt}
\renewcommand{\arraystretch}{0.95}
\begin{tabular}{@{}lcccccc@{}}
\toprule
Model
& Direct--true ED $\downarrow$
& Composed--true ED $\downarrow$
& Direct ES $\downarrow$
& Composed ES $\downarrow$
& Direct mode-wt.\ MAE $\downarrow$
& Composed event MAE $\downarrow$ \\
\midrule
MDN-VJEPA-II
& $0.00603\pm0.00205$
& $\mathbf{0.00407\pm0.00136}$
& $0.54507\pm0.01003$
& $\mathbf{0.54427\pm0.00937}$
& $0.02868\pm0.00189$
& $\mathbf{0.04268\pm0.00330}$ \\
\textbf{MDN-VJEPA-III}
& $\mathbf{0.00544\pm0.00135}$
& $0.00644\pm0.00157$
& $\mathbf{0.54442\pm0.01066}$
& $0.54533\pm0.01105$
& $\mathbf{0.02774\pm0.00081}$
& $0.04454\pm0.00229$ \\
Oracle MC reference
& $0.00022\pm0.00087$
& $0.00022\pm0.00087$
& $0.54187\pm0.00864$
& $0.54187\pm0.00864$
& $0$
& $0$ \\
\bottomrule
\end{tabular}
}
\end{table}

The complete horizon-wise target-energy-distance comparison is shown in \cref{fig:exp4-target-ed}. The recursively composed Level-II predictor is particularly accurate in this controlled process. Anchoring it toward the separately supervised direct distribution therefore produces the small long-horizon fidelity cost visible above, while reducing direct--composed disagreement over the complete multi-horizon interval.

\begin{figure}[H]
\centering
\includegraphics[width=0.52\linewidth]{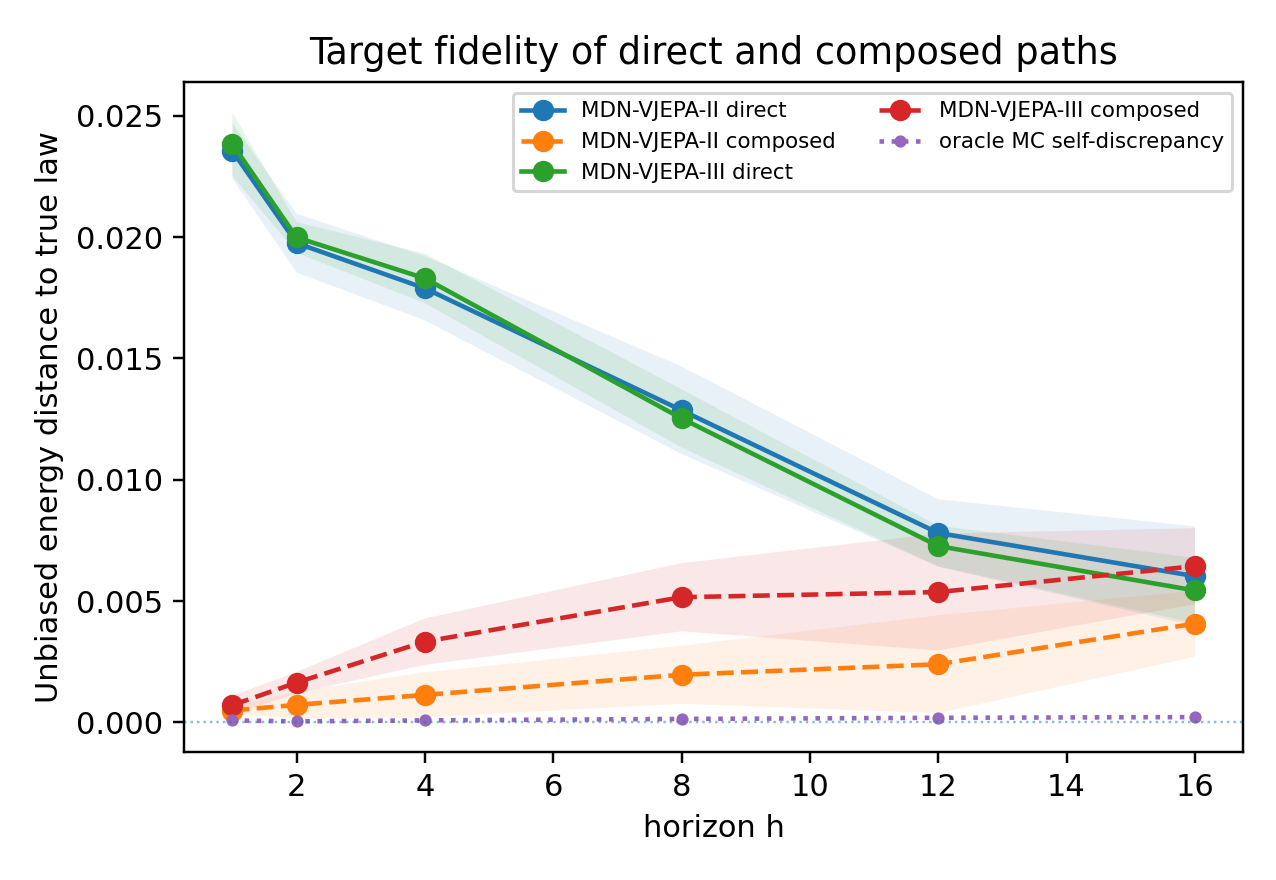}
\caption{Target fidelity of the direct and recursively composed predictive paths. Both remain close to the true conditional law. CK preserves direct fidelity while introducing a modest long-horizon composed-path tradeoff. The dotted reference shows the unbiased Monte Carlo self-discrepancy between two independent simulator populations.}
\label{fig:exp4-target-ed}
\end{figure}

The same conclusion is visible under the proper Energy Score in \cref{fig:exp4-composed-es}. Differences between the two learned models remain small relative to their common gap to the realized-target variability, and both remain close to the simulator oracle.

\begin{figure}[H]
\centering
\includegraphics[width=0.48\linewidth]{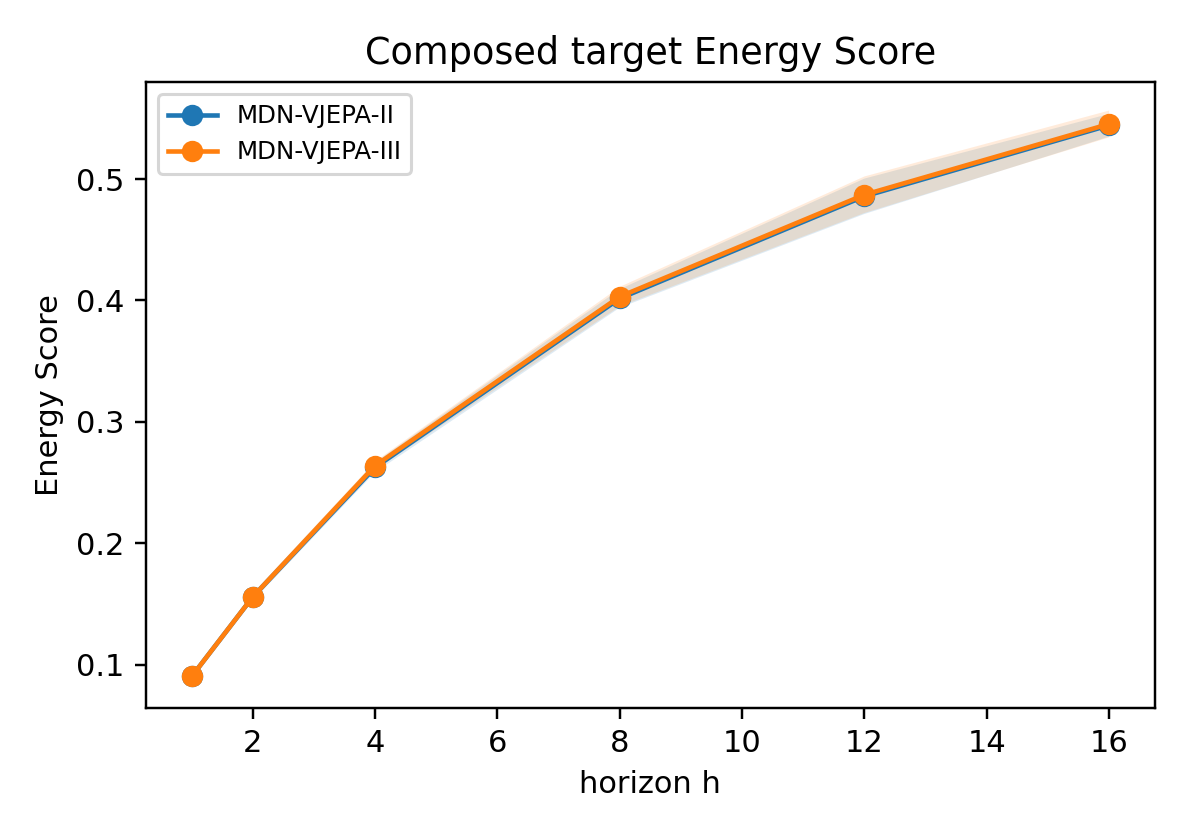}
\caption{Energy Score of the recursively composed prediction across horizons. Level-II and Level-III scores remain close throughout the prediction range and near the corresponding simulator reference, showing that the consistency improvement is not obtained through a large deterioration in sample-based predictive quality.}
\label{fig:exp4-composed-es}
\end{figure}

\paragraph{Mode and event-probability fidelity and component utilization.}
The consistency gain is not caused by eliminating a predictive mode. At $h=16$, the direct upper-mode-weight MAE is $0.02868\pm0.00189$ without CK and $0.02774\pm0.00081$ with CK, while the recursively composed upper-regime event-probability MAE is $0.04268\pm0.00330$ and $0.04454\pm0.00229$, respectively. Their horizon-wise behavior is shown in \cref{fig:exp4-branch-mae}.

\begin{figure}[H]
\centering
\includegraphics[width=0.50\linewidth]{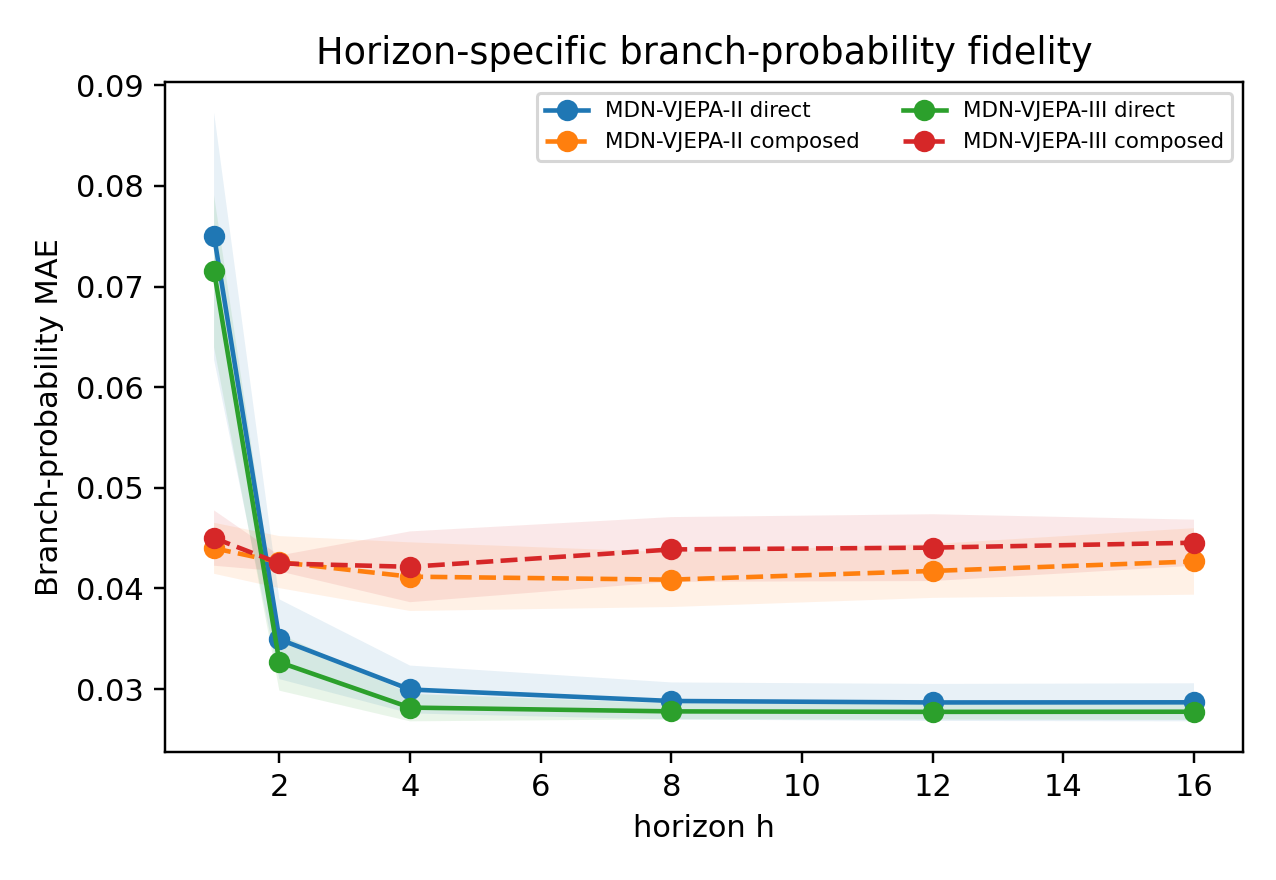}
\caption{Horizon-specific mode/event probability diagnostics. For the direct predictor, the diagnostic compares the weight of the component with the larger mean $r$ coordinate against the simulator estimate of $\Pr(r_{t+h}>0)$. For the recursive predictor, the corresponding event probability is estimated directly from composed samples. Both diagnostics remain similar with and without CK. Ground-truth horizon-specific event probabilities are estimated from independent exact-simulator rollout populations.}
\label{fig:exp4-branch-mae}
\end{figure}

The mean effective component counts are $1.8622\pm0.0192$ and $1.8681\pm0.0124$ for Levels II and III, respectively. The mean minimum average posterior responsibility is $0.4456\pm0.0108$ in both arms. Average predictive component weights are also nearly unchanged, with minimum mean weights $0.4466\pm0.0107$ and $0.4470\pm0.0076$, respectively. These diagnostics provide no evidence that CK obtains agreement through component collapse or severe routing distortion.

\paragraph{Representative endpoint distributions.}
\Cref{fig:exp4-endpoints} visualizes the horizon-$16$ predictive distributions for the pre-specified representative seed. Both models retain the multimodal endpoint geometry. The figure also illustrates why consistency and target fidelity should be reported separately: direct and composed distributions can move closer to each other without either becoming a point prediction or losing the underlying branch structure.

\begin{figure}[H]
\centering
\includegraphics[width=0.92\linewidth]{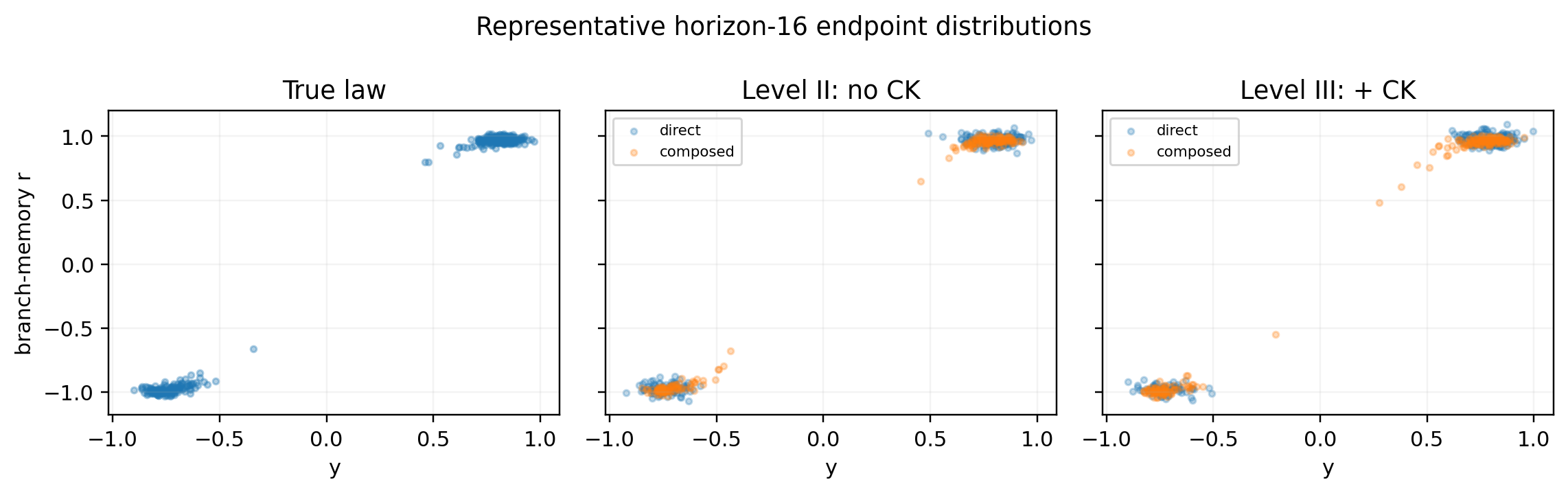}
\caption{Representative horizon-$16$ endpoint distributions for a pre-specified visualization seed. Left: samples from the exact simulator law. Middle: Level-II direct and recursively composed samples. Right: Level-III direct and composed samples after CK regularization. Both learned models retain the separated multimodal endpoint structure; CK changes the agreement between computational paths rather than collapsing them to a single mode.}
\label{fig:exp4-endpoints}
\end{figure}

Overall, Experiment~4 supports the Level-III interpretation in a deliberately qualified form. Distributional CK reduces integrated direct--composed disagreement over the tested multi-horizon range, with improvements in normalized and raw energy-distance AUC in all 5 seeds. The gain is obtained without component collapse and with essentially unchanged direct likelihood and Energy Score. The recursively composed predictor exhibits a small target-fidelity tradeoff at the longest horizons, reinforcing the interpretation of CK as a structural consistency regularizer rather than a universally improving predictive objective.

\section*{Disclaimer}

This work was developed with assistance from ChatGPT \citep{openai2026gpt} for aspects of idea development, formulation, writing and editing, experimental design and implementation. The central research direction, scientific objectives, and key methodological decisions were originated by the author.
The author estimates the overall contribution to the development of the work to be approximately $70\%{:}30\%$ between the author and ChatGPT. This estimate is necessarily subjective and is intended only to provide transparency about the extent of AI assistance. The work reflects a human--AI collaborative research process in which the human researcher provided the originating research ideas, scientific direction, methodological judgement, assessment, critical examination, and final decisions, while the AI system supported iterative formulation and execution.

In particular, AI-generated suggestions were not accepted uncritically. Mathematical derivations, technical claims, experimental designs, implementations, numerical results, interpretations, citations, and the final manuscript were examined, revised, and verified by the author. Experimental results reported in the paper were obtained from human implemented experiments rather than generated or inferred by the AI system. The author takes full responsibility for the contents and conclusions of the final work.

At the time of writing, the author does not consider the research process described above to be one that an AI system could reliably carry out independently without substantial human direction, scrutiny, correction, and refinement. Nevertheless, despite these verification efforts, errors or inaccuracies may remain, and readers are encouraged to assess the technical claims and empirical results critically.

\end{document}